\renewcommand{\P}{\mathbb{P}}
\newcommand{\E}{\mathbb{E}}
\newcommand{\cN}{\mathcal{N}}
\newcommand{\Z}{\mathbb{Z}}
\newcommand{\R}{\mathbb{R}}
\renewcommand{\S}{\mathbb{S}}
\newcommand{\eps}{\varepsilon} 
\def\id{{\mathbf I}}
\newcommand{\<}{\langle}
\renewcommand{\>}{\rangle}
\newcommand{\op}{{\rm op}}
\newcommand{\ones}{\bm{1}}
\def\sT{{\mathsf T}}
\def\bzero{{\boldsymbol 0}}
\DeclareMathOperator*{\argmin}{arg\,min}
\def\simiid{{\stackrel{i.i.d.}{\sim}}}
\newtheorem{theorem}{Theorem}
\newtheorem*{theorem*}{Theorem}
\newtheorem{lemma}{Lemma}
\newtheorem{assumption}{Assumption}
\newtheorem{proposition}{Proposition}
\newtheorem{corollary}{Corollary}
\theoremstyle{definition}
\newtheorem{example}{Example}
\newtheorem{remark}{Remark}
\DeclareSymbolFont{rsfs}{U}{rsfs}{m}{n}
\DeclareSymbolFontAlphabet{\mathscrsfs}{rsfs}
\def\bA{{\boldsymbol A}}
\def\bB{{\boldsymbol B}}
\def\bH{{\boldsymbol H}}
\def\bM{{\boldsymbol M}}
\def\bW{{\boldsymbol W}}
\def\ba{{\boldsymbol a}}
\def\be{{\boldsymbol e}}
\def\bh{{\boldsymbol h}}
\def\bu{{\boldsymbol u}}
\def\bv{{\boldsymbol v}}
\def\bw{{\boldsymbol w}}
\def\bx{{\boldsymbol x}}
\def\by{{\boldsymbol y}}
\def\beps{{\boldsymbol \eps}}
\def\btheta{{\boldsymbol \theta}}
\def\bTheta{{\boldsymbol \Theta}}
\def\hf{{\hat f}}
\def\bbHe{{\rm He}}
\def\de{{\rm d}}
\def\Coeff{{\rm Coeff}}
\def\de{{\rm d}}
\def\Unif{{\rm Unif}}
\def\bbHe{{\rm He}}
\def\eff{{\rm eff}}
\def\cD{{\mathcal D}}
\def\cC{{\mathcal C}}
\def\cE{{\mathcal E}}
\def\cS{{\mathcal S}}
\def\cI{{\mathcal I}}
\def\cH{{\mathcal H}}
\def\cA{{\mathcal A}}
\def\H{{\mathbb H}}
\def\Unif{{\sf Unif}}
\def\normal{{\sf N}}
\def\proj{{\mathsf P}}
\def\reals{{\mathbb R}}
\def\naturals{{\mathbb N}}
\def\normal{{\sf N}}
\def\proj{{\mathsf P}}
\def\H{{\mathbb H}}
\def\Unif{{\sf Unif}}
\def\normal{{\sf N}}
\def\Hop{{\mathbb H}}
\def\proj{{\mathsf P}}
\def\reals{{\mathbb R}}
\def\naturals{{\mathbb N}}
\def\proj{{\mathsf P}}
\def\Hop{{\mathbb H}}
\def\proj{{\mathsf P}}
\def\cE{{\mathcal E}}
\def\bB{{\boldsymbol B}}
\def\be{{\boldsymbol e}}
\def\bu{{\boldsymbol u}}
\def\bA{{\boldsymbol A}}
\def\bv{{\boldsymbol v}}
\def\btheta{{\boldsymbol \theta}}
\def\bTheta{{\boldsymbol \Theta}}
\def\Coeff{{\rm Coeff}}
\def\reals{{\mathbb R}}
\def\naturals{{\mathbb N}}
\def\bw{{\boldsymbol w}}
\def\de{{\rm d}}
\def\bx{{\boldsymbol x}}
\def\by{{\boldsymbol y}}
\def\bW{{\boldsymbol W}}
\def\ba{{\boldsymbol a}}
\def\Unif{{\rm Unif}}
\def\proj{{\mathsf P}}
\def\cE{{\mathcal E}}
\def\normal{{\sf N}}
\def\be{{\boldsymbol e}}
\def\bu{{\boldsymbol u}}
\def\btheta{{\boldsymbol \theta}}
\def\Coeff{{\rm Coeff}}
\def\bh{{\boldsymbol h}}
\def\bA{{\boldsymbol A}}
\def\bH{{\boldsymbol H}}
\def\cS{{\mathcal S}}
\def\cI{{\mathcal I}}
\def\Hop{{\mathbb H}}
\def\cX{{\mathcal X}}
\def\be{{\boldsymbol e}}
\def\bu{{\boldsymbol u}}
\def\Coeff{{\rm Coeff}}
\def\bA{{\boldsymbol A}}
\def\btheta{{\boldsymbol \theta}}
\def\bTheta{{\boldsymbol \Theta}}
\def\Tr{{\rm Tr}}
\def\hf{\hat f}
\def\cuH{\mathscrsfs{H}}
\def\cC{\mathcal{C}}
\def\cH{\mathcal{H}}
\def\Cube{{\mathscrsfs Q}}
\def\q{q}
\def\pkp{(k)}
\def\Conv{{\rm CycLoc}_\q}
\def\loc{{\rm Loc}}
\def\CNNAPDS{{\rm CNN-AP-DS}}
\def\noise{\sigma_{\varepsilon}}
\def\evn{{\mathsf m}}
\def\lvn{{\mathsf s}}
\def\seff{\mbox{\tiny\rm eff}}
\def\talpha{{\Tilde \alpha}}
\def\eps{\varepsilon}
\def\sCK{\mbox{\tiny\sf CK}}
\def\sCNN{\mbox{\tiny\sf CNN}}
\def\siid{\mbox{\tiny\sf i.i.d.}}
\def\sGP{\mbox{\tiny\sf GP}}
\def\sNO{\mbox{\tiny\sf NO}}
\def\Span{\mathsf{span}}
\def\sFC{\mbox{\tiny\sf FC}}
\def\sLC{\mbox{\tiny\sf LC}}
\def\sLF{\mbox{\tiny\sf LF}}
\def\sHF{\mbox{\tiny\sf HF}}
\colorlet{linkequation}{blue}
\newcommand{\eref}[1]{(\ref{#1})}
\begin{document}

\title{Learning with convolution and pooling operations in kernel methods}
\author{Theodor Misiakiewicz\thanks{Department of Statistics, Stanford
    University},\;\;\;
   Song Mei\thanks{Song Mei,
University of California, Berkeley}}

\maketitle

\begin{abstract}

Recent empirical work has shown that hierarchical convolutional kernels inspired by convolutional neural networks (CNNs) significantly improve the performance of kernel methods in image classification tasks. A widely accepted explanation for their success is that these architectures encode hypothesis classes that are suitable for natural images. However, understanding the precise interplay between approximation and generalization in convolutional architectures remains a challenge. In this paper, we consider the stylized setting of covariates (image pixels) uniformly distributed on the hypercube, and characterize exactly the RKHS of kernels composed of single layers of convolution, pooling, and downsampling operations. We use this characterization to compute sharp asymptotics of the generalization error for any given function in high-dimension. In particular, we quantify the gain in sample complexity brought by enforcing locality with the convolution operation and approximate translation invariance with average pooling. Notably, these results provide a precise description of how convolution and pooling operations trade off approximation with generalization power in one layer convolutional kernels.

\end{abstract}

\tableofcontents

\section{Introduction}

Convolutional neural networks (CNNs) have become essential elements of the deep learning toolbox, achieving state-of-the-art performance in many computer vision tasks \cite{krizhevsky2012imagenet,he2016deep}. CNNs are constructed by stacking convolution and pooling layers, which were shown to be paramount to their empirical success \cite{lecun2015deep}. A widely accepted hypothesis to explain their favorable properties is that these architectures successfully encode useful properties of natural images: locality and compositionality of the data, stability by local deformations, and translation invariance. While some theoretical progress has been made in studying the approximation and generalization benefits brought by convolution and pooling operations \cite{cohen2016convolutional,cohen2016inductive,bietti2021approximation}, our mathematical understanding of the interaction between network architecture, image distribution, and efficient learning remains limited. 

Consider $\bx \in \R^d$ an input signal, which we can think of as a grayscale pixel representation of an image. For mathematical convenience, we will consider one-dimensional images with cyclic convention $x_{d+i} := x_i$, and denote $\bx_{(k)} = ( x_k , x_{k+1} , \ldots , x_{k+q - 1} )$ the $k$-th patch of the signal $\bx$, $k\in [d]$, with patch size $q \leq d$. Most of our results can be extended to two-dimensional images. 

We further consider a simple convolutional neural network composed of a single convolution layer followed by local average pooling and downsampling. The network first computes the nonlinear convolution of $N$ filters $\bw_1 , \ldots , \bw_N \in \R^q$ with the image patches $\bx_{(k)}$. The outputs of the convolution operation $\sigma ( \< \bw_i , \bx_{(k)} \>)$ are then averaged locally over segments of length $\omega$ (local average pooling). This pooling operation is followed by downsampling which extracts one out of every $\Delta$ output coordinates (for simplicity, $\Delta$ is assumed to be a divisor of $d$). Finally, the results are combined linearly using coefficients $(a_{ik})_{i \in [N], k \in [d/\Delta]}$:
\begin{align}\label{eqn:CNN-AP-DS}\tag{\CNNAPDS}
f_{\sCNN} ( \bx ; \ba, \bTheta ) = \sqrt{\frac{\Delta}{N \omega d}} \sum_{i \in [N]} \sum_{k \in [d / \Delta]} a_{ik} \sum_{s \in [\omega]} \sigma \left( \< \bw_i , \bx_{(k \Delta + s )} \>  \right) \, .
\end{align}
Note that pooling and downsampling operations are often tied together in the literature. However in this work we will treat these two operations separately. 

In the formula above, different values for $q, \omega, \Delta$ lead to different architectures with vastly different behaviors. For example, when $q = \Delta =  d$ and $\omega = 1$, we recover a two-layer fully-connected neural network $f_{\sFC} ( \bx ; \ba , \bTheta ) = N^{-1/2} \sum_{i \in [N]} a_i \sigma ( \< \bw_i , \bx \>)$ which has the universal approximation property at large $N$. When $\omega = \Delta = 1$ and $q < d$, the network is ``locally connected'' $f_{\sLC} ( \bx ; \ba , \bTheta ) = N^{-1/2} \sum_{i \in [N], k\in[d]} a_{ik} \sigma ( \< \bw_i , \bx_{(k)} \>)$, and not a universal approximator anymore: however, $f_{\sLC}$ vastly outperforms $f_{\sFC}$ in some cases \cite{li2020convolutional}. For $ \omega > 1$, local pooling enables learning functions that are locally invariant by translations more efficiently than without pooling. For $\omega = d$ (global pooling), the network only fits functions fully invariant by cyclic translations.

The aim of this paper is to formalize and quantify the \textit{interplay between the target function class and the statistical efficiency brought by these different architectures}. As a concrete first step in this direction, we consider kernel models that are naturally associated with the convolutional neural networks \eref{eqn:CNN-AP-DS} through the neural tangent kernel perspective \cite{daniely2016toward,jacot2018neural}. Kernel methods have the advantage of 1) being tractable---leaving the computational issue of learning CNNs aside; 2) having well-understood approximation and generalization properties, which depends on the eigendecomposition of the kernel and the alignment between the target function and associated RKHS \cite{caponnetto2007optimal,wainwright2019high} (see Appendices \ref{sec:kernel_classical} and \ref{sec:KRR_HD} for background). While kernel models only describe neural networks in the lazy training regime \cite{chizat2019lazy,du2018gradient,du2018gradient2,allen2018convergence,zou2018stochastic} and miss important properties of deep learning, such as feature learning, architecture choice already plays a crucial role to learn efficiently `image-like' functions in the fixed-feature regime.

Neural tangent kernels are obtained by linearizing the associated neural networks. Here we consider the tangent kernel associated to the network $f_{\sCNN}$ (c.f. Appendix \ref{app:CNTK_derivation} for a detailed derivation):
\begin{align}
H^{\sCK}_{\omega, \Delta}( \bx , \by ) =&~ \frac{\Delta}{d \omega }\sum_{k \in [d/\Delta]} \sum_{s, s' \in [\omega]} h \big( \< \bx_{(k\Delta + s)} , \by_{(k\Delta+s')} \> / q \big)\, , \label{eqn:CNTK-AP-DS}\tag{\rm CK-AP-DS}
\end{align}
where $h : \R \to \R$ is related to the activation function $\sigma$ in (\ref{eqn:CNN-AP-DS}). As a linearization of CNNs, the kernel \eref{eqn:CNTK-AP-DS} inherits some of the favorable properties of convolution, pooling, and downsampling operations. Indeed, a line of work \cite{mairal2014convolutional,mairal2016end,arora2019exact, li2019enhanced, shankar2020neural} showed that, though performing slightly worse than CNNs, such (hierarchical) convolutional kernels have empirically outperformed the former state-of-the-art kernels. For instance, these kernels achieved test accuracy around $87\% - 90\%$ on CIFAR-10, against $79.6\%$ for the best former unsupervised feature-extraction method \cite{coates2011analysis} (currently, the state-of-the-art CNNs can achieve test accuracy $99\%$). 

In this paper, we will further consider a stylized setting with input signal distribution $\bx \sim \Unif ( \Cube^d)$ (uniform distribution over $\Cube^d := \{ -1 , +1 \}^d$ the discrete hypercube in $d$ dimensions). This simple choice allows for a complete characterization of the eigendecomposition of $H^{\sCK}_{\omega, \Delta}$, thanks to all patches having same marginal distribution $\bx_{(k)} \sim \Unif (\Cube^q)$.  We will be particularly interested in four specific choices of $(q,  \omega, \Delta)$ in (\ref{eqn:CNTK-AP-DS}):
\begin{align}
H^{\sFC}( \bx , \by ) =&~ h \big( \< \bx , \by \> / d \big)\, , \label{eqn:FC}\tag{\rm FC}\\
H^{\sCK}( \bx , \by ) =&~ \frac{1}{d} \sum_{k \in [d]} h \big( \< \bx_{(k)} , \by_{(k)} \> / q \big)\, , \label{eqn:CNTK}\tag{\rm CK}\\
H^{\sCK}_{\omega}( \bx , \by ) =&~ \frac{1}{d \omega}\sum_{k \in [d]} \sum_{s, s' \in [\omega]} h \big( \< \bx_{(k + s)} , \by_{(k +s')} \> / q \big)\, , \label{eqn:CNTK-AP}\tag{\rm CK-AP}\\
H^{\sCK}_{\sGP}( \bx , \by ) =&~ \frac{1}{d}\sum_{k,k' \in [d]} h \big( \< \bx_{(k )} , \by_{(k ')} \> / q \big) \, . \label{eqn:CNTK-GP}\tag{\rm CK-GP}
\end{align}
These kernels are  respectively the neural tangent kernels of a fully-connected network $f_{\sFC}$ \eref{eqn:FC}, a convolutional network $f_{\sLC}$ \eref{eqn:CNTK}, a convolutional network followed by local average pooling \eref{eqn:CNTK-AP} and a convolutional network followed by global pooling \eref{eqn:CNTK-GP}. We will further be interested in (\ref{eqn:CNTK-GP}) with patch size $q = d$, which we denote $H^{\sFC}_{\sGP}$: this corresponds to a convolutional kernel with full-size patches $q = d$, followed by global pooling.

In this paper, we first characterize the reproducing kernel Hilbert space (RKHS) of these convolutional kernels, and then investigate their generalization properties in the regression setup. More specifically, assume $\{(\bx_i , y_i)\}_{i \leq n}$ are $n$ i.i.d. samples with $\bx_i \sim \Unif(\Cube^d )$ and $y_i = f_\star (\bx_i) + \eps_i$. Here $f_\star \in L^2 (\Cube^d)$ and $(\eps_i)_{i \leq n}$ are independent errors with mean zero and variance bounded by $\sigma_\eps^2$. We will focus on the generalization error of kernel ridge regression (KRR) (see Appendix \ref{sec:rademachers} for general kernel methods). In particular, given a kernel function $H: \Cube^d \times \Cube^d \to \R$ and a regularization parameter $\lambda \ge 0$, the KRR estimator is the solution of the tractable convex problem 
\begin{align}\label{eq:KRR_prob}\tag{\rm KRR}
    \hat f_{\lambda} = \argmin_{f \in \cH} \Big\{ \sum_{i \in [n]} \big( y_i - f (\bx_i) \big)^2 + \lambda \| f \|_{\cH}^2 \Big\} \, ,
\end{align}
where $\cH$ is the RKHS associated to $H$ with RKHS norm $\|\cdot \|_{\cH}$. We denote the test error with square loss by $R ( f_\star , \hat f_{\lambda}) = \E_{\bx} \{ ( f_\star (\bx) - \hat f_{\lambda} (\bx) )^2 \}$. We will sometimes consider the expected test error $\E_{\beps} \{ R ( f_\star , \hat f_{\lambda}) \}$, where expectation is taken with respect to noise $\beps = (\eps_i)_{i \le n}$ in the training data.

The generalization properties of the kernels $H^{\sFC}$ and $H^{\sFC}_{\sGP}$ were recently studied in \cite{mei2021learning,bietti2021sample}. In particular, they showed that global pooling (kernel $H^{\sFC}_{\sGP}$) leads to a gain of a factor $d$ in sample complexity when fitting cyclic invariant functions, but still suffers from the curse of dimensionality ($H^{\sFC}_{\sGP}$ only fits very smooth functions in high-dimension). More precisely, \cite{mei2021learning} considered the high-dimensional framework of \cite{mei2021generalization} and showed the following: KRR with $H^{\sFC}$ requires $n \approx d^{\ell}$ samples to fit degree-$\ell$ cyclic polynomials, while KRR with $H^{\sFC}_{\sGP} $ only needs $n \approx d^{\ell-1} $. To enable milder dependence on the dimension $d$, further structural assumptions on the kernel and the target function should be considered (for instance, in this paper, we use the kernel $H^{\sCK}$ and consider `local' functions).

\subsection{Summary of main results}

Our contributions are two-fold. First, we describe the RKHS associated with the convolutional kernel \eqref{eqn:CNTK-AP-DS} in the stylized setting $\bx \sim \Unif (\Cube^d)$, which provides a fully explicit picture of the roles of convolution, pooling and downsampling operations in approximating specific classes of functions. Second, we provide sharp asymptotics for the generalization error of KRR in high-dimension, given any target function and one of the kernels described in the introduction\footnote{Note that we modify slightly $H^{\sCK}_{\omega}$ to simplify the derivation of the high-dimension asymptotics. However, we believe such a simplification to be unecessary. The fixed-dimension bounds do not require such a simplification.}. These asymptotics are obtained rigorously using the framework of \cite{mei2021generalization} (see Appendix \ref{sec:KRR_HD} for background). For completeness, we also include bounds on the KRR test error in the classical fixed-dimension setting with capacity/source assumptions (see Appendix \ref{sec:KRR_HD} for limitations of this classical approach).

We summarize our results below. Define the $q$-local function class $L^2 ( \Cube^d , \loc_q) $ and the cyclic $q$-local function class $L^2 ( \Cube^d, \Conv)$ (subspace of $L^2 ( \Cube^d , \loc_q) $ consisting of cyclic-invariant functions) as follows:
\begin{align}
&L^2 ( \Cube^d , \loc_q) = \Big\{ f \in L^2 ( \Cube^d): \exists \{ g_k \}_{k \in [d]} \subseteq L^2 (\Cube^q), f(\bx ) = \sum_{k \in [d]} g_k (\bx_{(k)} ) \Big\}\,, \label{eqn:local_func} \tag{\rm LOC} \\
&L^2 ( \Cube^d, \Conv) = \Big\{ f \in L^2 ( \Cube^d): \exists g  \in L^2 (\Cube^q), f(\bx) = \sum_{k \in [d]} g (\bx_{(k)} ) \Big\}\, \label{eqn:cyc_local_func} . \tag{\rm CYC-LOC}
\end{align}

\begin{description}
\item[One-layer convolutional layer.] The RKHS of $H^{\sCK}$ is equal to $L^2(\Cube^d, \loc_q)$: kernel methods with $H^{\sCK}$ can only fit the projection $\proj_{\loc_q} f_*$ of the target function onto $L^2(\Cube^d, \loc_q)$. For a sample size $n \asymp d q^{\ell -1}$, KRR fits exactly a degree-$\ell$ polynomial approximation to $\proj_{\loc_q} f_*$. In particular, for $q \ll d$, the convolution kernel $H^{\sCK}$ is much more sample efficient than the standard inner-product kernel $H^{\sFC}$ for fitting functions in $L^2(\Cube^d, \loc_q)$ (sample sizes $dq^{\ell - 1} \ll d^\ell$ for fitting a degree-$\ell$ polynomial). \textit{The convolution operation breaks the curse of dimensionality by restricting the RKHS to local functions.}

\item[Average pooling.] The RKHS of $H^{\sCK}_\omega$ is still constituted of $q$-local functions $f_* \in L^2 ( \Cube^d , \loc_q)$, but penalizes differently the frequency components $f_{*,j}(\bx)$ by reweighting their eigenspaces by a factor $\kappa_j$, where $f_{*,j}(\bx) = \sum_{k\in [d]} \rho_j^k f_* (t_k \cdot \bx)$ with $\rho_j = e^{\frac{2i\pi j}{d}}$ and we denoted the $k$-shift $t_k \cdot \bx = ( x_{k+1} , \ldots , x_d , x_1 , \ldots , x_k )$.  As $\omega$ increases, local pooling penalizes more and more heavily the high-frequency components ($\kappa_j \ll 1$), while making low-frequency components statistically easier to learn ($\kappa_j \gg 1$). For global pooling $\omega = d$, $H^{\sCK}_{\sGP}$ only learns cyclic local functions $L^2 ( \Cube^d, \Conv)$ and enjoy a factor $d$ gain in statistical complexity compared to $H^{\sCK}$ (sample sizes $q^{\ell -1} \ll d q^{\ell - 1}$ to learn a degree-$\ell$ polynomial). \textit{Local pooling biases learning towards functions that are stable by small translations.}

\item[Downsampling.] When $\Delta \leq \omega$, downsampling after average pooling leaves the low-frequency eigenspaces of $H^{\sCK}_{\omega} $ stable. In particular, the downsampling operation does not modify the statistical complexity of learning low-frequency functions in one-layer kernels, while being potentially beneficial in further layers in deep convolutional kernels.

\end{description}

\begin{table}[t!]
  \begin{center}
    \label{tab:table1}
    \def\arraystretch{1.2}
    \begin{tabular}{|c|c|c|c|c|c|} 
      \hline
      To fit a degree $\ell$ polynomial &  $H^{\sFC}$ & $H^{\sFC}_{\sGP}$ & $H^{\sCK}$ & $H^{\sCK}_{\omega}$ & $H^{\sCK}_{\sGP}$ \\
      \hline
      Sample complexity & $ d^\ell$ & $d^{\ell -1}$ & $d q^{\ell -1}$ & $d q^{\ell -1}/ \omega$ & $q^{\ell -1}$ \\
      \hline
    \end{tabular}
    \vspace{+10pt}
    
        \caption{Sample size $n$ required to fit a $q$-local cyclic-invariant polynomial of degree $\ell$ using kernel ridge regression \eqref{eq:KRR_prob} with the $5$ different kernels of interest in this paper. 
        }\label{tab:comparison}
  \end{center}
  \vspace{-10pt}
\end{table}

These theoretical results answer the following question: \textit{given a target function and a sample size $n$, what is the impact of the architecture on the test error?} 
For example, Table \ref{tab:comparison} shows how the architecture modify the sample size required to achieve small test error when learning a degree-$\ell$ polynomial in $L^2 ( \Cube^d, \Conv)$. 

There are two important model assumptions in this paper, which deserve some discussions: 
\vspace{+5pt}

\noindent
 \textbf{One-layer convolutional kernel (CK):} extra layers allow for hierarchical interactions between the patches (see for example \cite{bietti2021approximation}). However, we believe that the main insights on the approximation and statistical trade-off are already captured in the one-layer case (see \cite{xiao2021eigenspace} for multi-layer but independent patches). Note that depth might be less important for CKs than for CNNs: the one-layer CK considered in this paper achieves $80.9\%$ accuracy on CIFAR-10 \cite{bietti2021approximation} (versus $79.6\%$ in \cite{coates2011analysis}) and 3-layers CK achieves $88.2\%$ accuracy \cite{bietti2021approximation} (versus $90\%$ for the best multi-layer CK \cite{shankar2020neural}). See Appendix \ref{app:multi} for a discussion on how our results could be extended to 2-layers. 
 \vspace{-5pt}

\noindent
 \textbf{Data uniform on the hypercube:} this choice is motivated by our goal of deriving rigorous fine-grained approximation and generalization errors, which requires to diagonalize the kernel \eqref{eqn:CNTK-AP-DS}. More general data distributions either require strong assumptions (independent patches \cite{scetbon2020harmonic,xiao2021eigenspace}), loose minmax bounds on the generalization error (e.g., classical source/capacity assumptions) or non-rigorous statistical physics heuristics \cite{favero2021locality}.
  \vspace{+10pt}

The rest of the paper is organized as follows. We discuss related work in Section \ref{sec:related}. In Section \ref{sec:main}, we present our main results on convolutional kernels and describe precisely the roles of convolution, pooling and downsampling operations. Finally, we present a numerical simulation on synthetic data in Section \ref{sec:simu} and conclude in Section \ref{sec:discussion}. Some details and discussions are deferred to Appendix \ref{sec:details}.

\subsection{Related work}
\label{sec:related}

Convolutional kernels have been considered in \cite{mairal2014convolutional, mairal2016end, li2019enhanced, shankar2020neural, bietti2021approximation, thiry2021unreasonable}. In particular, they showed that these architectures achieve good results in image classification ($90\%$ accuracy on Cifar10) and that pooling and downsampling were necessary for their good performance \cite{li2019enhanced}.

The generalization error of kernel ridge regression (KRR) has been well-studied in both the fixed dimension regime \cite[Chap.\ 13]{wainwright2019high}, \cite{caponnetto2007optimal} and the high-dimensional regimes \cite{el2010spectrum, liang2020just, ghorbani2020neural, ghorbani2021linearized, mei2021learning,xiao2021eigenspace}. These results show that the generalization error depends on the eigenvalues and eigenfunctions of the kernel, and the alignment of the kernel with the target function.

Recently, a few theoretical work have considered the generalization properties of invariant kernels and convolutional kernels \cite{scetbon2020harmonic, mei2021learning, bietti2021sample, favero2021locality}. In particular, \cite{mei2021learning} consider convolutional kernel with global pooling and full-size patches $q = d$, and show a gain of factor $d$ in sample complexity when learning cyclic functions, compared to inner-product kernels. \cite{bietti2021sample} considers additionally kernels that are stable with respect to local deformations, and similarly quantify the sample complexity gain. A concurrent work \cite{xiao2021eigenspace} considers sharp asymptotics of the KRR test error using the framework of \cite{mei2021generalization} for certain hierarchical convolutional kernels under the strong assumption of non-overlapping patches (whereas we consider the more natural architecture of overlapping patches). They arrive at a similar trade-off between approximation and generalization power in convolutional kernels, which they call `eigenspace restructuring principle': given a finite statistical budget (i.e., a sample size $n$), convolutional architectures allocate the `eigenvalue mass' by weighting differently the eigenspaces.

\cite{favero2021locality} consider a one-layer convolutional kernel with and without global pooling and obtain a diagonalization similar to Proposition \ref{prop:diag_CK} for data uniformly distributed on the continuous cube. They further derive asymptotic rates in $n$, the number of samples, in a student-teacher scenario using statistical physics heuristics and a Gaussian equivalence conjecture. In particular, they show that locality rather than translation-invariance breaks the curse of dimensionality. Here, our goal is different: we derive mathematically rigorous quantitative bounds that give separation in generalization power between different architectures. We consider classical source and capacity conditions and obtain non-asymptotic bounds on the test error that are minmax optimal in both $n$ and $d$. We further give pointwise generalization error in a high-dimensional framework that give a separation in sample complexity for \textit{learning a given function}. 

See \cite{malach2020computational,li2020convolutional} for more theoretical results on the separation between convolutional and fully connected neural networks, and \cite{boureau2010theoretical,cohen2016inductive} for the inductive bias of pooling operations in convolutional neural networks.

\section{Main results}\label{sec:main}

We start by introducing some background on functions on the hypercube and  eigendecomposition of kernel operators in Section \ref{sec:hypercube_background}. We first consider a kernel with a single convolution layer in Section \ref{sec:convKernel}, and characterize its eigendecomposition and generalization properties. We then show how these results are modified when applying local average pooling and downsampling in Section \ref{sec:localPooling}. 

\subsection{Functions on the hypercube and eigendecomposition of kernel operators}
\label{sec:hypercube_background}

Recall that we work on the $d$-dimensional hypercube $\Cube^d := \{ -1 , +1 \}^d$. Let $L^2(\Cube^d) = L^2 (\Cube^d, \Unif)$ be the $2^d$-dimensional vector space of all functions $f : \Cube^d \to \R$, with scalar product $\< f , g \>_{L^2} :=  \E_{\bx \sim \Unif (\Cube^d)} [ f (\bx ) g(\bx) ]$.  Let $\| \cdot \|_{L^2}$ be the norm associated with the scaler product.
We introduce the set of Fourier functions $\{ Y_S^{(d)} ( \bx ) \}_{S \subseteq [d]}$ which forms an orthonormal basis of $L^2(\Cube^d)$. For any subset $S \subseteq [d]$, the Fourier function is defined as $Y_S^{(d)} ( \bx ) := \prod_{i \in S} x_i$ with the convention that $Y_{\emptyset}^{(d)} := 1$ (it is easy to verify that $\< Y_S^{(d)}, Y_{S'}^{(d)} \>_{L^2} = \ones_{S = S'}$). We will omit the superscript $(d)$ which will be clear from context and write $Y_S := Y_S^{(d)}$.

Consider a nonnegative definite kernel function $H:\Cube^p \times \Cube^p \to \R$ ($p=d$ or $q$ in this paper) with associated integral operator $\Hop : L^2(\Cube^p) \to L^2(\Cube^p)$ defined as $\Hop f (\bu) = \E_{\bv} \{ h(\bu , \bv ) f(\bv )\}$ with $\bv \sim \Unif(\Cube^p)$. By spectral theorem of compact operators, there exists an orthonormal basis $\{ \psi_j \}_{j \geq 1}$ of $L^2 (\Cube^p)$ and nonnegative eigenvalues $(\lambda_j)_{j \geq 1}$ such that $\Hop = \sum_{j \geq 1} \lambda_j \psi_j \psi_j^*$ (i.e., $H(\bu,\bv) = \sum_{j \geq 1} \lambda_j \psi_j(\bu) \psi_j (\bv)$ for any $\bu,\bv \in L^2 ( \Cube^p)$).

The most widespread example are \textit{inner-product} kernels defined as $H(\bu , \bv ) := h (\< \bu,\bv \> /p )$ for some function $h : \R \to \R$. Inner-product kernels have the following simple eigendecomposition in $L^2 (\Cube^p)$ (taking here $\bu,\bv \in \Cube^p$):
\begin{equation}\label{eq:eigendecomposition_inner_prod}
h \big( \< \bu , \bv \> / p \big) = \sum_{\ell = 0}^p \xi_{p,\ell} (h) \sum_{S \subseteq [p], |S| = \ell} Y_{S} (\bu)Y_{S} (\bv),
\end{equation}
where $\xi_{p,\ell} (h)$ is the $\ell$-th Gegenbauer coefficient of $h(\cdot / \sqrt{p})$ in dimension $p$, i.e.,
\begin{equation}\label{eq:h_gegen_coeffs}
\xi_{p,\ell} (h) = \E_{\bu\sim \Unif(\Cube^p)} \big[ h(\< \bu , \be\> / p) Q_{\ell}^{(p)} (\< \bu , \be\>) \big] , 
\end{equation}
for $\be \in \Cube^{p}$ arbitrary and $Q_{\ell}^{(p)}$ the degree-$\ell$ Gegenbauer polynomial on $\Cube^p$ (see Appendix \ref{sec:technical_background} for details). Note that $(\xi_{p,\ell})_{0 \leq \ell \leq q}$ are non-negative by positive semidefiniteness of the kernel. We will write $\xi_{p,\ell} : = \xi_{p , \ell} (h)$ and use extensively the decomposition identity \eref{eq:eigendecomposition_inner_prod} in the rest of the paper. 

\subsection{One-layer convolutional kernel}
\label{sec:convKernel}

We first consider the convolutional kernel $H^{\sCK}$ \eref{eqn:CNTK} given by a one-layer convolution layer with patch size $q$ and inner-product kernel function $h : \R \to \R$:
\begin{equation}\label{eq:CK_def}
\begin{aligned}
 H^{\sCK} ( \bx , \by ) =&~ \frac{1}{d} \sum_{k = 1}^d h \left( \< \bx_{(k)} , \by_{(k)} \> / \q \right)\, ,
\end{aligned}
\end{equation}
where we recall that $\bx_{(k)} = (x_k, \ldots, x_{k + q - 1}) \in \Cube^q$ is the $k$'th patch of the image with size $q$. 

Before stating the eigendecomposition of $H^{\sCK}$, we introduce some notations. For any subset $S \subseteq [d]$, denote $\gamma (S)$ the diameter of $S$ with cyclic convention, i.e., $\gamma(S) = \max\{ \min\{\text{mod}(j - i, d) + 1, \text{mod}(i - j, d) + 1\}: i, j \in S \}$ (e.g., $\gamma(\{2,d\}) = 3$). For any integer $\ell \leq q$, consider the set $\cE_\ell = \{ S \subseteq [d]: | S| = \ell, \gamma(S) \leq q \}$ of all subsets of $[d]$ of size $\ell$ with diameter less or equal to $q$. We will assume throughout this paper that $q \leq d/2$ to avoid additional overlap between sets.

\begin{proposition}[Eigendecomposition of $H^{\sCK}$]\label{prop:diag_CK} Let $H^{\sCK}$ be a convolutional kernel as defined in Eq.~\eref{eq:CK_def}. Then $H^{\sCK}$ admits the following eigendecomposition: 
\begin{equation}\label{eq:CK_diag}
\begin{aligned}
 H^{\sCK} ( \bx , \by ) =&~ \xi_{q,0} + \sum_{\ell = 1}^{\q}  \sum_{S \in \cE_\ell} \frac{r(S) \xi_{q,\ell} }{d} \cdot Y_{S} ( \bx ) Y_{S} (\by)\, ,
\end{aligned}
\end{equation}
where $r(S) = q+1 - \gamma (S)$ and $\xi_{q,\ell} \geq 0$ is defined in Eq.~\eref{eq:h_gegen_coeffs}.
\end{proposition}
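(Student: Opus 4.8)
The plan is to expand each summand $h(\langle \bx_{(k)}, \by_{(k)}\rangle / q)$ using the inner-product kernel eigendecomposition \eqref{eq:eigendecomposition_inner_prod} at dimension $p=q$, and then collect terms. Concretely, for each patch index $k \in [d]$, applying \eqref{eq:eigendecomposition_inner_prod} to the pair $\bx_{(k)}, \by_{(k)} \in \Cube^q$ gives
\begin{equation}
h\big(\langle \bx_{(k)}, \by_{(k)}\rangle / q\big) = \sum_{\ell=0}^q \xi_{q,\ell} \sum_{\substack{T \subseteq [q]\\ |T|=\ell}} Y_T(\bx_{(k)}) Y_T(\by_{(k)})\, .
\end{equation}
The key observation is that $Y_T(\bx_{(k)}) = \prod_{i \in T} x_{k+i-1} = Y_{S}(\bx)$ where $S = \{k+i-1 : i \in T\} \subseteq [d]$ (indices mod $d$) is the "shift" of $T$ into position $k$. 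Thus each local Fourier mode on the $k$-th patch corresponds to a global Fourier mode $Y_S$ whose support lies within the cyclic window $\{k, k+1, \ldots, k+q-1\}$.

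Next I would exchange the order of summation: sum over global subsets $S \subseteq [d]$ and count, for each $S$, how many patches $k$ contain $S$ in their window. A set $S$ with $|S| = \ell$ appears in patch $k$'s expansion precisely when $S \subseteq \{k, \ldots, k+q-1\}$ as a cyclic interval, which happens if and only if $\gamma(S) \leq q$ (so $S$ fits in a window of length $q$), and in that case the number of valid starting positions $k$ is exactly $q + 1 - \gamma(S)$ — the window of length $q$ can be slid over the "footprint" of $S$ of length $\gamma(S)$ in $q+1-\gamma(S)$ ways. This is where the assumption $q \leq d/2$ matters: it ensures that a set of diameter at most $q$ has a well-defined cyclic footprint and that we are not over- or under-counting due to the window wrapping around and covering $S$ in more than one "way". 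Dividing by $d$ from the prefactor then yields coefficient $r(S)\xi_{q,\ell}/d$ with $r(S) = q+1-\gamma(S)$, and the $\ell=0$ term contributes the constant $\xi_{q,0}$ (every patch sees the empty set once, $d$ of them, divided by $d$). Since the $\{Y_S\}_{S \subseteq [d]}$ are orthonormal in $L^2(\Cube^d)$, this is exactly a spectral decomposition, with eigenvalue $r(S)\xi_{q,\ell}/d \geq 0$ attached to eigenfunction $Y_S$ for $S \in \cE_\ell$ (and eigenvalue $0$ for all $S$ with $\gamma(S) > q$).

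The main obstacle is the combinatorial counting step: carefully verifying that for $S$ with $\gamma(S) \leq q$ the number of length-$q$ cyclic windows containing $S$ is exactly $q+1-\gamma(S)$, and that no set with $\gamma(S) > q$ ever appears. This requires pinning down the cyclic-diameter definition: one should argue that a minimal cyclic arc covering $S$ has length exactly $\gamma(S)$, that it is unique when $\gamma(S) \leq q \leq d/2$ (here is the one place the hypothesis $q \le d/2$, hence $\gamma(S) \le d/2$, rules out the degenerate case where $S$ could be covered by two "complementary" short arcs), and that a window of length $q$ contains $S$ iff it contains this minimal arc, giving $q - \gamma(S) + 1$ choices of offset. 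Everything else — linearity, the substitution identity $Y_T(\bx_{(k)}) = Y_{S}(\bx)$, orthonormality, and nonnegativity of $\xi_{q,\ell}$ (already noted in the excerpt) — is routine.
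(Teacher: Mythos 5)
Your proposal is correct and follows essentially the same route as the paper: expand each patch term via the Gegenbauer/Fourier identity \eqref{eq:eigendecomposition_inner_prod}, use $Y_T(\bx_{(k)}) = Y_{k+T}(\bx)$ to pass to global Fourier modes, and count the $q+1-\gamma(S)$ windows containing a given $S$ (the paper packages this counting into the matrices $\bM^{\gamma(S)}$ of the more general Proposition~\ref{prop:diag_CK_AP_DS} and specializes to $\omega=\Delta=1$, where that matrix reduces to the identity). Your remarks on where $q \le d/2$ enters (uniqueness of the minimal covering arc) are exactly the right points to pin down.
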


Notice that $Y_{S}$ with $\gamma(S) > q$ (monomials with support not contained in a segment of size $q$) are in the null space of $H^{\sCK}$. Hence (as long as $\xi_{q,\ell} >0$ for all $0 \leq \ell \leq q$), the RKHS associated to $H^{\sCK}$ exactly contains all the functions in the $q$-local function class $L^2 ( \Cube^d , \loc_q)$ (c.f. Eq. (\ref{eqn:local_func})). 
In words, $L^2 ( \Cube^d , \loc_q) $ consists of functions that are localized on patches, with no long-range interactions between different parts of the image. 
An example of local function with $q = 3$ is given by $f(\bx) = x_1 x_2 x_3 + x_4 x_6 + x_5$. 

On the other hand, the RKHS associated to the fully-connected kernel $H^{\sFC}$ \eref{eqn:FC} typically contains all the functions in $L^2 ( \Cube^d)$ (under genericity assumptions on $h$).  The RKHS with convolution $\dim ( L^2 ( \Cube^d , \loc_q) )=  d2^{q-1}+1$ is significantly smaller than $\dim (L^2 (\Cube^d)) = 2^d$, which prompts the following question:
\textit{what is the statistical advantage of using $H^{\sCK}$ over $H^{\sFC}$ when learning functions in $L^2 ( \Cube^d , \loc_q)$?}

We first consider the classical approach to bounding the test error of \cite{caponnetto2007optimal,wainwright2019high,bach2021learning} which relies on the following two standard assumptions:
\begin{itemize}
    \item[(A1)]\textit{Capacity condition:} we assume $\cN (h,\lambda) := \Tr [ h/ (h + \lambda \id)^{-1} ] \leq C_{h} \lambda^{-1/\alpha} $ with\footnote{Here, $h$ is the integral operator and $\Tr [ h/ (h + \lambda \id)^{-1} ] = \sum_{j \geq 1} \frac{\lambda_j}{\lambda_j + \lambda}$ with $\{ \lambda_j \}_{j \geq 1}$ eigenvalues of $h$.} $\alpha >1$.
    
    \item[(A2)] \textit{Source condition:} $\| h^{-\beta/2} g \|_{L^2} \leq B$ with\footnote{Again, $h$ is the operator with $h^{-\kappa} g = \sum_{j \geq 1} \lambda_j^{-\kappa} \< f, \psi_j \> \psi_j$, where $\{ \psi_j \}_{j \geq 1}$ are the eigenvectors of $h$.} $\beta > \frac{\alpha - 1}{ \alpha}$ and $B \geq 0$.
\end{itemize}

The capacity condition (A1) characterizes the size of the RKHS: for increasing $\alpha$, the RKHS contains less and less functions. The source condition (A2) characterizes the regularity of the target function (the `source') with respect to the kernel: increasing $\beta$ corresponds to smoother and smoother functions. See Appendix \ref{sec:KRR_classical} for more discussions.

Based on these two assumptions, we can apply standard bounds on the KRR test error and obtain:

\begin{theorem}[Generalization error of KRR with $H^{\sCK}$]\label{thm:CK_KRR_fixed_d}
  Let $h : \R \to \R$ be an inner-product kernel satisfying (A1). Let $f_\star  \in L^2 (\Cube^d , \loc_q)$ with $f(\bx) = \sum_{k \in[d]} g_k(\bx_{(k)})$ satisfying (A2) with $\sum_{k \in [d]} \| h^{-\beta/ 2} g_k \|_{L^2}^2 \leq B^2$. Then there exists $C_1, C_2, C_3 > 0 $ constants that only depend on (A1) and (A2) (and independent of $d$), such that for $n \geq C_1 \max ( \| f_\star \|^2_{L^\infty} , d )$ and $\lambda_* = \frac{C_2}{d} (d/n)^{\frac{\alpha}{\alpha \beta +1}}$,
 \begin{align}
\E_{\beps} \big\{ R ( f_\star , \hat f_{\lambda_\star} ) \big\} \leq C_3 \left( \frac{d}{n} \right)^{\frac{\alpha \beta}{\alpha \beta +1 }} \, .
\end{align} 
\end{theorem}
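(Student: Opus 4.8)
The plan is to reduce the statement to the classical KRR bound under source/capacity conditions (as in \cite{caponnetto2007optimal,wainwright2019high}) applied not to $H^{\sCK}$ directly, but to an equivalent decomposition of the problem into $d$ parallel copies of a single-patch regression. The key observation is that by Proposition \ref{prop:diag_CK}, the RKHS $\cH^{\sCK}$ decomposes in a controlled way: a function $f = \sum_{k\in[d]} g_k(\bx_{(k)})$ has RKHS norm comparable to $\frac{d}{?}\sum_k \|g_k\|_{\cH_q}^2$ where $\cH_q$ is the RKHS of the single inner-product kernel $h$ on $\Cube^q$ (up to the combinatorial factors $r(S)/d$ appearing in \eqref{eq:CK_diag}, which are bounded between $1$ and $q+1$ and hence only affect constants once absorbed into $C_1,C_2,C_3$). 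This means the eigenvalues of the integral operator $\Hop^{\sCK}$ are, up to $\Theta(1/d)$ rescaling and bounded multiplicative constants $r(S)\in[1,q]$, the same as those of $h$ on $\Cube^q$, repeated with multiplicities. Consequently the capacity functional of $\Hop^{\sCK}$ satisfies $\cN(H^{\sCK},\lambda) \lesssim d\cdot \cN(h, d\lambda) \lesssim d\, C_h (d\lambda)^{-1/\alpha}$, i.e. the capacity exponent is preserved with an extra $d^{1-1/\alpha}$ prefactor. Likewise, the source condition transfers: the hypothesis $\sum_k \|h^{-\beta/2} g_k\|_{L^2}^2 \le B^2$ is exactly what is needed to control the source seminorm of $f_\star$ with respect to $\Hop^{\sCK}$ (again up to the bounded $r(S)$ factors and a $d$-rescaling that matches the one in the capacity bound).

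The steps I would carry out, in order. First, establish the precise correspondence between the spectrum of $\Hop^{\sCK}$ and that of the single-patch operator: from \eqref{eq:CK_diag}, eigenvalues are $\{r(S)\xi_{q,\ell}/d : S\in\cE_\ell, 0\le\ell\le q\}$, and since $r(S)\in\{1,\dots,q+1\}$ and for each $\ell$ there are $\Theta(d q^{\ell-1})$ sets in $\cE_\ell$ (vs. $\binom{q}{\ell}$ for a single patch), one gets a clean sandwich relating $\cN(H^{\sCK},\lambda)$ to $d$ times the single-patch quantity evaluated at argument $\asymp d\lambda$. Second, translate the effective hyperparameters: set $\tilde\lambda = d\lambda$ so that the single-patch problem has capacity $C_h\tilde\lambda^{-1/\alpha}$ and source bound $B^2$, apply the textbook KRR oracle inequality (e.g. \cite[Thm 13.x]{wainwright2019high} or \cite{caponnetto2007optimal}) at sample size $n$, yielding expected risk $\lesssim B^{2/(\alpha\beta+1)}(C_h/n)^{\alpha\beta/(\alpha\beta+1)}$ with optimal $\tilde\lambda_* \asymp (1/n)^{\alpha/(\alpha\beta+1)}$; unwinding $\lambda_* = \tilde\lambda_*/d$ gives the stated $\lambda_* = \frac{C_2}{d}(d/n)^{\alpha/(\alpha\beta+1)}$ and risk $\lesssim (d/n)^{\alpha\beta/(\alpha\beta+1)}$. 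Third, handle the mild regularity condition $n \ge C_1\max(\|f_\star\|_{L^\infty}^2, d)$: the $\|f_\star\|_{L^\infty}^2$ part enters the standard truncation/concentration argument in the classical proof; the $n\ge C_1 d$ part is what makes $\tilde\lambda_* = d\lambda_* \le 1$ (so that the capacity bound, which only holds for $\lambda$ in a bounded range, applies) and ensures the extra $d$-dependent constants are absorbed.

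The main obstacle I anticipate is making the spectral correspondence fully rigorous rather than heuristic — specifically, controlling the effect of the non-constant weights $r(S) = q+1-\gamma(S)$. These are not a scalar multiple of the single-patch eigenvalues, so $\Hop^{\sCK}$ is not literally a tensor/direct sum of single-patch operators; one must argue that the capacity functional and the source seminorm are both \emph{monotone} under replacing each block by its extreme values $r(S)\in[1,q+1]$, pushing the $q$-factors into the dimension-free constants $C_1,C_2,C_3$ (this is legitimate precisely because the theorem allows constants depending on (A1),(A2) but the $q$-dependence must be shown to be harmless, which it is since $q\le d/2$ and the final rate is stated in terms of $d/n$, not $q/n$ — consistent with Theorem \ref{thm:CK_KRR_fixed_d} being a \emph{classical} bound that does not claim the sharp $q$-dependence, unlike Table \ref{tab:comparison}). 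A secondary technical point is verifying that the decomposition of the RKHS norm of $f_\star$ in terms of the $g_k$'s is an inequality in the right direction: since the $g_k$ in the representation $f_\star = \sum_k g_k(\bx_{(k)})$ are not unique, one should take the hypothesis $\sum_k\|h^{-\beta/2}g_k\|^2\le B^2$ as supplying \emph{one} valid choice and check that this certifies $\|\Hop^{-\beta/2}_{\sCK} f_\star\|_{L^2} \lesssim \sqrt{d}\,B$ (the $\sqrt d$ again matching the rescaling), which follows from expanding $f_\star$ in the Fourier basis and comparing coefficients blockwise using \eqref{eq:CK_diag}.
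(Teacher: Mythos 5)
Your overall route is the paper's: use the eigendecomposition of Proposition \ref{prop:diag_CK}, transfer (A1) via $\cN(H^{\sCK},\lambda)\le d\,\cN(h,d\lambda)\le C_h d^{1-1/\alpha}\lambda^{-1/\alpha}$, transfer (A2) blockwise from the $g_k$'s, and plug into the standard oracle inequality (the paper uses \cite[Thm 7.2]{bach2021learning}, stated as Theorem \ref{thm:KRR_UB}) before balancing with $\lambda_*=\frac{C_2}{d}(d/n)^{\frac{\alpha}{\alpha\beta+1}}$. So this is not a different method; the issue is that your dimensional bookkeeping in the source step is wrong as written, and with your numbers the claimed rate does not come out. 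The eigenvalues of $\Hop^{\sCK}$ on $Y_S$, $S\in\cE_\ell$, are $\xi_{q,\ell}r(S)/d$, so $\|(\Hop^{\sCK})^{-\beta/2}f_\star\|_{L^2}^2$ carries a factor $d^{\beta}$, not $d$: expanding $f_\star$ in the Fourier basis, each coefficient $\<f_\star,Y_S\>$ is a sum of $r(S)\le q$ contributions from overlapping patches, and Cauchy--Schwarz gives $\|(\Hop^{\sCK})^{-\beta/2}f_\star\|_{L^2}^2\le d^{\beta}q^{1-\beta}\sum_k\|h^{-\beta/2}g_k\|_{L^2}^2\le d^{\beta}q^{1-\beta}B^2$ (this is exactly the paper's computation). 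Your claim $\|(\Hop^{\sCK})^{-\beta/2}f_\star\|_{L^2}\lesssim\sqrt{d}\,B$ only matches this when $\beta=1$; if you feed $B_{f_\star}^2\asymp dB^2$ together with $C_H\asymp d^{1-1/\alpha}$ into the balanced bound $B_{f_\star}^{2/(\alpha\beta+1)}(C_H/n)^{\alpha\beta/(\alpha\beta+1)}$, the $d$-exponent is $\frac{1+\alpha\beta-\beta}{\alpha\beta+1}\neq\frac{\alpha\beta}{\alpha\beta+1}$ for $\beta\neq1$, so the stated $(d/n)^{\frac{\alpha\beta}{\alpha\beta+1}}$ rate does not follow. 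With the correct $d^{\beta}$ scaling the exponents do cancel: $d^{\beta/(\alpha\beta+1)}\cdot d^{(1-1/\alpha)\alpha\beta/(\alpha\beta+1)}=d^{\alpha\beta/(\alpha\beta+1)}$. Relatedly, your step two is internally inconsistent: applying the single-patch oracle inequality ``at sample size $n$'' with source bound $B^2$ gives a $(1/n)$ rate, and the jump to $(d/n)$ is never justified; the correct reading is that the variance term is $\sigma_\eps^2\, d\,\cN(h,d\lambda)/n$, i.e.\ the effective sample size is $n/d$, which is precisely what the direct computation on $H^{\sCK}$ delivers.

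A second, smaller point: you propose to handle the weights $r(S)=q+1-\gamma(S)$ by sandwiching them in $[1,q+1]$ and ``absorbing'' the resulting $q$-factors into $C_1,C_2,C_3$ on the grounds that $q\le d/2$. That is not licensed by the theorem statement, since $q$ may grow with $d$ while the constants must be independent of $d$. The paper avoids this: for the capacity it uses the exact identity $\sum_{S\in\cE_\ell}r(S)=d\binom{q}{\ell}$ (after the monotonicity step $\frac{\xi r(S)/d}{\xi r(S)/d+\lambda}\le r(S)\frac{\xi}{\xi+d\lambda}$, valid since $r(S)\ge1$), so no spurious $q$ appears there; for the source, the only $q$-dependence is the $q^{1-\beta}$ above, which is benign for $\beta\ge1$ and in any case enters only through a $q^{(1-\beta)/(\alpha\beta+1)}$ prefactor. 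So the fix is concrete: replace the crude $r(S)\in[1,q+1]$ sandwich by the exact sum identity on the capacity side, and redo the source transfer with the $d^{\beta}$ scaling via the blockwise Cauchy--Schwarz argument you already sketched.
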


Note that the exponent $\frac{\beta \alpha}{\beta \alpha + 1}$ only depends on the $q$-dimensional kernel $h$. Hence, the generalization bound with respect to $(n/d)$ is independent of the dimension $d$ of the image. Let's compare to KRR with inner-product kernel $H^{\sFC}$ \eref{eqn:FC}: from \cite{caponnetto2007optimal}, we have the minmax rate $\E_{\beps} \{ R ( f_\star , \hat f_{\lambda} ) \} \asymp n^{- \frac{\Tilde{\alpha} \Tilde{\beta}}{\Tilde{\alpha} \Tilde{\beta} +1 }}$ where $h$ is now defined in $d$ dimension and verifies (A1) and (A2) with constants $\Tilde \alpha, \Tilde \beta$. Typically, if $f_\star$ is only assumed Lipschitz, then $\Tilde \beta \Tilde \alpha = O(1/d)$, which leads to a minmax rate $n^{- O(1/d)}$ for $H^{\sFC}$, while for $H^{\sCK}$, $\beta \alpha = O(1/q)$, which leads to a minmax rate $n^{- O(1/q)}$. Hence, for $q \ll d$, $H^{\sCK}$ breaks the curse of dimensionality by restricting the RKHS to `local' functions. Similarly, \cite{favero2021locality} derived a decay rates in $n$ that do not depend on $d$ for a one-layer convolutional kernel. The key difference between Theorem \ref{thm:CK_KRR_fixed_d} and \cite{favero2021locality} is that we obtain a non-asymptotic bound that is minmax optimal up to a constant multiplicative factor in both $d$ and $n$ (this can be showed for example by adapting the proof in Appendix B.6 in \cite{bietti2021sample}) using a rigorous framework of source and capacity condition.

Theorem \ref{thm:CK_KRR_fixed_d} and results of this type suffers from several limitations: 1) they are tight only in a minmax sense; 2) they do not provide comparisons for specific subclasses of functions; 3) in order to obtain the minmax rate, the regularization parameter $\lambda$ has to be carefully tuned to balance the bias and variance terms, which is in contrast to modern practice where often the model is trained until interpolation. This led several groups to consider instead the test error of KRR in a high-dimensional limit \cite{ghorbani2021linearized,mei2021generalization,canatar2021spectral} and derive exact asymptotic predictions correct up to an additive vanishing constant for any $f_\star \in L^2$ (see Appendix \ref{sec:KRR_HD} for more details).

Using the general framework in \cite{mei2021generalization}, we get the following result for $q,d $ large:

\begin{theorem}[Generalization error of KRR with $H^{\sCK}$ in high-dimension (informal)]\label{thm:informal_H_CK} Let $f_\star \in L^2 (\Cube^d, \loc_q )$ and $h:\R \to \R$ verifying some `genericity condition'. Then for $n = d q^{\lvn -1 + \nu} $ with $0<\nu <1$, and $\lambda =O(1)$ (in particular $\lambda = 0$ works), we have
\begin{equation}
    \hat f_{\lambda} = \proj_{\cE_{\leq \lvn,\nu}} f_\star + o_q(1) \, ,
\end{equation}
    where $\proj_{\cE_{\leq \lvn,\nu}}$ is the projection on the span of $Y_S$ with either $|S| < \lvn$ and $S \in \cE_{|S|}$ or $|S| = \lvn$ and $\gamma(S) \leq q(1 - q^{-\nu})$.
\end{theorem}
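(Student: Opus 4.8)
\emph{Proof strategy.} We obtain the statement by feeding the explicit spectral data of $H^{\sCK}$ into the general asymptotic theory of kernel ridge regression of \cite{mei2021generalization} (recalled in Appendix~\ref{sec:KRR_HD}). That theory says, informally, that at sample size $n$ the KRR predictor $\hat f_\lambda$ (for any ridge $\lambda = O(1)$, in particular $\lambda = 0$) converges in $L^2$ to the projection of $f_\star$ onto the span of those eigenfunctions whose eigenvalue exceeds an effective threshold of order $1/n$ --- provided that (a) the number of such ``learned'' eigenfunctions is $o(n)$, (b) these eigenfunctions satisfy a hypercontractivity bound, and (c) the restriction of the kernel to the complementary ``high-frequency'' subspace concentrates, in empirical Gram-matrix form, around a multiple of the identity. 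Thus three things must be verified for our kernel: the spectrum, the scale of the eigenvalues, and the conditions (a)--(c) at $n = d q^{\lvn-1+\nu}$.

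\textbf{Step 1 (spectrum and eigenvalue scale).} Proposition~\ref{prop:diag_CK} gives the eigendecomposition: the eigenfunctions are the Fourier characters $Y_S$; those with $\gamma(S) > q$ lie in the kernel of $\Hop^{\sCK}$; and for $S \in \cE_\ell$ the eigenvalue is $\lambda_S = r(S)\,\xi_{q,\ell}/d$ with $r(S) = q+1-\gamma(S) \in \{1,\dots,q-\ell+1\}$. The genericity condition on $h$ enters here: it guarantees $\xi_{q,\ell}(h) = \Theta_q(q^{-\ell})$ for all $\ell \le \lvn + O(1)$ (which holds, e.g., when $h$ is analytic with nonvanishing Taylor coefficients) together with summability of the higher-order coefficients, so that $\lambda_S = \Theta_q\!\big(r(S)\, q^{-\ell}/d\big)$ and therefore $n\lambda_S = \Theta_q\!\big(r(S)\, q^{\,\lvn-1+\nu-\ell}\big)$.

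\textbf{Step 2 (locating the threshold; checking (a)--(b)).} From the last display: if $|S| < \lvn$ then $n\lambda_S \gtrsim q^{\nu} \to \infty$; if $|S| > \lvn$ then $n\lambda_S \lesssim q^{\,\lvn+\nu-|S|} \to 0$, and $\lambda_S = 0$ when $\gamma(S) > q$; and if $|S| = \lvn$ then $n\lambda_S = \Theta_q(r(S)\,q^{\nu-1})$, which is $\gg 1$ exactly when $r(S) \gg q^{1-\nu}$, i.e.\ $\gamma(S) \ll q(1-q^{-\nu})$, and $\ll 1$ when $\gamma(S) \gg q(1-q^{-\nu})$. Hence the span of the ``learned'' eigenfunctions is precisely the span of the $Y_S$ with either ($|S|<\lvn$, $S\in\cE_{|S|}$) or ($|S|=\lvn$, $\gamma(S) \le q(1-q^{-\nu})$), i.e.\ the range of $\proj_{\cE_{\leq \lvn,\nu}}$, so the general theorem yields $\hat f_\lambda = \proj_{\cE_{\leq \lvn,\nu}} f_\star + o_q(1)$. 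A count of subsets by cyclic diameter shows that the number of such $Y_S$ is $\Theta_q(d\,q^{\lvn-1}) = n\,q^{-\nu} = o(n)$, with a polynomial-in-$q$ margin --- this is (a); and (b) is immediate because the $Y_S$ with $|S| \le \lvn$ are monomials of degree at most $\lvn$ in i.i.d.\ Rademacher coordinates, with $\|Y_S\|_{L^\infty} = \|Y_S\|_{L^2} = 1$, so the Gram matrix of the learned features concentrates by matrix Bernstein.

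\textbf{Step 3 (the main obstacle: condition (c)).} The crux is the concentration of the high-frequency part of the kernel, which requires its operator norm $\max_{S \notin \cE_{\leq \lvn,\nu}}\lambda_S \asymp q^{1-\nu-\lvn}/d$ to be negligible relative to (its trace)$/n$, together with control of the associated variance quantities. The difficulty is structural: $H^{\sCK}$ has \emph{no spectral gap} near the scale $1/n$. Indeed, within level $\lvn$ the eigenvalues form an essentially arithmetic progression $r\,\xi_{q,\lvn}/d$, $r = 1,\dots,q-\lvn+1$, so the cutoff at $\gamma(S) \approx q(1-q^{-\nu})$ slices through the interior of a single eigenspace rather than falling between two, and the largest not-learned eigenvalue matches the smallest learned one up to a constant. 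The cutoff in the theorem is therefore soft: the $O(q^{1-\nu})$-wide shell of diameters near $q(1-q^{-\nu})$ spans only a vanishing fraction of level $\lvn$ (its number of sets is $\Theta_q(d\,q^{\lvn-1-\nu})$, with correspondingly small spectral weight), and its contribution is exactly what gets absorbed into the $o_q(1)$. Making this rigorous --- a sufficiently precise diameter count within level $\lvn$, bounds on the operator norm and the spectral weight of the boundary shell, and pinning down the sense (up to lower-order corrections) in which the cutoff $\gamma(S) \le q(1-q^{-\nu})$ is sharp --- is where the real work lies; the remainder is a routine application of the machinery of \cite{mei2021generalization}, together with the evaluation of the Gegenbauer coefficients $\xi_{q,\ell}$ under the genericity hypothesis.
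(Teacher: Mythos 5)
Your overall route is the paper's: diagonalize $H^{\sCK}$ via Proposition~\ref{prop:diag_CK}, read off $\lambda_S=r(S)\xi_{q,\ell}/d$, and feed this into the general KRR asymptotics of \cite{mei2021generalization}; Steps 1 and 2 match the paper's Step 1 and the discussion following Theorem~\ref{thm:CK_KRR_infinite_d}.

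Your Step 3, however, misplaces where the work is. The absence of a spectral gap at the shrinkage scale $\lambda_{\seff}/n$ is not an obstacle in this framework, because the rigorous conclusion (Theorem~\ref{thm:CK_KRR_infinite_d}) is not the hard projection but the soft shrinkage estimator $\hat f^{\seff}_{\lambda_{\seff}}$ of Eq.~\eref{eq:shrinkage_operator}; the hard-projection form in the informal statement is only a paraphrase of that shrinkage, and the paper never needs to control a ``boundary shell'' at $\gamma(S)\approx q(1-q^{-\nu})$. What the framework actually requires is a cutoff index $\evn$ with $\evn\le n^{1-\delta}$ and $\lambda_{\evn+1}\, n^{1+\delta}\le \Tr(\Hop_{d,>\evn})$, and the paper places this cutoff near the \emph{bottom} of the degree-$\lvn$ band (at $\lambda> q\xi_{q,\lvn+1}/d$, i.e.\ $r(S)>\alpha=\Theta_q(1)$), where the inequality follows from $\Tr(\Hop_{d,>\evn})=h_{q,>\lvn}(1)=\Theta_q(1)$ and $n\lambda_{\evn+1}=O(q^{\nu-1})=o_q(1)$ using $n\le dq^{\lvn-\delta}$. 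The genuine technical content is instead: (i) the diameter combinatorics $|\cE_{\ell,h}|=d\binom{h-2}{\ell-2}$, $\sum_{S\in\cE_\ell}r(S)=dB(\Cube^q;\ell)$ and $\sum_{S\in\cE_{\lvn+1}}r(S)^2=\Omega_q(dq^{\lvn+2})$, needed for the eigenvalue condition $\Tr(\Hop_{d,>\evn}^2)/\lambda_{d,\evn+1}^2\ge n^{1+\delta}$; (ii) the trace-ratio (``properly decaying eigenvalues'') conditions checked at an auxiliary, much deeper level $u$ corresponding to degree $\lvn'\ge 1/\delta+2\lvn+3$ from Assumption~\ref{ass:kernel_loc} --- this is why the genericity condition must control $\xi_{q,\ell}$ well beyond $\ell=\lvn+1$; and (iii) concentration of the diagonal of the truncated kernel, which your condition (c) worries about but which is \emph{trivial} for $H^{\sCK}$ since $H^{\sCK}(\bx,\bx)=h(1)$ is constant (it only becomes nontrivial for the global-pooling kernel, where Proposition~\ref{prop:concentration_diag_elements} is needed). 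So the ``real work'' you defer is not the work the proof actually does, and the sharpness of the cutoff $\gamma(S)\le q(1-q^{-\nu})$ is not something the paper establishes or needs.
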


See Appendix \ref{app:theorems_KRR_HD} for a rigorous statement. In words, when $d q^{\lvn -1}  \ll n \ll dq^{\lvn}$, KRR with $H^{\sCK}$ only learns a degree-$\lvn$ polynomial approximation to $f_\star$.

On the other hand, when considering the standard inner-product kernel $H^{\sFC}$ \eref{eqn:FC} we get:

\begin{theorem}[Generalization error of KRR with $H^{\sFC}$ in high-dimension (informal)]\label{thm:informal_H_FC} Let $f_\star \in L^2 (\Cube^d)$ and $\Tilde{h}:\R \to \R$ with some `genericity condition'. Then for $d^{\lvn} \ll n \ll d^{\lvn +1} $ and $\lambda =O(1)$, 
\begin{equation}\label{eq:informal_H_FC}
    \hat f_{\lambda} = \proj_{\leq \lvn} f_\star + o_d (1) \, ,
\end{equation}
    where $\proj_{\leq \lvn}$ is the projection on the subspace of degree-$\lvn$ polynomials.
\end{theorem}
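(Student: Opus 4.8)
The plan is to obtain \eref{eq:informal_H_FC} as an instance of the general high-dimensional kernel ridge regression theory of \cite{mei2021generalization}, exactly as in the companion statement for $H^{\sCK}$ proved in Appendix \ref{app:theorems_KRR_HD}; the only kernel-specific inputs are the eigendecomposition of $H^{\sFC}$ and the scaling of its eigenvalues. First, I would recall from \eref{eq:eigendecomposition_inner_prod} that $H^{\sFC}(\bx,\by)=h(\<\bx,\by\>/d)$ is diagonal in the Fourier basis $\{Y_S\}_{S\subseteq[d]}$: the degree-$\ell$ subspace $V_\ell:=\Span\{Y_S:|S|=\ell\}$ is an eigenspace with eigenvalue $\xi_{d,\ell}(h)$ and multiplicity $\binom{d}{\ell}$. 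Consequently the ``kept'' subspace $\bigoplus_{\ell\leq\lvn}V_\ell$ is exactly the range of $\proj_{\leq\lvn}$, and the claimed limit $\proj_{\leq\lvn}f_\star$ is the projection of $f_\star$ onto the top $\lvn+1$ groups of eigenvalues.

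Second, I would establish the eigenvalue scaling $\xi_{d,\ell}(h)=\Theta_d(d^{-\ell})$ for each fixed $\ell\leq\lvn+1$. Using the integral representation \eref{eq:h_gegen_coeffs} and the fact that $\<\bu,\be\>/d$ concentrates at scale $d^{-1/2}$, a Taylor expansion of $h$ at $0$ shows that the projection of $h(\<\bu,\be\>/d)$ onto the $\ell$-th Gegenbauer level is governed by $h^{(\ell)}(0)$, yielding $\xi_{d,\ell}(h)=c_\ell(h)\,d^{-\ell}(1+o_d(1))$ with $c_\ell(h)\propto h^{(\ell)}(0)$ up to a positive combinatorial factor (this is the standard computation underlying the ``polynomial barrier''; see \cite{mei2021generalization} and references therein). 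The ``genericity condition'' on $\Tilde h$ is precisely that $c_\ell(\Tilde h)\ne0$ for all $\ell\leq\lvn+1$ --- excluding, e.g., $\Tilde h$ a polynomial of degree $<\lvn+1$ or $\Tilde h$ with a vanishing low-order derivative at $0$ --- together with the mild tail-decay condition on the remaining eigenvalues required by the framework (verified exactly as in \cite{mei2021generalization}). Since $\dim V_\ell=\binom{d}{\ell}\asymp d^\ell$, each eigenspace carries total spectral mass $\binom{d}{\ell}\xi_{d,\ell}(h)=\Theta_d(1)$, and consecutive levels are separated by a multiplicative factor $\asymp d$.

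Third, I would verify the hypotheses of the main KRR theorem of \cite{mei2021generalization} and read off the conclusion. Writing $n=d^{\lvn+\delta}$ with $\delta\in(0,1)$, the scaling above gives $\xi_{d,\ell}(\Tilde h)\asymp d^{-\ell}\gg d^{-\lvn-\delta}=1/n$ for $\ell\leq\lvn$, and $\xi_{d,\ell}(\Tilde h)\asymp d^{-\ell}\leq d^{-\lvn-1}\ll1/n$ for $\ell\geq\lvn+1$: this is the required spectral gap around the effective regularization scale $1/n$, and the rank of the kept part is $\sum_{\ell\leq\lvn}\binom d\ell\asymp d^{\lvn}\ll n$; moreover $n\,\xi_{d,\lvn}(\Tilde h)\asymp d^{\delta}\to\infty$ dominates any $\lambda=O(1)$, so $\lambda=0$ is admissible. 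The remaining hypotheses are (i) hypercontractivity of the eigenfunctions, which holds on the hypercube since the $Y_S$ are products of independent Rademacher variables and obey the Bonami--Beckner inequality with dimension-free constants; (ii) concentration of the empirical kernel matrix around its population low-frequency part, which follows from the matrix-concentration lemmas of the framework given (i) and the spectral gap; and (iii) boundedness of $\|f_\star\|_{L^2}$ and of the noise level $\sigma_\eps$. The theorem then yields $\|\hat f_\lambda-\proj_{\leq\lvn}f_\star\|_{L^2}\to0$ in probability over the samples and the noise, with the variance contribution of order $\sigma_\eps^2\cdot d^{\lvn}/n=\sigma_\eps^2 d^{-\delta}=o_d(1)$; this is \eref{eq:informal_H_FC}.

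The main obstacle is not any individual step but the quantitative bookkeeping required to invoke \cite{mei2021generalization}: one must track the polylogarithmic slack in the spectral-gap and kernel-concentration conditions and check that it is absorbed by the strict separations $d^{\lvn}\ll n\ll d^{\lvn+1}$, and one must state the genericity condition on $\Tilde h$ precisely enough that $\xi_{d,\ell}(\Tilde h)=\Theta_d(d^{-\ell})$ holds with a nonvanishing constant uniformly over $\ell\leq\lvn+1$ while the higher-degree eigenvalues remain controlled (the degenerate cases, where some Gegenbauer coefficient is anomalously small, require either excluding $\Tilde h$ or a separate finer argument). For the formal version one also specifies the rate at which $o_d(1)$ vanishes and the probability over which the bound holds, following the template already used for $H^{\sCK}$ in Appendix \ref{app:theorems_KRR_HD}.
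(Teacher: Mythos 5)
Your proposal is correct and follows essentially the same route as the paper: the paper itself defers the proof of this theorem to \cite{ghorbani2021linearized,mei2021generalization}, and the argument you outline (Fourier diagonalization of the inner-product kernel, eigenvalue scaling $\xi_{d,\ell}=\Theta_d(d^{-\ell})$ under a genericity condition on the Gegenbauer coefficients, the spectral gap around $1/n$, hypercontractivity via Bonami--Beckner, and invocation of the general KRR theorem) is precisely the template the paper instantiates for the convolutional analogue in Appendix \ref{app:proof_HD_CK}. No gaps to flag.
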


This theorem was proved in \cite{ghorbani2021linearized,mei2021generalization}. Notice that Eq.~\eref{eq:informal_H_FC} does not depend on the structure of $f_\star$. Hence, when $f_\star \in L^2(\Cube^d, \loc_q)$, Theorems \ref{thm:informal_H_CK} and \ref{thm:informal_H_FC} shows a clear statistical advantage of $H^{\sCK}$ over $H^{\sFC}$ when $q \ll d$ (and therefore of one-layer CNNs over fully-connected neural networks in the kernel regime).

\subsection{Local average pooling and downsampling}
\label{sec:localPooling}

In many applications such as object recognition, we expect the target function to depend mildly on the absolute spatial position of an object and to be stable under small shifts of the input. To take this local invariance into account, convolution layers are often followed by a pooling operation. Here we consider local average pooling on a segment of length $\omega$ and obtain the kernel
\begin{equation}\label{eq:CK_AP_def}
H^{\sCK}_\omega  (\bx , \by ) =  \frac{1}{d\omega} \sum_{k \in [d]} \sum_{s, s' \in [\omega]} h \left( \< \bx_{(k + s)} , \by_{(k+s')} \> / q \right) \, .
\end{equation}
Define $\cS_\ell = \{ S \subseteq [q] : |S| = \ell \}$ as the collection of sets of size $\ell$. We further define an equivalence relation $\sim$ on $\cS_\ell$: $S \sim S'$ if $S'$ is a translated subset of $S$ in $[q]$ (without cyclic convention). We denote $\cC_{\ell}$ the quotient set of $\cA_\ell$ under the equivalence relation $\sim$. 

\begin{proposition}[Eigendecomposition of $H^{\sCK}_{\omega}$]\label{prop:diag_CK_AP} Let $H^{\sCK}_{\omega}$ be a convolutional kernel with local average pooling as defined in Eq.~\eref{eq:CK_AP_def}. Then $H^{\sCK}_{\omega}$ admits the following eigendecomposition: 
\begin{equation}\label{eq:CK_AP_diag}
\begin{aligned}
 H^{\sCK}_{\omega} ( \bx , \by ) =&~ \omega \xi_{q,0}+ \sum_{\ell = 1}^{\q}  \sum_{S \in \cC_\ell} \sum_{j \in [d]} \frac{\kappa_j r(S)\xi_{q,\ell}}{d}  \cdot \psi_{j,S} ( \bx ) \psi_{j,S} ( \by ) \, ,
\end{aligned}
\end{equation}
where (denoting $k+S$ the translated set $S$ by $k$ positions with cyclic convention in $[d]$)
\begin{align}
 \kappa_j =& ~1 + 2 \sum_{k=1}^{\omega - 1} (1 - k/\omega ) \cos \left( \frac{2\pi j k}{d} \right)\, , \qquad 
 \psi_{j,S} (\bx) =  \frac{1}{\sqrt{d}} \sum_{k = 1}^d e^{\frac{2i\pi j k}{d}} Y_{k+S} (\bx )\, .
\end{align}
\end{proposition}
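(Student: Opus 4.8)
The plan is to expand $H^{\sCK}_\omega$ by applying the inner-product eigendecomposition \eref{eq:eigendecomposition_inner_prod} to each term $h(\<\bx_{(k+s)},\by_{(k+s')}\>/q)$, and then reorganize the double sum over shifts into a Fourier basis over $\Z/d\Z$. Concretely, writing $h(\<\bx_{(m)},\by_{(m')}\>/q) = \sum_{\ell=0}^q \xi_{q,\ell}\sum_{T\subseteq[q],|T|=\ell} Y_{m+T}(\bx)Y_{m'+T}(\by)$ (where $m+T$ denotes the shift of the patch-local set $T$ into global coordinates), I would substitute this into \eref{eq:CK_AP_def} to get
\begin{equation*}
H^{\sCK}_\omega(\bx,\by) = \frac{1}{d\omega}\sum_{\ell=0}^q \xi_{q,\ell}\sum_{|T|=\ell}\sum_{k\in[d]}\sum_{s,s'\in[\omega]} Y_{k+s+T}(\bx)\,Y_{k+s'+T}(\by)\,.
\end{equation*}
The key observation is that, for fixed $T$, the set of shifts $\{k+s+T : k\in[d], s\in[\omega]\}$ ranges over all cyclic translates of $T$ in $[d]$ (each hit $\omega$ times as $(k,s)$ varies with $k+s$ fixed), so this is naturally a convolution structure over $\Z/d\Z$ that diagonalizes in the characters $e^{2\pi i jk/d}$.

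The second step is to carry out this diagonalization explicitly. Reindex by $u=k+s$ and $u'=k+s'$; the inner sum becomes $\sum_{u,u'} c(u-u') Y_{u+T}(\bx)Y_{u'+T}(\by)$ for an appropriate triangular weight $c$ supported on $|u-u'|<\omega$ arising from counting pairs $(s,s')$ with $s-s'$ fixed — this is exactly the (un-normalized) triangular/Fejér kernel whose discrete Fourier transform is $\kappa_j = 1 + 2\sum_{k=1}^{\omega-1}(1-k/\omega)\cos(2\pi jk/d)$ (up to the overall $\omega$ factor already pulled out). Introducing $\psi_{j,T}(\bx) = \frac{1}{\sqrt d}\sum_{u\in[d]} e^{2\pi i ju/d} Y_{u+T}(\bx)$, a Plancherel-type identity on $\Z/d\Z$ turns $\sum_{u,u'} c(u-u')Y_{u+T}(\bx)Y_{u'+T}(\by)$ into $\sum_{j\in[d]} \kappa_j\,\psi_{j,T}(\bx)\overline{\psi_{j,T}(\by)}$ (reality of the whole expression lets one drop conjugates after pairing $j$ with $-j$, matching the real cosine form of $\kappa_j$). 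I would then verify the $\{\psi_{j,S}\}$ are orthonormal, using $\<Y_{k+S},Y_{k'+S'}\>_{L^2}=\ind_{k+S=k'+S'}$ and the fact that, for $S\in\cS_\ell$ with $q\le d/2$, distinct cyclic shifts of $S$ in $[d]$ are distinct subsets; orthogonality in $j$ is the usual character orthogonality.

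The last bookkeeping step is to pass from indexing by all $T\subseteq[q]$ to indexing by the quotient $\cC_\ell$: since $\psi_{j,T}$ and $\psi_{j,T'}$ for $T\sim T'$ (a non-cyclic translate within $[q]$) differ only by a phase $e^{2\pi i j \delta/d}$ and correspond to the same cyclic-shift orbit in $[d]$, summing over $T\in\cS_\ell$ overcounts each orbit by its size; reconciling this with the normalization $\frac1d$ and the multiplicity factor $r(S)=q+1-\gamma(S)$ (the number of patch-windows of size $q$ containing a given cyclic set $S$ of diameter $\gamma(S)$, exactly as in Proposition \ref{prop:diag_CK}) yields the stated coefficient $\kappa_j r(S)\xi_{q,\ell}/d$. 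I expect the main obstacle to be precisely this combinatorial reconciliation — tracking the interplay between the three different "shift" operations (the pooling shifts $s,s'\in[\omega]$, the in-patch translates defining $\cC_\ell$, and the global cyclic translates in $[d]$) and making sure the multiplicities $\omega$, $r(S)$, and the orbit sizes are not double-counted — together with carefully checking the orthonormality of $\{\psi_{j,S}\}$ near the boundary case $q$ close to $d/2$. The analytic content (identifying the Fejér kernel and its transform) is routine; the care is entirely in the indexing.
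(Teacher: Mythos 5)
Your proposal is correct and is essentially the paper's argument: expand each $h(\<\bx_{(k+s)},\by_{(k+s')}\>/q)$ in the Fourier basis, observe that the kernel acts block-wise on each translation orbit $\{Y_{k+S}\}_{k\in[d]}$ of a representative $S\in\cC_\ell$ (with the multiplicity $r(S)$ from in-patch translates), and diagonalize the resulting circulant structure with triangular profile $(1-d(i,j)/\omega)_+$ by the discrete Fourier characters, yielding $\kappa_j$ and $\psi_{j,S}$. The only organizational difference is that the paper first proves the general block-circulant statement for $H^{\sCK}_{\omega,\Delta}$ and then specializes to $\Delta=1$, whereas you carry out the $\Delta=1$ convolution/Plancherel computation directly; the combinatorial bookkeeping you flag (the $(s,s')$ count giving the Fej\'er weights, $r(S)$, and the use of $q\le d/2$ to identify cyclic and in-patch translation classes) is exactly what the paper's proof handles.
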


First notice that, as long as $\mathsf{gcd} (\omega , d) =1 $, the RKHS associated to $H^{\sCK}$ contains the same set of functions as the RKHS of $H^{\sCK}$, i.e., all local functions $L^2 ( \Cube^d , \loc_q)$. (There are $\mathsf{gcd} (\omega , d) - 1 $ number of zero weights: $\kappa_j = 0$ for all $j\in [d-1]$ such that $d$ is a divisor of $j \omega$. See Appendix \ref{sec:pooling_op} for details.) However $H^{\sCK}$ will penalize different frequency components of the functions differently. Denote $f_j ( \bx)$ the $j$-th component of the discrete Fourier transform of the function, i.e., $f_j ( \bx) = \frac{1}{\sqrt{d}} \sum_{k \in [d]} \rho_j^k f ( t_k \cdot \bx)$ where $\rho_j = e^{2i\pi j /d}$ and $t_k \cdot \bx = (x_{k+1} , \ldots, x_{d}, x_{1} , \ldots , x_{k})$ is the cyclic shift by $k$ pixels. Then $H^{\sCK}$ reweights the eigenspaces associated with $f_j (\bx)$ by a factor $\kappa_j$, promoting low-frequency components ($\kappa_j >1$) and penalizing the high-frequencies ($\kappa_j <1$). In words, \textit{pooling biases the learning towards low-frequency functions}, which are stable by small shifts.

Let us focus on two special choices here: the pooling parameter $\omega = 1$ and $\omega = d$. When $\omega = 1$, $H^{\sCK}_{\omega}$ reduces to $H^{\sCK}$ ($\kappa_j = 1$ for all $j \in [d]$) which does not bias towards either low or high frequency components. When $\omega = d$, we denote such kernel $H^{\sCK}_{\omega = d}$ by $H^{\sCK}_{\sGP}$ which corresponds to global average pooling. In this case, we have $\kappa_d = d$ and $\kappa_j = 0$ for $j < d$ which enforces exact invariance under the group of cyclic translations. More precisely, $H^{\sCK}_{\sGP}$ has RKHS that contains all cyclic q-local functions $f(\bx) = \sum_{k \in [d]} g(\bx_{(k)})\in L^2 ( \Cube^d, \Conv)$ (c.f. Eq. (\ref{eqn:cyc_local_func})).

We obtain a bound on the test error of KRR with $H^{\sCK}_{\omega}$ similar to Theorem \ref{thm:CK_KRR_fixed_d}, but with $d$ replaced by an effective dimension $d^{\seff}$.

\begin{theorem}[Generalization of KRR with average pooling (fixed $d,\q$)]\label{thm:CK_AP_KRR_fixed_d}
Assume that $h : \R \to \R$ has $\xi_{q,0} = 0$ and satisfies (A1). Further assume $(A2')$ that $\| (H^{\sCK}_{\omega}/\omega)^{-\beta /2} f_\star \|_{L^2} \leq B$. Define $d^{\seff} = \sum_{j \in [d]: \kappa_j > 0} (\kappa_j/\omega)^{1/\alpha}$. Then there exists $C_1, C_2, C_3 > 0 $ constants independent of $d$, such that for $n \geq C_1 \max ( \| f_\star \|^2_{L^\infty} , d_{\seff} )$ and setting $\lambda_* = C_2 (d_{\seff}/n)^{\frac{\alpha}{\alpha \beta +1}}$, we get 
 \begin{align}\label{eq:CK_AP_bound_fixed_d}
\E_{\beps} \big\{ R ( f_\star , \hat f_{\lambda_\star} ) \big\} \leq C_3 \left( \frac{d_{\seff}}{n} \right)^{\frac{\alpha \beta}{\alpha \beta +1 }} \, .
\end{align} 
\end{theorem}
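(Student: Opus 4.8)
The plan is to reduce the statement to the classical oracle inequality for kernel ridge regression under capacity and source conditions (the same black box used to prove Theorem~\ref{thm:CK_KRR_fixed_d}), applied now to the \emph{normalized} operator $\bar H := H^{\sCK}_\omega/\omega$. Two inputs are needed: a capacity bound of the form $\cN(\bar H,\lambda)\le C\, d_{\seff}\,\lambda^{-1/\alpha}$ with the \emph{same} exponent $\alpha$ as in (A1), and a source bound, which is exactly the hypothesis $(A2')$. Given these, choosing $\lambda_\star\asymp (d_{\seff}/n)^{\alpha/(\alpha\beta+1)}$ to balance the bias term $\lesssim B^2\lambda^{\beta}$ against the variance term $\lesssim \sigma_\eps^2\,\cN(\bar H,\lambda)/n$ yields $\E_\beps R(f_\star,\hat f_{\lambda_\star})\lesssim (d_{\seff}/n)^{\alpha\beta/(\alpha\beta+1)}$, which is the claim. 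The hypotheses $n\ge C_1\|f_\star\|_{L^\infty}^2$ and $n\ge C_1 d_{\seff}$ are what the oracle inequality requires to control concentration of the empirical kernel operator and to keep $\lambda_\star$ in the admissible range, and the assumption $\xi_{q,0}=0$ removes the constant eigenfunction so that $\bar H$ has a clean spectrum.

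The crux is the capacity bound, and this is where the spectral decomposition of Proposition~\ref{prop:diag_CK_AP} enters. By that proposition (with $\xi_{q,0}=0$), $\bar H$ has eigenvalues $\kappa_j r(S)\xi_{q,\ell}/(d\omega)$ indexed by $\ell\in[q]$, $S\in\cC_\ell$, and $j\in[d]$ with $\kappa_j>0$, with eigenfunctions $\psi_{j,S}$; recall $\kappa_j\ge 0$ (a Fej\'er-type kernel), so there are no negative eigenvalues. Fix $j$ with $\kappa_j>0$ and set $\mu_j:=\lambda d\omega/\kappa_j$. Since $t\mapsto t/(t+\mu_j)$ is increasing and $1\le r(S)\le q$, the contribution of the index $j$ to $\cN(\bar H,\lambda)$ is
\begin{align*}
\sum_{\ell=1}^q\sum_{S\in\cC_\ell}\frac{r(S)\xi_{q,\ell}}{r(S)\xi_{q,\ell}+\mu_j}
&\;\le\;\sum_{\ell=1}^q |\cC_\ell|\,\frac{q\,\xi_{q,\ell}}{q\,\xi_{q,\ell}+\mu_j}
\;\le\;\sum_{\ell=1}^q \binom{q}{\ell}\,\frac{\xi_{q,\ell}}{\xi_{q,\ell}+\mu_j/q}\\
&\;=\;\cN(h,\mu_j/q)\;\le\;C_h\,(q/\mu_j)^{1/\alpha},
\end{align*}
where we used $|\cC_\ell|\le\binom{q}{\ell}$, the fact that the $\xi_{q,\ell}$ with multiplicities $\binom{q}{\ell}$ are exactly the eigenvalues of the $q$-dimensional operator $h$, and (A1). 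Summing over $j$ with $\kappa_j>0$, substituting $\mu_j=\lambda d\omega/\kappa_j$, and using $\sum_{j:\kappa_j>0}\kappa_j^{1/\alpha}=\omega^{1/\alpha}d_{\seff}$ together with $q\le d$, one gets $\cN(\bar H,\lambda)\le C_h (q/d)^{1/\alpha}d_{\seff}\,\lambda^{-1/\alpha}\le C_h\, d_{\seff}\,\lambda^{-1/\alpha}$, i.e. $\bar H$ satisfies (A1) with the same exponent $\alpha$ and an $O(d_{\seff})$ constant.

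The remaining steps mirror the proof of Theorem~\ref{thm:CK_KRR_fixed_d} essentially verbatim: plug the capacity bound for $\bar H$ (exponent $\alpha$, constant $\asymp C_h d_{\seff}$) and the source bound $(A2')$ (exponent $\beta$, constant $B$) into the classical KRR risk bound, optimize over $\lambda$, and simplify. I expect the only genuinely non-mechanical point to be the capacity reduction above, specifically decoupling the ``pooling/translation'' index $j$ (which produces $\sum_{j:\kappa_j>0}(\kappa_j/\omega)^{1/\alpha}=d_{\seff}$) from the ``within-patch'' index $(\ell,S)$ (controlled by $\cN(h,\cdot)$, hence by (A1)), while keeping the constant sharp at $O(d_{\seff})$; the bound $r(S)\le q$ combined with $q\le d$ is exactly what separates the two effects without loss.
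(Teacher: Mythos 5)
Your proposal is correct and follows essentially the same route as the paper: decompose $\cN(H^{\sCK}_\omega,\lambda)$ over the frequency index $j$ using Proposition~\ref{prop:diag_CK_AP}, reduce each summand to $\cN(h,\cdot)$ at a $\kappa_j$-rescaled regularization, apply (A1), sum to get the $d_{\seff}$ prefactor, and feed this together with $(A2')$ into the classical KRR oracle bound as in Theorem~\ref{thm:CK_KRR_fixed_d}. The only (harmless) difference is that the paper uses the inequality $\frac{ar}{ar+\lambda}\le r\,\frac{a}{a+\lambda}$ together with the exact identity $\sum_{S\in\cC_\ell}r(S)=\binom{q}{\ell}$ to land exactly on $\cN(h,d\lambda/\kappa_j)$, whereas you use the cruder bounds $r(S)\le q$ and $|\cC_\ell|\le\binom{q}{\ell}$ and absorb the resulting extra factor $(q/d)^{1/\alpha}\le 1$.
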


By Jensen's inequality, we have $d^{\seff} \leq d/ \omega^{1/\alpha}$. In particular, for global pooling, $d^{\seff} = 1$ and the bound \eref{eq:CK_AP_bound_fixed_d} does not depend on $d$ at all. Adding average pooling improve by a factor $\omega^{1/\alpha}$ the upper bound on the sample complexity for fitting low-frequency functions. Can we confirm this statistical advantage using the predictions for KRR in high dimension? Consider first the case of global pooling:

\begin{theorem}[Generalization of KRR with $H^{\sCK}_{\sGP}$ in high-dimension (informal)]\label{thm:informal_H_CK_GP} Let $f_\star \in L^2 (\Cube^d, \Conv )$ and $h:\R \to \R$ verifying some `genericity condition'. Then for $n = q^{\lvn -1 + \nu} $ with $0<\nu <1$, and $\lambda =O(1)$, we have ($\proj_{\cE_{\leq \lvn,\nu}}$ is defined as in Theorem \ref{thm:informal_H_CK})
\begin{equation}
    \hat f_{\lambda} = \proj_{\cE_{\leq \lvn,\nu}} f_\star + o_q(1) \, . 
\end{equation}
\end{theorem}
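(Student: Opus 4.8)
The plan is to invoke the general high-dimensional KRR theorem of~\cite{mei2021generalization} (recalled in Appendix~\ref{sec:KRR_HD}), following the same template as the proof of Theorem~\ref{thm:informal_H_CK}; the only substantive change will be the eigenvalue bookkeeping introduced by global pooling. Recall that the general result yields the conclusion ``$\hat f_\lambda = \proj_{\cH} f_\star + o(1)$ in $L^2$'' whenever the eigenfunctions of the kernel split into a high block $\cH$ and a low block $\cL$ so that (i) $n\lambda_j$ exceeds $\lambda + \sum_{k\in\cL}\lambda_k$ by a polynomially large factor for each $j\in\cH$, (ii) $n\lambda_j$ is polynomially smaller than that quantity for each $j\in\cL$, and (iii) the span of $\cH$ together with the pairwise products of its eigenfunctions satisfies a hypercontractive concentration estimate; in addition the trace of the $\cL$-block must be controlled so that it contributes only a small self-induced ridge. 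So the entire proof reduces to exhibiting the right split and checking these conditions for $H^{\sCK}_{\sGP}$.

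First I would specialize Proposition~\ref{prop:diag_CK_AP} to $\omega=d$: then $\kappa_d=d$ and $\kappa_j=0$ for $j<d$, so the nonzero eigenfunctions are the cyclically symmetrized monomials $\psi_{d,S}(\bx)=d^{-1/2}\sum_{k\in[d]}Y_{k+S}(\bx)$ indexed by translation classes $S\in\cC_\ell$ of size-$\ell$ subsets of $[q]$, $\ell=0,\dots,q$, with eigenvalue $\lambda_S=r(S)\,\xi_{q,\ell}$ and $r(S)=q+1-\gamma(S)\in[1,q]$; in particular the RKHS sits inside $L^2(\Cube^d,\Conv)$, which is exactly where $f_\star$ lives, so no mass is lost. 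Under the genericity/decay hypothesis on $h$ one has $\xi_{q,\ell}\asymp q^{-\ell}$, hence $\lambda_S\asymp q^{-\ell}r(S)$, and I would also record the combinatorial counts $|\cC_\ell|=\binom{q-1}{\ell-1}$ and $\#\{S\in\cC_\ell:\gamma(S)=\gamma\}=\binom{\gamma-2}{\ell-2}$. With $n=q^{\lvn-1+\nu}$, I would then take $\cH$ to be the span of the $\psi_{d,S}$ with $|S|<\lvn$, together with those with $|S|=\lvn$ and $\gamma(S)\le q(1-q^{-\nu})$, and $\cL$ the complement inside $L^2(\Cube^d,\Conv)$. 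A direct summation using the counts above gives $\sum_{k\in\cL}\lambda_k=\Theta_q(1)$, so with $\lambda=O(1)$ the effective ridge is $\Theta_q(1)$; for $|S|=\lvn$ one has $n\lambda_S\asymp q^{\nu-1}r(S)$, which is polynomially large exactly when $\gamma(S)$ lies below $q(1-q^{-\nu})$ and polynomially small exactly when it lies above, while for $|S|<\lvn$ one has $n\lambda_S\gtrsim q^{\nu}$ and for $|S|>\lvn$ one has $n\lambda_S\lesssim q^{\nu-1}$. Thus (i) and (ii) hold away from a narrow critical window around $\gamma(S)=q(1-q^{-\nu})$, handled exactly as in Theorem~\ref{thm:informal_H_CK} (the informal statement absorbs that window into the $o_q(1)$; its precise treatment is in Appendix~\ref{app:theorems_KRR_HD}), and the same counts give $\dim\cH\asymp q^{\lvn-1}$.

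The remaining ingredient, condition (iii), is where this case genuinely departs from Theorem~\ref{thm:informal_H_CK} and from the plain-hypercube analysis, and it is the step I expect to be the main obstacle. In those settings the eigenfunctions are single Fourier characters $Y_S$ with $\|Y_S\|_\infty=1$, so the empirical kernel matrix on $\cH$ is handled by crude boundedness; here, by contrast, $\psi_{d,S}$ is an average of $d$ characters and $\|\psi_{d,S}\|_\infty=\sqrt d\to\infty$, so boundedness is useless and one must work in the joint limit $q,d\to\infty$. The observation that rescues the argument is that $\psi_{d,S}$ is still a degree-$|S|$ multilinear polynomial on $\Cube^d$ and its pairwise products are degree at most $2\lvn$; hypercontractivity on the hypercube therefore bounds all of their moments by a dimension-free, degree-dependent constant, which is precisely the input the concentration lemma of~\cite{mei2021generalization} requires, and it goes through once $n\gg\dim\cH\cdot\mathrm{polylog}(q)$, i.e.\ $q^{\lvn-1+\nu}\gg q^{\lvn-1}\mathrm{polylog}(q)$. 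The $\cL$-block is dispatched through its small ($\Theta_q(1)$) trace, so the high-degree eigenfunctions there never need to be examined, and the noise term contributes $O(\sigma_\eps^2\dim\cH/n)=O(q^{-\nu})=o(1)$. Assembling these pieces and feeding them into the general theorem yields $\hat f_\lambda=\proj_{\cE_{\le\lvn,\nu}}f_\star+o_q(1)$; the delicate part throughout is making the delocalized eigenfunctions $\psi_{d,S}$ cooperate with the concentration machinery uniformly over $S\in\cC_\ell$.
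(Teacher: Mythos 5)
Your overall route is the paper's route: specialize Proposition~\ref{prop:diag_CK_AP} to $\omega=d$, read off the eigenvalues $\xi_{q,\ell}r(S)$ and the counts $|\cC_\ell|=\binom{q-1}{\ell-1}$, and feed them into the general KRR framework of \cite{mei2021generalization} exactly as in the proof of Theorem~\ref{thm:informal_H_CK}; your eigenvalue bookkeeping (effective ridge $\Theta_q(1)$, thresholds at $\gamma(S)\approx q(1-q^{-\nu})$, $\dim\cH\asymp q^{\lvn-1}$) matches Steps~1, 3 and 5 of the paper's proof of Theorem~\ref{thm:CK_GP_KRR_infinite_d}.

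There is, however, a genuine gap in how you dispose of the low-eigenvalue block. You claim the $\cL$-block ``is dispatched through its small ($\Theta_q(1)$) trace, so the high-degree eigenfunctions there never need to be examined,'' and you locate the new difficulty entirely in the top block via $\|\psi_{d,S}\|_\infty=\sqrt d$. This gets the picture backwards. For the top block, hypercontractivity of degree-$\le\lvn'$ polynomials (Beckner--Bonami--Gross, Lemma~\ref{lem:hypercontractivity_hypercube}) works verbatim as in the CK proof, delocalization notwithstanding. The genuinely new requirement of the framework is the \emph{concentration of the diagonal elements of the truncated kernel}: one needs $\sup_{i\le n}|H_{d,>\evn}(\bx_i,\bx_i)-\E_\bx H_{d,>\evn}(\bx,\bx)| = o_{d,\P}(1)\cdot\Tr(\Hop_{d,>\evn})$ and the analogous statement for $\E_{\bx'}[H_{d,>\evn}(\bx_i,\bx')^2]$. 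For $H^{\sCK}$ the diagonal is constant and this is free, but for $H^{\sCK}_{\sGP}$ the diagonal is $\frac1d\sum_{k,k'}h(\<\bx_{(k)},\bx_{(k')}\>/q)$, which fluctuates with $\bx$ through cross-patch inner products; controlling only the trace of the block does not suffice. Your proposed fix cannot close this either: the truncated kernel aggregates eigenfunctions of degree up to $q$, and the hypercontractivity constant $(q-1)^{\ell}$ is not dimension-free along that range. The paper handles this with Proposition~\ref{prop:concentration_diag_elements}, which splits the tail at a level $v$, treats the moderate degrees by a dedicated second-moment computation on $\Upsilon^{(q)}_\ell$ and $\Xi^{(q)}_\ell$ (Lemma~\ref{lem:gegenbauer_SW}), and treats degrees beyond $v$ by a Taylor expansion of $h_{q,>v}$ combined with a Hanson--Wright bound on $\<\bx_{(k)},\bx_{(l)}\>$ for $k\neq l$ (Lemma~\ref{lem:HansonWright}); this is exactly why the rigorous statement (Theorem~\ref{thm:CK_GP_KRR_infinite_d}) needs the extra differentiability Assumption~\ref{ass:diff_kernel}, which your sketch never invokes. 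Without this step your argument verifies all the conditions that are common with the CK case but omits the one condition that actually distinguishes global pooling.
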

Hence, global average pooling results in an improvement by a factor $d$ in statistical efficiency when fitting cyclic local functions, compared to $H^{\sCK}$. This improvement was already noticed in \cite{mei2021learning,bietti2021sample} but in the case of $q = d$ (fully connected neural networks).  

For $\omega < d$, a direct application of the theorems in \cite{mei2021generalization} is more challenging because of the mixing of eigenvalues. In this case, a modification of \cite{mei2021generalization}, where eigenvalues are not necessary ordered anymore would apply. However, for simplicity, we present in Appendix \ref{app:theorems_KRR_HD} a simplified kernel with non-overlapping local pooling which we believe captures the statistical behavior of local pooling. In this case, we show that Theorem \ref{thm:informal_H_CK_GP} holds with $n = (d/\omega) \cdot \q^{\lvn - 1 + \nu } $, which interpolates between Theorem \ref{thm:informal_H_CK} ($\omega = 1$) and Theorem \ref{thm:informal_H_CK_GP} ($\omega =d$).

\begin{paragraph}{Downsampling:} Often pooling is associated with a downsampling operation, which subsample one every $\Delta$ output coordinates. In Appendix \ref{sec:downsampling_op}, we characterize the eigendecomposition of $H^{\sCK}_{\omega,\Delta}$ (Proposition \ref{prop:diag_CK_AP_DS}) and prove for the popular choice $\omega =\Delta$, that downsampling does not modify the cyclic invariant subspace $j = d$ (Proposition \ref{prop:impact_down}). More generally, we conjecture and check numerically that downsampling with $\Delta \leq \omega$ leaves the low-frequency eigenspaces approximately unchanged. In particular, the statistical complexity of learning low-frequency functions is not modified by downsampling operation in the one-layer case (while downsampling is potentially beneficial in further layers).
\end{paragraph}

\section{Numerical simulations}
\label{sec:simu}
\vspace{-8pt}

\begin{figure}[t]
\begin{center}
    \includegraphics[width=16.5cm]{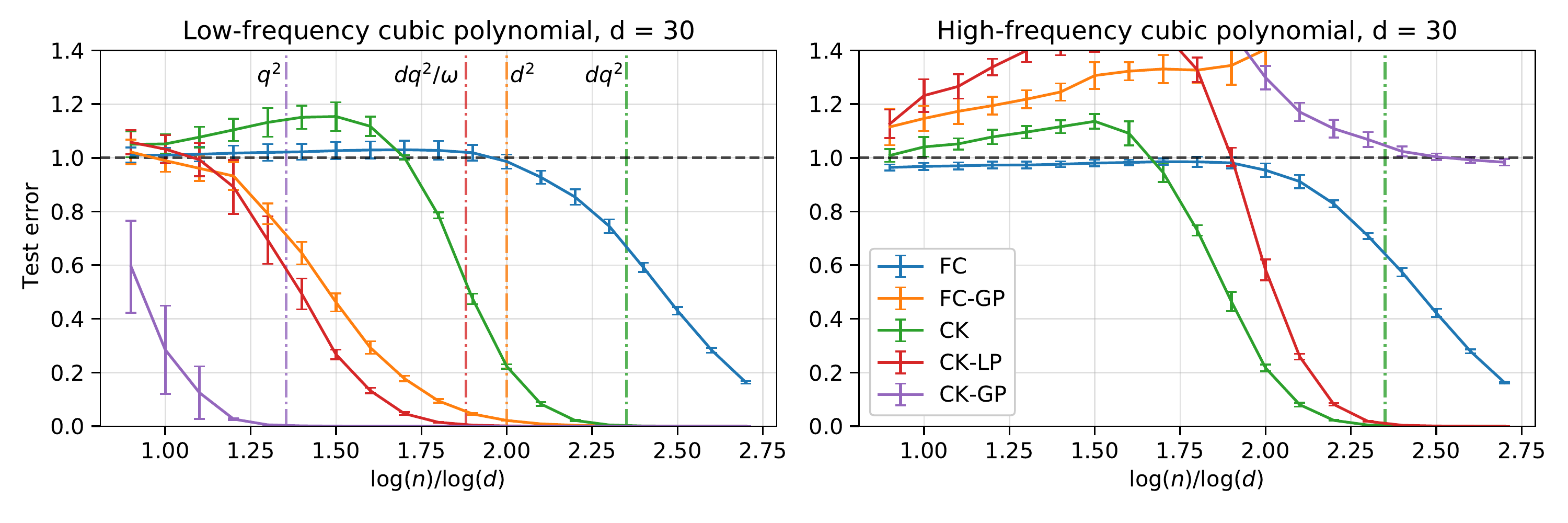}
\end{center}
\vspace{-20pt}

\caption{Learning low-frequency (left) and high-frequency (right) cubic polynomials over the hypercube $d=30$, using KRR with $H^{\sFC}$ (${\rm FC}$), $H^{\sFC}_{\sGP}$ (${\rm FC}$-${\rm GP}$), $H^{\sCK}$ (${\rm CK}$), $H^{\sCK}_{\omega}$ (${\rm CK}$-${\rm LP}$) and $H^{\sCK}_{\sGP}$ (${\rm CK}$-${\rm GP}$), and regularization parameter $\lambda = 0^+$. We report the average and the standard deviation of the test error over $5$ realizations, against the sample size $n$.  \label{fig:poly_comp}
\vspace{-8pt}}
\end{figure}

In order to check our theoretical predictions, we perform a simple numerical experiment on simulated data. We take $\bx \sim \Unif ( \Cube^d )$ with $d = 30$, and consider two target functions:
\begin{align}
\vspace{-4pt}
    f_{\sLF ,  3} ( \bx) = \frac{1}{\sqrt{d}} \sum_{i \in [d]} x_{i}x_{i+1}x_{i+2}\, , \qquad f_{\sHF ,  3} ( \bx) = \frac{1}{\sqrt{d}} \sum_{i \in [d]} (-1)^i \cdot x_{i}x_{i+1}x_{i+2}\, .
    \vspace{-4pt}
\end{align}
Here $f_{\sLF,3}$ is a cyclic-invariant local polynomial ($f_{\sLF,3}$ is `low-frequency'). The function $f_{\sHF,3}$ is a high-frequency local polynomial, and is orthogonal to the space of cyclic invariant functions. On these target functions, we compare the test error of kernel ridge regression with 5 different kernels: a standard inner-product kernel $H^{\sFC} ( \bx , \by) = h(\< \bx , \by \>/d)$; a cyclic invariant kernel $H^{\sFC}_{\sGP} (\bx , \by)$ (convolutional kernel with global pooling and full-size patches $q = d$); a convolutional kernel $H^{\sCK}$ with patch size $q = 10$; a convolutional kernel with local pooling $H^{\sCK}_{\omega}$ with $q = 10$ and $\omega =5$; and a convolutional kernel with global pooling $H^{\sCK}_{\sGP}$ with $q = 10$. In all these kernels, we choose a common $h(t) = \sum_{i\in[5]} 0.2*t^i$ which is a degree $5$-polynomial. 

In Figure \ref{fig:poly_comp}, we report the test errors of fitting $f_{\sLF,3}$ (left) and $f_{\sHF,3}$ (right) using kernel ridge regression with these $5$ kernels. We choose a small regularization parameter $\lambda = 10^{-6}$, and the noise level $\sigma_\eps = 0$. The curves are averaged over $5$ independent instances and the error bar stands for the standard deviation of these instances. The results match well our theoretical predictions. For the function $f_{\sLF,3}$, the sample sizes required to achieve vanishing test errors are ordered as $H^{\sCK}_{\sGP} < H^{\sCK}_{\omega}< H^{\sCK} < H^{\sFC}_{\sGP} < H^{\sFC}$ and are around the predicted thresholds $q^2 < dq^2/\omega < d^2 < dq^2 < d^3$ respectively. Next we look at the test error of fitting the high frequency local function $f_{\sHF,3}$. The test errors of $H^{\sCK}$ and $H^{\sFC}$ are the same for $f_{\sHF, 3}$ and $f_{\sLF, 3}$: this is because these kernels do not have bias towards either high-frequency or low-frequency functions. The kernel $H^{\sCK}_{\omega}$ perform worse on $f_{\sHF,3}$ than on $f_{\sLF,3}$: this is because the eigenspaces of $H^{\sCK}_{\omega}$ are biased towards low-frequency polynomials. The kernels $H^{\sCK}_{\sGP}$ and $H^{\sFC}_{\sGP}$ do not fit $f_{\sHF, 3}$ at all (test error greater than or equal to $1$): this is because the RKHS of these two kernels only contain cyclic polynomials, but $f_{\sHF, 3}$ is orthogonal to the space of cyclic polynomials.

\vspace{-4pt}

\section{Discussion and Future Work}
\label{sec:discussion}

\vspace{-4pt}
In this paper, we characterized in a stylized setting how convolution, average pooling and downsampling operations modify the RKHS, by restricting it to $q$-local functions and then biasing the RKHS towards low-frequency components. We quantified precisely the gain in statistical efficiency of KRR using these operations.  Beyond illustrating the `RKHS engineering' of image-like function classes, these results can further provide intuition and a rigorous foundation for convolution and pooling operations in kernels and CNNs. A natural extension would be to study the multilayer convolutional kernels in details and consider other pooling operations such as max-pooling. Another important question is how anisotropy of the data impacts the results of this paper: in particular, it was shown that pre-processing (whitening of the patches) greatly improves the performance of convolutional kernels \cite{thiry2021unreasonable,bietti2021approximation}. A more challenging question is to study how training and feature learning can further improve the performance of CNNs outside the kernel regime.

\bibliographystyle{amsalpha}
\bibliography{ConvPoolKer.bbl}

\clearpage
\appendix

\section{Details from the main text}
\label{sec:details}
\subsection{Notations}

For a positive integer, we denote by $[n]$ the set $\{1 ,2 , \ldots , n \}$. For vectors $\bu,\bv \in \R^d$, we denote $\< \bu, \bv \> = u_1 v_1 + \ldots + u_d v_d$ their scalar product, and $\| \bu \|_2 = \< \bu , \bu\>^{1/2}$ the $\ell_2$ norm. Given a matrix $\bA \in \R^{n \times m}$, we denote $\| \bA \|_{\op} = \max_{\| \bu \|_2 = 1} \| \bA \bu \|_2$ its operator norm and by $\| \bA \|_{F} = \big( \sum_{i,j} A_{ij}^2 \big)^{1/2}$ its Frobenius norm. If $\bA \in \R^{n \times n}$ is a square matrix, the trace of $\bA$ is denoted by $\Tr (\bA) = \sum_{i \in [n]} A_{ii}$.

We use $O_d(\, \cdot \, )$  (resp. $o_d (\, \cdot \,)$) for the standard big-O (resp. little-o) relations, where the subscript $d$ emphasizes the asymptotic variable. Furthermore, we write $f = \Omega_d (g)$ if $g(d) = O_d (f(d) )$, and $f = \omega_d (g )$ if $g (d) = o_d (f (d))$. Finally, $f =\Theta_d (g)$ if we have both $f = O_d (g)$ and $f = \Omega_d (g)$.

We use $O_{d,\P} (\, \cdot \,)$ (resp. $o_{d,\P} (\, \cdot \,)$) the big-O (resp. little-o) in probability relations. Namely, for $h_1(d)$ and $h_2 (d)$ two sequences of random variables, $h_1 (d) = O_{d,\P} ( h_2(d) )$ if for any $\eps > 0$, there exists $C_\eps > 0 $ and $d_\eps \in \Z_{>0}$, such that
\[
\begin{aligned}
\P ( |h_1 (d) / h_2 (d) | > C_{\eps}  ) \le \eps, \qquad \forall d \ge d_{\eps},
\end{aligned}
\]
and respectively: $h_1 (d) = o_{d,\P} ( h_2(d) )$, if $h_1 (d) / h_2 (d)$ converges to $0$ in probability.  Similarly, we will denote $h_1 (d) = \Omega_{d,\P} (h_2 (d))$ if $h_2 (d) = O_{d,\P} (h_1 (d))$, and $h_1 (d) = \omega_{d,\P} (h_2 (d))$ if $h_2 (d) = o_{d,\P} (h_1 (d))$. Finally, $h_1(d) =\Theta_{d,\P} (h_2(d))$ if we have both $h_1(d) =O_{d,\P} (h_2(d))$ and $h_1(d) =\Omega_{d,\P} (h_2(d))$.

\subsection{Convolutional neural tangent kernel}
\label{app:CNTK_derivation}

In this section, we justify the expression of the convolutional neural tangent kernel $H^{\sCK}_{\bw,\Delta}$ \eref{eqn:CNTK-AP-DS}, obtained as the tangent kernel of a neural network composed of a one convolution layer followed by local average pooling and downsampling \eref{eqn:CNN-AP-DS}.

\begin{proposition}\label{prop:CNTK} Let $\sigma \in \cC^1 (\R)$ be an activation function. Consider the following one-layer convolutional neural network with $\omega$-local average pooling and $\Delta$-downsampling:
\begin{equation}
f^{\sCNN}_N ( \bx ; \bTheta ) = \sum_{i \in [N]} \sum_{k \in [d / \Delta]} a_{ik} \sum_{s \in [\omega]} \sigma \big( \< \bw_i , \bx_{(k \Delta + s )} \> \big)\, .
\end{equation}
Let $a_{ik}^0 \sim_{\siid} \normal (0 , 1)$ and $\sqrt{q} \bw_i^0 \sim_{\siid} \Unif ( \Cube^q )$ independently, and $\bTheta^0 = \{ (a_{ik}^0)_{i\in[N], k \in [d/ \Delta]} , ( \bw_i^0)_{i \in [N]} \}$. Then there exists $h : [ -1 , 1] \to \R$, such that for any $\bx , \by \in \Cube^d$, we have almost surely
\begin{equation}
    \hspace{-10pt} \lim_{N \to \infty} \big\< \nabla_{\bTheta} f^{\sCNN}_N ( \bx ; \bTheta^0 ) , \nabla_{\bTheta} f^{\sCNN}_N ( \by ; \bTheta^0 ) \big\> / N = \sum_{k \in [d/\Delta]} \sum_{s, s' \in [\omega]} h \big( \< \bx_{(k\Delta + s)} , \by_{(k\Delta+s')} \> / q \big)\, .
\end{equation}
\end{proposition}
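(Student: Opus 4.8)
The plan is to run the standard neural-tangent-kernel computation: differentiate $f^{\sCNN}_N$ in the two blocks of parameters, recognize the gradient inner product as a sum of $N$ i.i.d.\ random variables indexed by the neuron $i$, and apply the strong law of large numbers. Write $\bTheta = \{(a_{ik}), (\bw_i)\}$. Since $\partial_{a_{ik}} f^{\sCNN}_N(\bx;\bTheta^0) = \sum_{s\in[\omega]}\sigma(\langle\bw_i^0,\bx_{(k\Delta+s)}\rangle)$ and $\nabla_{\bw_i} f^{\sCNN}_N(\bx;\bTheta^0) = \sum_{k}a_{ik}^0\sum_{s}\sigma'(\langle\bw_i^0,\bx_{(k\Delta+s)}\rangle)\,\bx_{(k\Delta+s)}$, the inner product $\langle\nabla_\bTheta f^{\sCNN}_N(\bx;\bTheta^0),\nabla_\bTheta f^{\sCNN}_N(\by;\bTheta^0)\rangle$ is a sum over $i\in[N]$ of
\[
\sum_{k}\sum_{s,s'}\sigma(\langle\bw_i^0,\bx_{(k\Delta+s)}\rangle)\sigma(\langle\bw_i^0,\by_{(k\Delta+s')}\rangle) + \sum_{k,k'}a_{ik}^0 a_{ik'}^0\sum_{s,s'}\sigma'(\langle\bw_i^0,\bx_{(k\Delta+s)}\rangle)\sigma'(\langle\bw_i^0,\by_{(k'\Delta+s')}\rangle)\langle\bx_{(k\Delta+s)},\by_{(k'\Delta+s')}\rangle,
\]
with $k,k'$ ranging over $[d/\Delta]$ and $s,s'$ over $[\omega]$.

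Since $\sqrt q\,\bw_i^0$ is supported on the finite set $\Cube^q$ and $\sigma\in\cC^1$, the first summand above is bounded and the second is bounded by a constant times $|a_{ik}^0|\,|a_{ik'}^0|$, hence integrable; as these summands are i.i.d.\ across $i$, the strong law of large numbers applies. Using $\E[a_{ik}^0 a_{ik'}^0]=\ones_{k=k'}$ and the independence of the $a$'s from the $\bw$'s, $N^{-1}\langle\nabla_\bTheta f^{\sCNN}_N(\bx;\bTheta^0),\nabla_\bTheta f^{\sCNN}_N(\by;\bTheta^0)\rangle$ converges almost surely to
\[
\sum_{k\in[d/\Delta]}\sum_{s,s'\in[\omega]}\Big(\E_{\bw}\big[\sigma(\langle\bw,\bx_{(k\Delta+s)}\rangle)\sigma(\langle\bw,\by_{(k\Delta+s')}\rangle)\big]+\langle\bx_{(k\Delta+s)},\by_{(k\Delta+s')}\rangle\,\E_{\bw}\big[\sigma'(\langle\bw,\bx_{(k\Delta+s)}\rangle)\sigma'(\langle\bw,\by_{(k\Delta+s')}\rangle)\big]\Big),
\]
where $\sqrt q\,\bw\sim\Unif(\Cube^q)$.

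It remains to identify this limit with the stated formula, which is the only real content. The key point is that for $\bu,\bv\in\Cube^q$ both $\E_{\bw}[\sigma(\langle\bw,\bu\rangle)\sigma(\langle\bw,\bv\rangle)]$ and $\E_{\bw}[\sigma'(\langle\bw,\bu\rangle)\sigma'(\langle\bw,\bv\rangle)]$ depend on $(\bu,\bv)$ only through $\langle\bu,\bv\rangle$: writing $\bxi=\sqrt q\,\bw\sim\Unif(\Cube^q)$, the coordinatewise sign change $\xi_j\mapsto\xi_j u_j$ preserves the law of $\bxi$ and sends $(\langle\bw,\bu\rangle,\langle\bw,\bv\rangle)$ to $(q^{-1/2}\sum_j\xi_j,\ q^{-1/2}\sum_j\xi_j u_jv_j)$, and by exchangeability of the $\xi_j$ the joint law of this pair depends only on $\#\{j:u_jv_j=-1\}=(q-\langle\bu,\bv\rangle)/2$. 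Pulling the factor $q$ out of $\langle\bx_{(k\Delta+s)},\by_{(k\Delta+s')}\rangle$, the summand above becomes a function of $\langle\bx_{(k\Delta+s)},\by_{(k\Delta+s')}\rangle/q$ alone; defining $h$ to equal this function on the finite value set $\{-1,-1+2/q,\dots,1\}$ and extending $h$ arbitrarily to $[-1,1]$ gives the claim. Finally, since $\Cube^d\times\Cube^d$ is finite, intersecting the finitely many probability-one events makes the convergence hold simultaneously for all $\bx,\by$. I do not anticipate any genuine obstacle; the steps that need a little care are the integrability bound underlying the SLLN on the filter block and this last symmetrization argument.
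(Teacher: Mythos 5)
Your proof is correct and follows essentially the same route as the paper's: split the gradient into the $\ba$-block and the $\bw$-block, apply the strong law of large numbers neuron-by-neuron, and set $h = h^{(1)} + h^{(2)}$. The only difference is that you supply details the paper merely asserts — namely the sign-change/exchangeability argument showing the limiting expectations depend on the patches only through $\<\bu,\bv\>$, and the integrability check for the SLLN — so no gap to report.
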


\begin{proof}[Proof of Proposition \ref{prop:CNTK}]
For $\bu,\bv \in \Cube^q$, define
\begin{align*}
    h^{(1)} ( \< \bu , \bv \>/q ) = &~ \E_{\bw \sim \Unif( \Cube^q)} \big[ \sigma ( \< \bu , \bw \> /\sqrt{\q} ) \sigma ( \< \bv , \bw \> /\sqrt{\q} ) \big] \, ,\\
    h^{(2)} ( \< \bu , \bv \>/q ) = &~ \E_{\bw \sim \Unif( \Cube^q)} \big[ \sigma ' ( \< \bu , \bw \> /\sqrt{\q} ) \sigma '  ( \< \bv , \bw \> /\sqrt{\q} ) \< \bu, \bv \>\big] / q \, .
\end{align*}
The functions $h^{(1)}, h^{(2)}$ are well defined (the RHS only depend on the inner product $\< \bu , \bv \>$) and can be extended to functions $h^{(1)}, h^{(2)} : [ -1 , 1 ] \to \R$. 

Computing the derivative of the convolutional neural network with respect to $\ba = (a_{ik}^0)_{i\in[N], k \in [d/ \Delta]}$, we have
\[
\begin{aligned}
&~\frac{1}{N}\big\< \nabla_{\ba} f^{\sCNN}_N ( \bx ; \bTheta^0 ) , \nabla_{\ba} f^{\sCNN}_N ( \by ; \bTheta^0 ) \big\> \\
=&~ \sum_{k \in [d / \Delta]} \sum_{s,s' \in [\omega]} \frac{1}{N} \sum_{i \in [N]}\sigma \big( \< \bw_i^0 , \bx_{(k \Delta + s )} \> \big) \sigma \big( \< \bw_i^0 , \bx_{(k \Delta + s' )} \> \big)\, .
\end{aligned}
\]
Hence by law of large number, we have almost surely 
\[
\begin{aligned}
&~ \lim_{N \to \infty} \frac{1}{N}\big\< \nabla_{\ba} f^{\sCNN}_N ( \bx ; \bTheta^0 ) , \nabla_{\ba} f^{\sCNN}_N ( \by ; \bTheta^0 ) \big\> 
=\sum_{k \in [d / \Delta]} \sum_{s,s' \in [\omega]}  h^{(1)} \big( \< \bx_{(k\Delta + s)} , \by_{(k\Delta+s')} \> / q \big)\, .
\end{aligned}
\]
Similarly, computing the derivative with respect to $\sqrt{q} \bW = ( \sqrt{q} \bw_i^0)_{i \in [N]}$ gives
\[
\begin{aligned}
&~\frac{1}{N}\big\< \nabla_{\bW} f^{\sCNN}_N ( \bx ; \bTheta^0 ) , \nabla_{\bW} f^{\sCNN}_N ( \by ; \bTheta^0 ) \big\> \\
=&~ \sum_{k,k' \in [d / \Delta]} \sum_{s,s' \in [\omega]} \frac{1}{N} \sum_{i \in [N]}a_{ik} a_{ik'}  \sigma ' \big( \< \bw_i^0 , \bx_{(k \Delta + s )} \> \big) \sigma ' \big( \< \bw_i^0 , \bx_{(k' \Delta + s' )} \> \big) \frac{\<  \bx_{(k \Delta + s )} ,  \bx_{(k' \Delta + s' )} \>}{q}\, .
\end{aligned}
\]
By law of large number, using that $a_{ik}$ and $a_{ik'}$ are independent of mean zero and variance $1$, we get almost surely
\[
\begin{aligned}
&~ \lim_{N \to \infty} \frac{1}{N}\big\< \nabla_{\bW} f^{\sCNN}_N ( \bx ; \bTheta^0 ) , \nabla_{\bW} f^{\sCNN}_N ( \by ; \bTheta^0 ) \big\> 
=\sum_{k \in [d / \Delta]} \sum_{s,s' \in [\omega]}  h^{(2)} \big( \< \bx_{(k\Delta + s)} , \by_{(k\Delta+s')} \> / q \big)\, .
\end{aligned}
\]
Taking $h = h^{(1)} + h^{(2)}$ concludes the proof.
\end{proof}

\subsection{Local average pooling operation}
\label{sec:pooling_op}

Consider a function $f \in L^2 ( \Cube^d)$: we can decompose it as
\begin{align}
f (\bx) =&~ \frac{1}{\sqrt{d}} \sum_{j \in [d]} f_j ( \bx) \, , \\
f_j (\bx) =&~ \frac{1}{\sqrt{d}} \sum_{k \in [d]} \rho_j^k f ( t_k \cdot \bx) \, ,
\end{align}
where $\rho_j = e^{\frac{2 i \pi j}{d}}$ and $t_k \cdot \bx = (x_{k+1} , \ldots , x_d , x_1 , \ldots , x_k )$ is the cyclic shift of $\bx$ by $k$ pixels. We can think about $f_j (\bx)$ as the $j$-th component of the discrete Fourier transform of the function $f(\bx)$ seen as a $d$-dimensional vector $\{ f ( t_k \cdot \bx) \}_{k \in [d]}$ for any $\bx \in \Cube^d$.

Notice furthermore that if $f$ is a local function, i.e., $f$ can be decomposed as a sum of functions on patches $f(\bx) = \sum_{k \in [d]} g_k ( \bx_{(k)} )$, then we can write
\begin{align*}
    f_j (\bx) = &~  \frac{1}{\sqrt{d}} \sum_{k \in [d]} \rho_j^k f ( t_k \cdot \bx) 
    =  \frac{1}{\sqrt{d}} \sum_{k,u \in [d]} \rho_j^k g_u (  \bx_{(u+k)} ) 
    =  \frac{1}{\sqrt{d}} \sum_{k \in [d]} \rho_j^k \Tilde g_j ( \bx_{(k)} ) \, ,
\end{align*}
where we denoted ($\bv \in \Cube^q$)
\[
\Tilde g_j (\bv)  = \sum_{u \in [d]} \rho_j^{-u} g_u  (\bv)\, .
\]
In particular, decomposing $\Tilde g_j$ in the Fourier basis, we get (denoting $c_S = \< \Tilde g_j , Y_S \>_{L^2}$),
\[
f_j (\bx)  =  \frac{1}{\sqrt{d}} \sum_{k \in [d]} \rho_j^k \Tilde g_j ( \bx_{(k)} ) = \sum_{S \subseteq [q]} c_S \cdot \frac{1}{\sqrt{d}} \sum_{k \in [d]} \rho_j^k Y_{k+S} (\bx)  \, ,
\]
which shows that the $j$-th frequency component $f_j$ is in the span of $\{ Y_{j,S} \}_{S \subseteq [q]}$. In particular, applying average pooling operation in the kernel will reweight this eigenspace by a factor $\kappa_j$.

Let us further comment on the values of $\kappa_j$. First, we have
\[
\kappa_j = \sum_{k = -\omega}^\omega (1 - k /\omega) \rho_j^k.
\]
In particular, the maximal eigenvalue is attained at $j = d$ with $\kappa_d = \omega$, which corresponds to the subspace of cyclic invariant functions. Furthermore, $\kappa_j = 0 $ if and only if $d$ is a divisor of $j\omega$ for $j \leq d-1$, i.e., $j$ is a multilple of $\mathsf{gcd} (\omega , d)$. There are $\mathsf{gcd} (\omega , d)-1$ such zero eigenvalues.

In convolutional kernels, a weighted average is often preferred to local average pooling \cite{mairal2014convolutional,mairal2016end,bietti2021approximation}: in that case we consider $\tau : \R \to \R$ and obtain the kernel
\[
H^{\sCK}_\tau ( \bx , \by ) = \frac{1}{d} \sum_{k,s,s' \in [d]} \tau( d(s) ) \tau (d(s')) h \big( \< \bx_{(k+s)} , \by_{(k+s')}\>/ q \big)\, ,
\]
where $d(s) = \min(s,d-s)$ (the distance between $k+s$ and $k$ on $[d]$ with cyclic convention). Note that $H^{\sCK}_\tau $ has the same eigendecomposition as $H^{\sCK}_\omega$ but with different weights $\kappa_j$.

A popular choice for $\tau$ is the Gaussian filter $\tau (x) = \frac{1}{\sqrt{2\pi}\sigma}e^{-\frac{x^2}{2\sigma^2}}$. In Figure \ref{fig:eig_decay_circulant}, we compare the eigenvalues $\kappa_j$ for local average pooling and Gaussian filter with different value of $\omega$ and $\sigma^2$. Note that the eigenvalue decay controls how much high-frequencies are penalized: faster decay induces heavier penalty on the high-frequency components.

\begin{figure}[h]
\begin{center}
    \includegraphics[width=13.5cm]{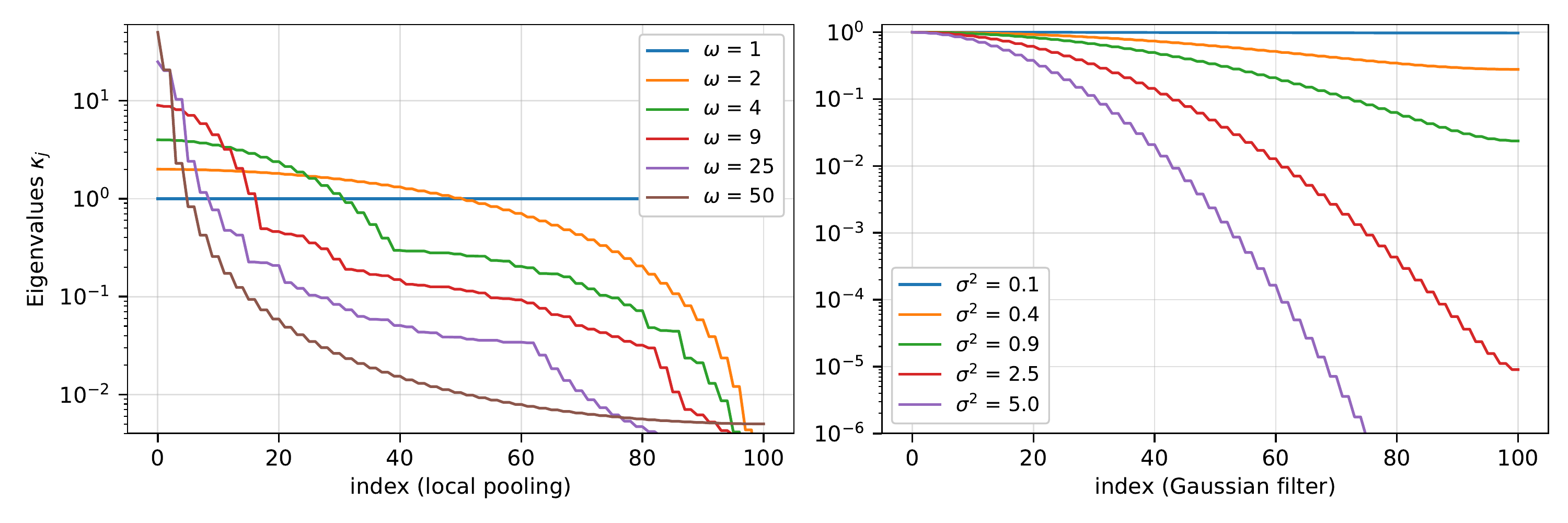}
\end{center}

\caption{Decay of the weights $\kappa_j$ for different lenfths $\omega$ for local average pooling (on the left) and bandwidths $\sigma^2$ for pooling with Gaussian filter (on the right), for $d = 101$. \label{fig:eig_decay_circulant}}
\end{figure}

\subsection{Downsampling operation}
\label{sec:downsampling_op}

As mentioned in the main text, a downsampling operation is often added after pooling. The kernel is given by

\begin{equation}\label{eq:CK_AP_DS_def}
H^{\sCK}_{\omega,\Delta}  (\bx , \by ) =  \frac{\Delta}{ d \omega} \sum_{k \in [d/\Delta]} \sum_{s, s' \in [\omega]} h \left( \< \bx_{(k\Delta + s)} , \by_{(k\Delta+s')} \> / q \right) \, .
\end{equation}
Let us introduce the family $\{ \bM^r \}_{r\in [q]}$ of block-circulant matrices defined by 
\begin{align}\label{eq:matrix_def}
    M^{r}_{ij} =&~  \frac{\Delta}{ \omega(q+1 - r)} \Big\vert \Big\{ (k,s,s',t) \in \cI_{\omega, \Delta, r}: k\Delta + s + t \equiv i [d], k\Delta + s' + t \equiv j [d] \Big\} \Big\vert  \, ,
\end{align}
where we introduced the set of indices
\begin{align}
     \cI_{\omega, \Delta, r} =&~ \Big\{ (k,s,s',t): k \in [ d/\Delta], s,s' \in [\omega], 0 \leq t \leq q - r \Big\}\, .
\end{align}
We can now state the eigendecomposition of $H^{\sCK}_{\omega,\Delta} $ in terms of the eigenvalues and eigenvectors of the matrices $\{ \bM^r \}_{r \in [q]}$.

\begin{proposition}[Eigendecomposition of $H^{\sCK}_{\omega,\Delta}$]\label{prop:diag_CK_AP_DS} Let $H^{\sCK}_{\omega,\Delta}$ be a convolutional kernel with local average pooling and downsampling, as defined in Eq.~\eref{eq:CK_AP_DS_def}. 
Then $H^{\sCK}_{\omega,\Delta}$ admits the following eigendecomposition:
\begin{equation}\label{eq:CK_AP_DS_diag}
\begin{aligned}
 H^{\sCK}_{\omega} ( \bx , \by ) =&~ \omega \xi_{q,0}+ \sum_{\ell = 1}^{\q}  \sum_{S \in \cC_\ell} \sum_{j \in [d]} \frac{\xi_{q,\ell} r(S) \kappa_{j}^S}{d} \cdot \psi_{j,S} ( \bx ) \psi_{j,S} ( \by ) \, ,
\end{aligned}
\end{equation}
where $\psi^{\Delta}_{j,S} (\bx) = \sum_{k = 1}^d v_{j,k}^S Y_{k+S} (\bx)$
with $\{ \kappa_{j}^S , \bv_j^S \}_{j \in [d]}$ eigenvalues and eigenvectors of $\bM^{\gamma(S)}$.
\end{proposition}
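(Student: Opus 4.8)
The plan is to follow the same route that establishes Propositions \ref{prop:diag_CK} and \ref{prop:diag_CK_AP}, namely: (i) expand each inner-product kernel $h(\<\bx_{(a)},\by_{(b)}\>/q)$ using the Gegenbauer decomposition \eref{eq:eigendecomposition_inner_prod} on the $q$-dimensional patches, (ii) collect, for each frequency level $\ell$ and each pattern $S\subseteq[q]$ with $|S|=\ell$, the bilinear form in the translated monomials $Y_{k+S}(\bx)Y_{k+S}(\by)$, and (iii) recognize the resulting finite-dimensional quadratic form as governed by a block-circulant matrix whose spectral decomposition yields the eigenvalues $\kappa_j^S$ and eigenvectors $\bv_j^S$. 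Concretely, I would first write $\bx_{(a)}=\bx_{(k\Delta+s)}$ and note that its patch support is the interval $\{k\Delta+s,\dots,k\Delta+s+q-1\}$ in $[d]$ with cyclic convention; applying \eref{eq:eigendecomposition_inner_prod} with $p=q$ gives
\begin{equation}
h\big(\<\bx_{(k\Delta+s)},\by_{(k\Delta+s')}\>/q\big)=\sum_{\ell=0}^q \xi_{q,\ell}\sum_{T\subseteq[q],|T|=\ell} Y_{k\Delta+s+T}(\bx)\,Y_{k\Delta+s'+T}(\by)\,,
\end{equation}
where $k\Delta+s+T$ denotes the image of the pattern $T$ shifted to start at position $k\Delta+s$. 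Here one uses $q\le d/2$ so that distinct pattern placements give distinct subsets of $[d]$ and no spurious collisions occur.

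Next I would reorganize the triple sum over $(k,s,s')$ and the pattern $T$. Fix an equivalence class $S\in\cC_\ell$ of patterns under translation within $[q]$; the monomials appearing are exactly $Y_{m+S}(\bx)$ for $m\in[d]$ where $m+S\subseteq[d]$ ranges over all cyclic translates of a canonical representative of $S$ (there are $d$ of them since we are on the cyclic group, after accounting for the diameter constraint $r(S)=q+1-\gamma(S)$ which counts how many placements of $S$ fit inside a single length-$q$ patch). Collecting the contribution of all $(k,s,s',t)$ for which $T$ placed at $k\Delta+s$ equals $m+S$ and $T$ placed at $k\Delta+s'$ equals $m'+S$ — i.e. $k\Delta+s+t_0\equiv m$ and $k\Delta+s'+t_0\equiv m'$ for the offset $t_0$ aligning $T$ inside the patch — produces, up to the normalization $\Delta/(d\omega)$ and the factor $\xi_{q,\ell}r(S)/d$, precisely the matrix entry $M^{\gamma(S)}_{m,m'}$ of \eref{eq:matrix_def}: the index set $\cI_{\omega,\Delta,r}$ with $r=\gamma(S)$ encodes exactly the admissible tuples $(k,s,s',t)$. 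Thus the $\ell$-th, $S$-th block of the kernel is $\frac{\xi_{q,\ell}r(S)}{d}\sum_{m,m'}M^{\gamma(S)}_{m,m'}Y_{m+S}(\bx)Y_{m'+S}(\by)$, and it remains to diagonalize $\bM^{\gamma(S)}$.

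Finally I would invoke that $\bM^r$ is a real symmetric block-circulant matrix on $\Z_d$ (circulance follows because the defining count in \eref{eq:matrix_def} depends on $(i,j)$ only through $i-j$ modulo the $\Delta$-periodic structure, which after restricting to a fixed residue class becomes genuinely circulant). Writing $\{\kappa_j^S,\bv_j^S\}_{j\in[d]}$ for its eigenpairs and setting $\psi^\Delta_{j,S}(\bx)=\sum_{k=1}^d v^S_{j,k}Y_{k+S}(\bx)$, one checks that the $\{\psi^\Delta_{j,S}\}$ are orthonormal in $L^2(\Cube^d)$ (because the $\{Y_{k+S}\}_k$ are orthonormal and $\bv_j^S$ is an orthonormal eigenbasis) and that the block rewrites as $\sum_{j}\frac{\xi_{q,\ell}r(S)\kappa_j^S}{d}\psi^\Delta_{j,S}(\bx)\psi^\Delta_{j,S}(\by)$, which assembled over $\ell$ and $S$ is exactly \eref{eq:CK_AP_DS_diag}; the $\ell=0$ term contributes the scalar $\omega\xi_{q,0}$ after the normalization. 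I expect the main obstacle to be the bookkeeping in step two: carefully verifying that the combinatorial count of tuples $(k,s,s',t)\in\cI_{\omega,\Delta,r}$ mapping a translated pattern to the ordered pair $(i,j)$ of starting positions matches the matrix entry $M^r_{ij}$ — in particular getting the cyclic wraparound, the downsampling stride $\Delta$, and the within-patch offset $0\le t\le q-r$ to interact correctly, and confirming the resulting matrix is block-circulant so that its spectrum is well-defined. Everything else (Gegenbauer expansion, orthonormality, reassembly) is routine given Propositions \ref{prop:diag_CK} and \ref{prop:diag_CK_AP}.
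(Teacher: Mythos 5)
Your proposal is correct and follows essentially the same route as the paper's proof: expand $h$ in the patch Gegenbauer/Fourier basis, group the monomials $Y_{m+S}$ by translation classes $S \in \cC_\ell$, identify the resulting block (with the normalization $\frac{\Delta}{d\omega}$ absorbed as you compute, giving the factor $\frac{\xi_{q,\ell} r(S)}{d}$) with the symmetric matrix $\bM^{\gamma(S)}$ of Eq.~\eref{eq:matrix_def}, and diagonalize it to obtain the $\psi_{j,S}$. The only cosmetic difference is that the paper phrases the reduction via the Gram-type matrix $\E_{\bx,\by}[Y_{i+S}(\bx) H(\bx,\by) Y_{j+S}(\by)]$ on each invariant subspace $V_S$, while you assemble the same matrix directly from the combinatorial count; for the proposition itself symmetry of $\bM^{\gamma(S)}$ suffices, the block-circulant structure being needed only for the explicit spectra discussed afterwards.
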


Let us make a few comments on these matrices $\bM^{\gamma(S)}$. First because they only depend on $S$ through the diameter $\gamma(S)$, the eigenvalues and eigenvectors $\{ \kappa_{j}^S , \bv_j^S \}_{j \in [d]}$ only depend on $\gamma(S)$. Second, we see that $M^{\gamma(S)}_{(i+\Delta)(j+\Delta)} = M^{\gamma(S)}_{ij}$ and $M^{\gamma(S)}_{ij} = 0$ if $d(i,j) \geq \omega$, where $d(i,j) = \min (|i -j| , d - |i - j|)$ (i.e., the distance between $i$ and $j$ on the torus $[d]$). In words $\bM^{\gamma(S)}$ is a symmetric block-circulant matrix with non-zero elements on a band of size $\omega-1$ on the left and right of the diagonal, and on the upper-right and lower-left corners. Furthermore, notice that 
\begin{equation*}
\Tr ( \bM^{\gamma(S)} ) = \frac{\Delta}{ d \omega r(S)} \Big\vert \Big\{ (k,s,t): k \in [ d/\Delta], s \in [\omega], 0 \leq t \leq q - \gamma(S) \Big\} \Big\vert = 1\, ,
\end{equation*}
which is independent of $\omega,\Delta, \gamma(S)$ and justify the chosen normalization. In particular, this implies that (for $\xi_{q,0} = 0$)
\begin{equation}\label{eq:normalization_choice}
\Tr ( \Hop^{\sCK}_{\omega,\Delta} ) := \E_{\bx} \{ H^{\sCK}_{\omega , \Delta} (\bx , \bx ) \} = \sum_{\ell \in [q]} \xi_{q,\ell} \sum_{S \in \cC_{\ell} } r(S) = \sum_{\ell \in [q]} \xi_{q,\ell}  B( \Cube^q ; \ell) = h (1)\, ,
\end{equation}
is also independent of the parameters $(q,\omega,\Delta)$.

\begin{example}
Take $\Delta = 3$, $\omega = 5$, $q = 11$, then
\[
\bM^1 =\frac{3}{50} \left(\begin{array}{ccc|ccc|ccc|ccc}
18 & 15 & 11 & 7  & 4  & 0  &   &   &   &  &   &\\
15 & 19 & 15 & 11 & 8  & 4  & 0 &  &   & &  \hdots &\\
11 & 15 & 18 & 14 & 11 & 7  & 3 & 0 &  & &   &\\
\hline
7  & 11 & 14 & 18 & 15 & 11 & 7 & 3 & 0 & &   &\\
4  & 8  & 11 & 15 & 19 & 15 & 11  & 8   &  4 & & \hdots  &\\
0  & 4  & 7  & 11 &  15  &  18  &  14 &  11  &  7 & &   &\\
\hline
   & 0  & 3  &   &    &    &   &   &   & &   &\\
  &  & 0  &    &  \vdots  &    &   &  \ddots &   & &  \hdots &\\
    &  &   &    &    &    &   &   &   & &   &\\
\end{array}\right)\, ,
\]
and
\[
\bM^4 =\frac{3}{35} \left(\begin{array}{ccc|ccc|ccc}
13 & 11 & 8 & 5  & 3  & 0  &   &   &   \\
11 & 14 & 11 & 8 & 6  & 3  & 0 &  &   \\
8 & 11 & 13 & 10 & 8 & 5  & 2 & 0 &  \\
\hline
5  & 8 & 10 &  &  &  &  &  & \\
3  & 6  & 8 &  & \ddots &  &   & \hdots   &   \\
0  & 3  & 5  &  &    &    &   &    &  \\
\hline
   &   &   &   &    &    &   &   &   \\
  & \vdots &   &    &  \vdots  &    &   &  \ddots &   \\
    &  &   &    &    &    &   &   &   \\
\end{array}\right)\, .
\]

\end{example}

\begin{remark}\label{rmk:diag_block_cir}
Symmetric block-circulant matrices can be easily diagonalized as follows. Consider $\bM = \mathsf{Circulant} ( \bB_1 ,  \bB_2 , \ldots, \bB_m )$ where $\bB_k \in \R^{\Delta \times \Delta}$, $\bB_{1}^\sT = \bB_1$ and $\bB_{2+k} = \bB_{m - k}^\sT$ for $k = 0, \ldots , m -2$. Denote $\rho_j = e^{2i\pi j /m}$ and $\gamma_j (\bv) = [ \bv , \rho_j \bv , \cdots , \rho_j^{m - 1} \bv ]/\sqrt{m} \in \R^{m\Delta}$ for any $\bv \in \R^{\Delta}$. Introduce for $j = 0, \ldots , m-1$, the matrix $\bH_j \in \R^{\Delta \times \Delta}$ given by
\begin{equation}\label{eq:def_Hj}
\bH_j = \bB_1 + \rho_j \bB_2 + \ldots + \rho_j^{m-1} \bB_m\, .
\end{equation}
The matrix $\bH_j$ is Hermitian and we denote $(\lambda_{j,s})_{s \in [\Delta]}$ and $(\bv_{j,s} )_{s \in [\Delta] }$ its eigenvalues and eigenvectors. Then the eigenvalues and eigenvectors of $\bM$ are given by $\{\lambda_{j,s} \}_{j \in [m], s\in [\Delta]}$ and $\{\gamma_j ( \bv_{j,s} ) \}_{j \in [m], s\in [\Delta]}$.

In particular, if $\Delta = 1$ and $\bM = \mathsf{Circulant} ( b_1 ,  b_2 , \ldots, b_m )$ is a circulant matrix, then the eigenvalues are simply given by 
\[
\lambda_j = b_1 + \rho_j b_2 + \ldots + \rho_j^{m-1} b_m\, ,
\]
and eigenvectors $\bv_j = [1, \rho_j , \cdots, \rho_j^{m-1} ]/\sqrt{m}$.
\end{remark}

Here we will focus on the impact of downsampling for single-layer convolutional kernels. We expect the downsampling operation to have a much more important role for the next layers: for example, increasing the scale of interactions or reducing the dimensionality of the pixel space. 

We will argue below that adding a downsampling operation after local pooling leaves the low-frequency components approximately unchanged (while potentially modifying the high-frequency eigenspaces). We consider $\Delta \leq \omega$: for $\Delta > \omega$, some basis functions $Y_S$ with $S \in \cE_{\ell}$ are in the null space of $H^{\sCK}_{\omega, \Delta}$, which impact all frequencies. 

To emphasize the dependency on $\omega,\Delta$, denote $\bM^r_{\omega,\Delta}$ the matrix \eref{eq:matrix_def}. We will study the change in the matrix $\bM^r_{\omega,1}$ when adding downsampling $\Delta$, and consider 
\begin{equation}\label{eq:perturbation}
    \bM^r_{\omega,\Delta} = \bM^r_{\omega,1} + \bA^r_{\omega,\Delta}\, ,
\end{equation}
where we denote $\bA^r_{\omega,\Delta} = \bM^r_{\omega,\Delta} - \bM^r_{\omega,1}$. Notice that $\bA^r_{\omega,\Delta}$ is a symmetric block-circulant matrix. Therefore, from Remark \ref{rmk:diag_block_cir}, the eigenvectors of $\bA^r_{\omega,\Delta}$ are given by $\{\gamma_j ( \bv_{j,s} ) \}_{j \in [m], s\in [\Delta]}$ where $d = m \Delta$ and $\gamma_j ( \bv_{j,s} )= [ \bv_{j,s}, \rho_{m,j} \bv_{j,s} , \ldots , \rho_{m,j}^{m-1} \bv_{j,s} ] $ with $\rho_{m,j} = e^{\frac{2i\pi j}{m}}$ and $(\bv_{j,s})_{s \in [\Delta]}$ eigenvectors of $\bH_j$ \eref{eq:def_Hj}. The eigenvectors of $\bM^r_{\omega,1}$ are given by $\bu_t = [1, \rho_{d,t} , \cdots, \rho_{d,t}^{d-1} ]/\sqrt{d}$ with $\rho_{d,t} = e^{\frac{2i\pi t}{d}}$. Notice that
\begin{align*}
\< \bu_t^* , \gamma_j ( \bv_{j,s} ) \> =&~ \frac{1}{\sqrt{dm}} \sum_{k \in [m]} \sum_{u \in [\Delta] } \rho_{m,j}^{k-1} \rho_{d,t}^{- (k-1)\Delta - (t-1)} (\bv_{j,s})_u \\
=&~ \frac{1}{\sqrt{dm}} \Big( \sum_{u \in [\Delta] }\rho_{d,t}^{-(u-1)} (\bv_{j,s})_u  \Big) \cdot  \sum_{k \in [m]} \big(\rho_{m,j} \rho_{d,t}^{-\Delta} \big)^{k-1}\, ,
\end{align*}
which is $0$ except when $t \equiv j [m]$. Hence, we see that $\bA^r_{\omega,\Delta}$ in Eq.~\eref{eq:perturbation} only modify the eigenspaces of $\bM^r_{\omega,1}$ as follows: the eigendirections $\{\gamma_j ( \bv_{j,s} ) \}_{j \in [m], s\in [\Delta]}$ coming from $\bH_j$ \eref{eq:def_Hj} only modify the eigenspaces of $\bM^r_{\omega,1}$ spanned by $\{ \bu_{am+j} \}_{a = 0, \ldots, \Delta - 1}$.

For simplicity, we will focus on the popular choice $\Delta = \omega$. Furthermore, we will only look at the impact of the eigenvalues $\bH_0$ on the eigenspace spanned by $\{ \bu_{am} \}_{a = 0, \ldots, \Delta - 1}$, which contain the cyclic invariant direction. We show below that $\bH_0 = \bzero$ and therefore $\bA^r_{\omega,\omega}$ does not modify the cyclic invariant eigenspace of $\bM^r_{\omega,1}$:

\begin{proposition}\label{prop:impact_down}
Consider $d = m \omega$ and the symmetric block-circulant matrix $\bA^r_{\omega,\omega} = \bM^r_{\omega,\omega} - \bM^r_{\omega,1}$. Denote $\bA^r_{\omega,\omega}= \mathsf{Circulant} ( \bB_1 ,  \bB_2 , \ldots, \bB_m )$ and 
\[
\bH_0 = \bB_1 + \ldots  + \bB_m\, .
\]
We have the following properties:
\begin{enumerate}
    \item[(a)] If $q+1 - r \equiv 0 [\omega]$, then $\bA^r_{\omega,\omega} = \bzero$, and downsampling does not modify the matrix $\bM^r_{\omega,\omega} = \bM^r_{\omega,1}$.
    \item[(b)] We have $\bH_0 = \bzero$ and downsampling does not modify the cyclic invariant eigenspace $\bA^r_{\omega,\omega}\ones = \bzero$.
\end{enumerate}
\end{proposition}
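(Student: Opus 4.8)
\medskip
\noindent\textbf{Proof proposal.}
The plan is to argue entirely at the level of entries, starting from the combinatorial formula \eref{eq:matrix_def} and tracking which of the coordinates $(k,s,s',t)\in\cI_{\omega,\Delta,r}$ are free and which are pinned down by the two cyclic congruences $k\Delta+s+t\equiv i$ and $k\Delta+s'+t\equiv j$ modulo $d$. First I would evaluate the two matrices entering $\bA^r_{\omega,\omega}=\bM^r_{\omega,\omega}-\bM^r_{\omega,1}$. For $\Delta=1$, subtracting the two congruences forces $s'-s\equiv j-i\ [d]$, after which $k\in[d]$ is uniquely determined and $t$ is entirely free, so the factor $q+1-r$ cancels the normalization and one finds
\[
\bM^r_{\omega,1}[i,j]=\frac{1}{\omega}\,\big|\{(s,s')\in[\omega]^2:\ s'-s\equiv j-i\ [d]\}\big|,
\]
the triangular circulant kernel; in particular it does not depend on $r$. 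For $\Delta=\omega$, the congruence $k\omega+s+t\equiv i\ [d]$ is solvable in $k\in[m]$ (then uniquely) iff $\omega\mid(i-s-t)$, i.e.\ $t\equiv i-s\ [\omega]$, so
\[
\bM^r_{\omega,\omega}[i,j]=\frac{1}{q+1-r}\,\big|\{(s,s',t):\ s,s'\in[\omega],\ 0\le t\le q-r,\ s'-s\equiv j-i\ [d],\ t\equiv i-s\ [\omega]\}\big|.
\]

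For part (a), I would fix a pair $(s,s')$ with $s'-s\equiv j-i\ [d]$ and count the admissible $t$: the window $\{0,\dots,q-r\}$ has length $q+1-r$, so when $\omega\mid(q+1-r)$ it contains exactly $(q+1-r)/\omega$ elements in the residue class $i-s$ modulo $\omega$, irrespective of that residue. The inner count then factors as $\tfrac{q+1-r}{\omega}\,\big|\{(s,s'):s'-s\equiv j-i\ [d]\}\big|$, and after dividing by $q+1-r$ this equals $\bM^r_{\omega,1}[i,j]$. Hence $\bM^r_{\omega,\omega}=\bM^r_{\omega,1}$ and $\bA^r_{\omega,\omega}=\bzero$, which in particular gives $\bH_0=\bzero$; this proves (a).

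For part (b) I would identify $\bH_0$ with the array of partial row sums of $\bA^r_{\omega,\omega}$: with $d=m\omega$ and the block convention of Remark \ref{rmk:diag_block_cir}, $(\bH_0)_{s,s'}=\sum_{k\in[m]}(\bB_k)_{s,s'}=\sum_{j\in[d],\,j\equiv s'\,[\omega]}\bA^r_{\omega,\omega}[s,j]$. So it suffices to check that for every row $i$ and every residue class $c$ modulo $\omega$ one has $\sum_{j\equiv c\,[\omega]}\bM^r_{\omega,\Delta}[i,j]=1$ for both $\Delta=1$ and $\Delta=\omega$. For $\Delta=1$ this partial sum equals $\tfrac1\omega$ times the number of pairs $(s,s')\in[\omega]^2$ with $s'-s\equiv c-i\ [\omega]$; summing $\omega-|\delta|$ over the (one or two) representatives $\delta$ of $c-i$ modulo $\omega$ lying in $\{-(\omega-1),\dots,\omega-1\}$ always yields $\omega$, so the partial sum is $1$. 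For $\Delta=\omega$, once the class $c$ of $j$ is fixed the relations $s+t\equiv i\ [\omega]$ and $s'+t\equiv c\ [\omega]$ determine $s\in[\omega]$ and $s'\in[\omega]$ from $t$, and then $k\in[m]$ is determined as well, so the count reduces to $\big|\{t:0\le t\le q-r\}\big|=q+1-r$ and the partial sum is again $1$. Equality of the two partial row sums gives $(\bH_0)_{s,s'}=0$ for all $s,s'$, i.e.\ $\bH_0=\bzero$. Finally, since $\bA^r_{\omega,\omega}$ is block-circulant, the block of $\bA^r_{\omega,\omega}\ones$ indexed by any $a\in[m]$ is $\big(\sum_{k\in[m]}\bB_k\big)\ones_\omega=\bH_0\ones_\omega=\bzero$ (equivalently, $\ones\in\R^d$ lies in the block-constant eigenspace of $\bA^r_{\omega,\omega}$ attached to $\bH_0$, which now carries eigenvalue $0$), so $\bA^r_{\omega,\omega}\ones=\bzero$.

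I expect the only real difficulty to be the bookkeeping: tracking precisely which of $k,s,s',t$ are free and which are forced by the two cyclic congruences and (for $\Delta=\omega$) by the extra divisibility $\omega\mid(i-s-t)$, and counting exactly how many $t\in\{0,\dots,q-r\}$ fall into a prescribed residue class modulo $\omega$. This last count is sharp precisely when $\omega\mid(q+1-r)$, which is exactly the hypothesis of (a); in (b) the residue of $j$ is already summed over, so the count becomes residue-free and no divisibility hypothesis is needed.
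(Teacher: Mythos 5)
Your proposal is correct, and part (a) is essentially the paper's own argument: both reduce to counting, for each admissible pair $(s,s')$, how many $t\in\{0,\dots,q-r\}$ are compatible, and both use that a window of length $q+1-r$ divisible by $\omega$ meets every residue class mod $\omega$ exactly $(q+1-r)/\omega$ times (the paper phrases this by splitting the $t$-range into blocks $u\omega\le t<(u+1)\omega$). Part (b) is where you organize things differently. The paper works entry by entry: it observes that only the blocks $\bB_1,\bB_2,\bB_m$ of $\bA^r_{\omega,\omega}$ are nonzero (band structure), checks the diagonal entries directly, and uses the pairing identity $(\bM^r_{\omega,\omega})_{i(i+\kappa)}+(\bM^r_{\omega,\omega})_{i(i+\omega-\kappa)}=1-\tfrac{\kappa}{\omega}$ to cancel the two contributions to each off-diagonal entry of $\bH_0$. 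You instead identify $(\bH_0)_{s,s'}$ with the row sum of $\bA^r_{\omega,\omega}$ over the columns in a fixed residue class mod $\omega$, and prove the cleaner statement that these residue-class row sums equal $1$ for both $\bM^r_{\omega,1}$ and $\bM^r_{\omega,\omega}$ (for $\Delta=\omega$ the class of $j$ pins down $s,s',k$ given $t$, so the count is exactly $q+1-r$). This buys a more uniform argument that avoids the band-width observation and the diagonal/off-diagonal case split, and it makes the final step $\bA^r_{\omega,\omega}\ones=\bH_0\ones_\omega$ (block-row sums are all equal to $\bH_0$ by circulance) transparent; the paper's version is more hands-on but requires slightly more bookkeeping about which blocks contribute to which entries of $\bH_0$. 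Your explicit closed forms for the entries of $\bM^r_{\omega,1}$ and $\bM^r_{\omega,\omega}$ are consistent with the paper's computation in the proof of Proposition \ref{prop:diag_CK_AP}, so there is no gap.
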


\begin{proof}[Proof of Proposition \ref{prop:impact_down}]
Let us first start by proving point (a). Consider $q+1-r = p\omega$. Fix $i \in \{ 0 , \ldots , \Delta - 1 \}$ and $\kappa \in \{ 0 , \ldots , \omega - 1\} $. Let us compute the entry $(i,i+\kappa) $ of the matrix $\bM^{r}_{\omega,\omega}$: this amounts to counting the number of quadruples $(k,s,s',t)$ with $k\in [d/\omega]$, $s,s' \in [\omega]$ and $0 \leq t \leq p\omega -1 $, satisfying $  ( k \omega + s + t , k \omega + s' +t )\equiv (i , i+ \kappa) [d]$. Notice that we must have $s' = s + \kappa$ and therefore $s \in \{ 0, \ldots , \omega - 1 - \kappa\}$. Notice that for each interval $u\omega \leq t < (u+1)\omega$ with $u\in \{0, \ldots , p-1\}$, there are exactly $\omega - \kappa$ ways of choosing $s$ and then $t$ and $k$ to satisfy the equality. We deduce that
\[
(\bM^r_{\omega,\omega} )_{i(i+\kappa)} = \frac{\omega}{\omega ( q + 1 - r)} p (\omega - \kappa) = 1 - \frac{\kappa}{\omega} = (\bM^r_{\omega,1} )_{i(i+\kappa)}\, .
\]
By symmetry of $\bM^r_{\omega,\omega}$, this concludes the proof of point (a).

Consider now point (b). First notice, because $\bM^r_{\omega,\omega}$ has zero entries for $\min(|i - j |, d - |i-j|) \geq \omega$, the only non-zero blocks are $\bB_1, \bB_2$ and $\bB_m$. Furthermore, when computing $\bH_0$, the diagonal entries only have one contribution from the diagonal elements of $\bB_1$. The off-diagonal elements of $\bH_0$ have two contribution: one from $\bB_1$ and one from $\bB_2$ (if below the diagonal) or $\bB_m$ (if above the diagonal), i.e.,
\[
(\bH_0)_{ii} = (\bB_1 )_{ii}\, \qquad (\bH_0)_{i(i+\kappa)} = (\bB_1)_{i(i+\kappa)} + (\bB_m )_{i(i+\kappa)}\, .
\]
Let us compute first the diagonal elements: we have easily, by a similar argument as above $(\bM^r_{\omega,\omega} )_{ii} = 1 = (\bM^r_{\omega,1} )_{ii}$, and therefore $\bH_0$ has zero zero diagonal entries. For off-diagonal elements, first notice that $(\bM^r_{\omega,\omega} )_{i(i+\kappa - \omega)} =  (\bM^r_{\omega,\omega} )_{i(i+\omega - \kappa)}$. Then for $q+1 - r = p\omega + v$, we can consider each subsegment $u\omega \leq t < (u+1) \omega$ separately, and by a simple counting argument, get $ (\bM^r_{\omega,\omega} )_{i(i+\omega - \kappa)} +  (\bM^r_{\omega,\omega} )_{i(i+\kappa)} = 1 - \frac{\kappa}{\omega}$. We deduce that $(\bH_0)_{i(i+\kappa)} = 0$, which by symmetry implies $\bH_0 = \bzero$ and concludes the proof.
\end{proof}

From the above result, we conjecture that more generally, for $\Delta \leq \omega$, the low-frequency eigenspaces of $H^{\sCK}_{\omega}$ remain approximately unchanged when applying a downsampling operation. We verify this conjecture numerically in several examples. In Figure \ref{fig:eig_decay_down}, we plot the eigenvalues $\kappa_j$ with and without downsampling. On the left, we compare $\kappa_j$ for fixed $\omega = 25$ and increasing $\Delta$. We notice that the eigenvalues do not change much for $\Delta  \leq \omega$, and for $\Delta > \omega$, some $\kappa_j$ become null, as discussed above. On the right, we plot $\kappa_j$ for $\Delta = 1$ (continuous line) and $\Delta = \omega$ (dashed lines) for several $\omega$. As conjectured, the top eigenvalues (low-frequency) are left approximately unchanged. In Figure \ref{fig:eigvect_down}, we plot a heatmap of the eigenvectors ordered vertically from highest associated eigenvalue (bottom) to lowest (top) for a fixed $\omega = 25$ and increasing downsampling $\Delta \in \{1,25,40\}$. First indeed check that the top eigenvectors correspond to low-frequency functions and the bottom eigenvectors correspond to high-frequency functions. Second, most eigenvectors are not much modified between $\Delta = 1$ and $\Delta = \omega = 25$. For the case, $\Delta > \omega$, the top eigenvectors corresponds still low-frequency functions.

\begin{figure}[h]
\begin{center}
    \includegraphics[width=13.5cm]{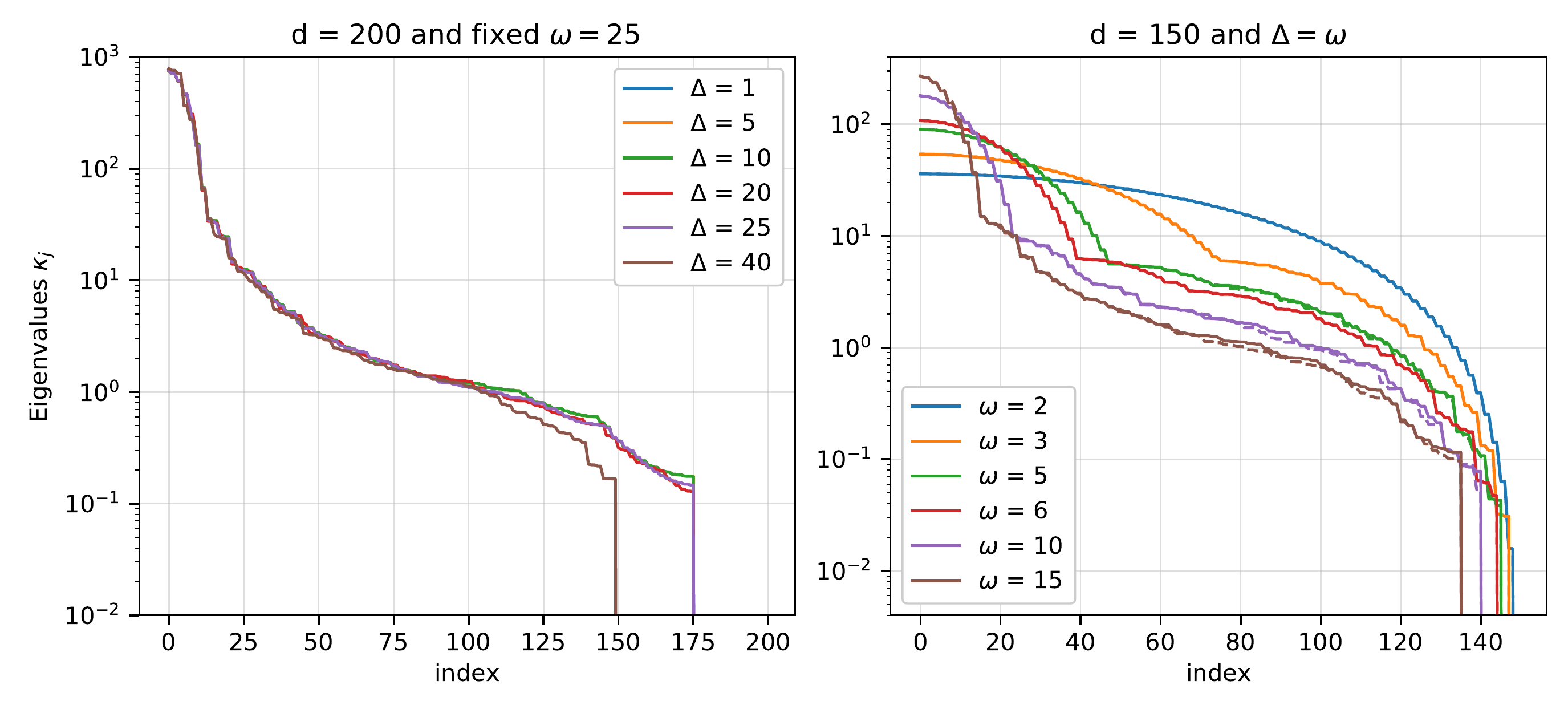}
\end{center}

\caption{Impact of downsampling on the eigenvalues $\kappa_j$. On the left, we fix $\omega = 25$ ($d = 200$, $q = 30$, $r = 1$) and increase $\delta$ from $1$ (no downsampling) to $40$. On the right, we compare $\Delta = 1$ (continuous line) and $\Delta = \omega$ (dashed lines), with $d=150$,$q=20$,$r=1$. \label{fig:eig_decay_down}}
\end{figure}

\begin{figure}[h]
\begin{center}
    \includegraphics[width=13.5cm]{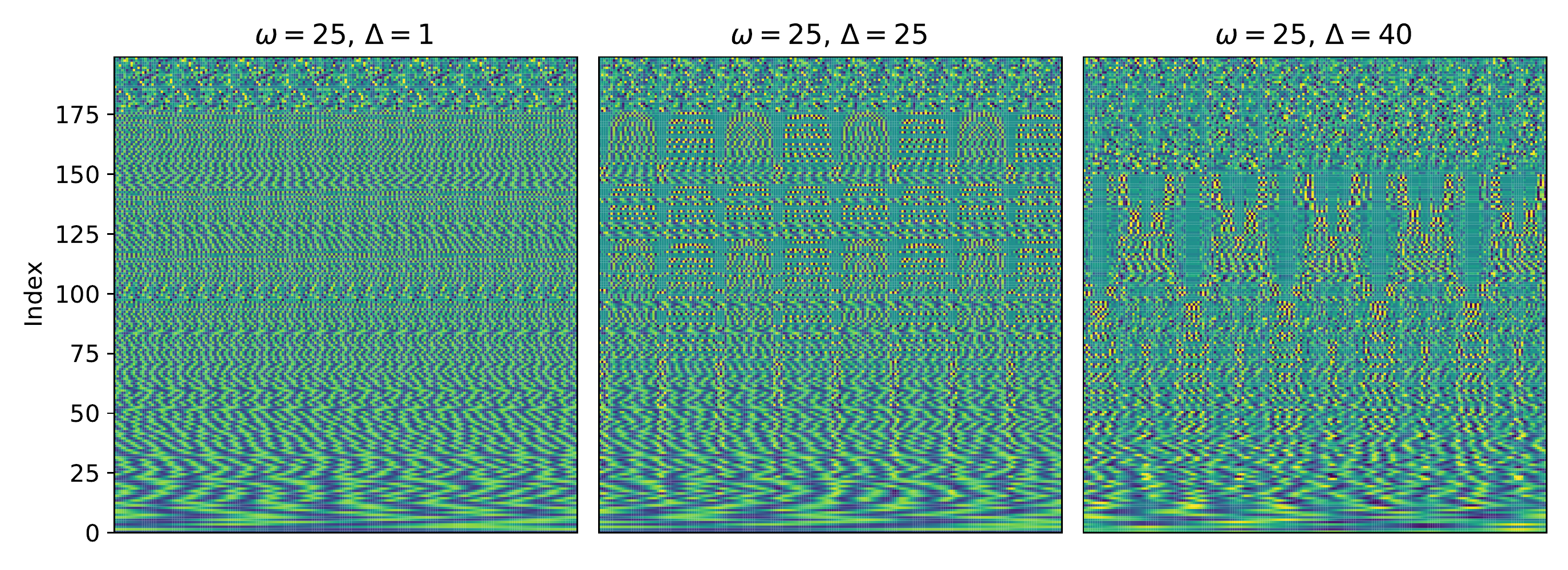}
\end{center}

\caption{Heatmap of the eigenvectors $\{\bv_j\}_{j \in [d]}$ ordered from highest associated eigenvalue (bottom) to lowest (top), for $d = 200, q = 30, r = 1, \omega =25$, and $\Delta = 1$ (left), $\Delta = \omega = 25$ (middle) and $\Delta = 40$ (right).  \label{fig:eigvect_down}}
\end{figure}

From these observations, we expect $H^{\sCK}_{\omega, \Delta}$ to have the same statistical properties as $H^{\sCK}_{\omega}$ when learning low-frequency functions. In Figure \ref{fig:down_learning}, we plot the test error of kernel ridge regression for fitting cyclic $q$-local polynomials (see Section \ref{app:numerical}) on the hypercube of dimension $d= 30$.  We report the test error of one realization, against the sample size $n$, and choose regularization $\lambda = 10^{-6}$ and noise $\sigma_\eps = 0$. We compare kernels with and without downsampling. On the left, we consider $q = 10$ and $\omega = \Delta = 5$, and compare the test error with $H^{\sCK}_{\omega}$ (continous line) and with $H^{\sCK}_{\omega,\Delta} $ (dashed line) when learning degree $2$, $3$ and $4$ polynomials. On the right, we fix the target function to be the cubic local cyclic polynomial and consider the test error of learning with $H^{\sCK}_{\omega,\Delta}$ for $q = 10$, $\omega = 10$, and $\Delta \in \{ 1, 3,6,10\}$. As expected, we observe in both simulations that the test error is almost identical between the kernels with and without downsampling, when learning cyclic invariant functions.

In Section \ref{app:theorems_KRR_HD}, we further check that downsampling with $\Delta> \omega$ does not improve the high-dimensional predictions for the test error of KRR.

\begin{figure}[h]
\begin{center}
    \includegraphics[width=13.5cm]{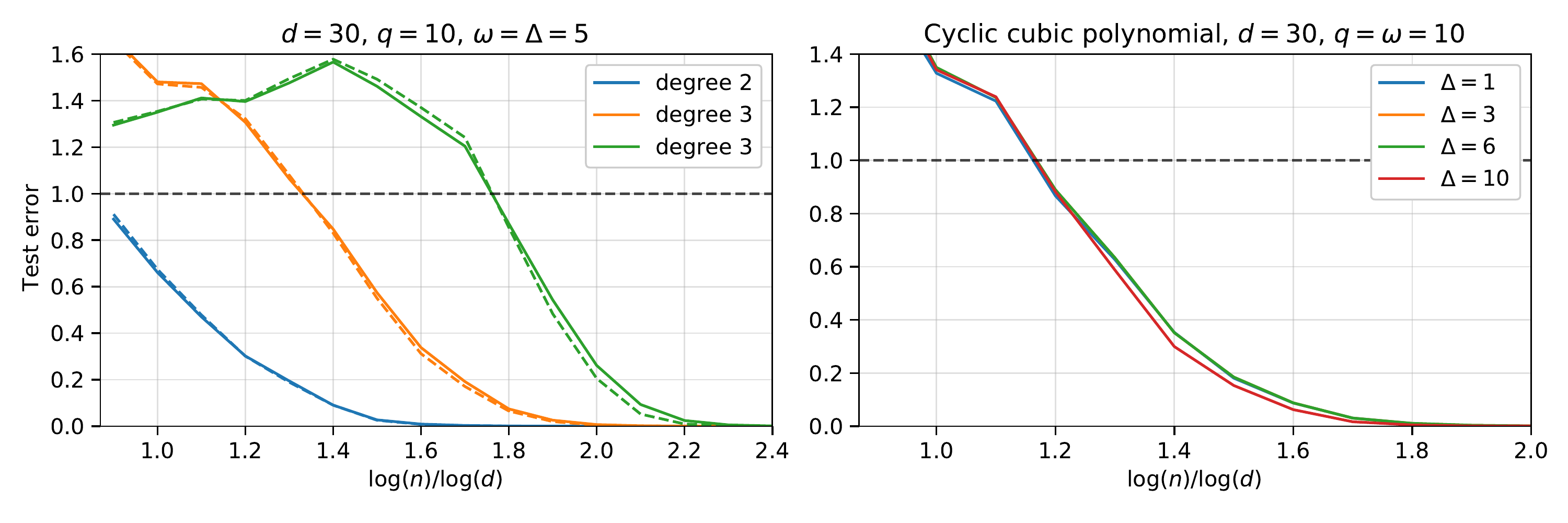}
\end{center}

\caption{Test error of kernel ridge regression with and without downsampling. We report the test error of one realization, against the sample size $n$. On the left, we consider a unique architecture $q = 10$ and $\omega = \Delta = 5$, and compare $H^{\sCK}_{\omega}$ (continuous line) versus $H^{\sCK}_{\omega,\Delta}$ (dashed line) when learning cyclic $q$-local polynomials of degree $2$, $3$ and $4$. On the right, we consider a unique cyclic $q$-local polynomial of degree $3$ for fixed $q = 10$, $\omega =10$ and $\Delta \in \{1,3,6,10\}$.
 \label{fig:down_learning}}
\end{figure}

\subsection{Multilayer convolutional kernels} 
\label{app:multi}

For completeness, we briefly discuss here some intuitions of multilayer convolutional kernels. The benefit of depth in convolutional kernels has been investigated in \cite{cohen2016inductive,mhaskar2016deep,scetbon2020harmonic,bietti2021approximation}. In particular, \cite{bietti2021approximation} observed that the top layer operation of a two-layers convolutional kernel can be replaced by a low-degree polynomial without a performance change.

As an example, we will consider a two layers convolutional kernel with patch and local average pooling sizes $(q_1, \omega_1)$ on the first layer and $(q_2, \omega_2)$ on the second layer. We consider a general inner-product kernel for the first layer: 
\begin{align}
    h_1 \big( \< \bu , \bv \> / q_1 \big) = \< \psi ( \bu ) , \psi ( \bv) \> \, ,
\end{align}
where the feature map is given explicitly $\psi (\bu ) = \{ \xi_{q_1, |S|} Y_{S} (\bu ) \}_{S \subseteq [q_1]} \in \R^{2^{q_1}}$. Following the work \cite{bietti2021approximation}, we consider a degree-$2$ polynomial kernel on the second layer, i.e., $h_{2} ( \< \phi , \phi ' \>) = \< \phi , \phi ' \>^2$.

Let us decompose this two-layers convolutional kernel in the Fourier basis. Let $\Psi (\bx) = \{ \Psi_{k} (\bx ) \}_{k \in [d]}$ be the output of the first layer, with 
\begin{align}
   \Psi_{k} (\bx ) = \sum_{s \in [\omega_1]} \psi (\bx_{(k+s)} ) = \Big\{ \xi_{q_1,|S|} \sum_{s \in [\omega_1]} Y_{k+s+S} (\bx)  \Big\}_{S \subseteq [q_1]} \in \R^{2^{q_1}}\, .
\end{align}
Then denoting $\Psi_{(k)} (\bx) = ( \Psi_{k+1} (\bx) , \ldots , \Psi_{k+q_2 } (\bx) )$, the two-layers convolutional kernel is given by
\begin{equation}\label{eq:2CK}
\begin{aligned}
    &~H^{2\sCK}_{\omega_1, \omega_2} (\bx , \by ) \\
    =&~ \sum_{k \in [d]} \sum_{s,s' \in [\omega_2]} \< \Psi_{(k+s)} (\bx) , \Psi_{(k+s ')} (\bx) \>^2 \\
    =&~ \sum_{k \in [d]} \sum_{s,s' \in [\omega_2]} \sum_{u,u'\in [q_2]} \sum_{t,t',r,r' \in [\omega_1]} \\
   &~ \qquad \< \psi ( \bx_{(k+ s +u +t)} ) \otimes \psi ( \bx_{(k+ s +u' +r)} ) , \psi (\by_{(k+s'+u+t')}) \otimes \psi (\by_{(k+s'+u'+r')}) \>\, .
\end{aligned}
\end{equation}
We believe that techniques contained in this paper can be used to study kernels of the type \eqref{eq:2CK}  by a careful combinatorial argument and a 2-dimensional Fourier transform on the second layer (see \cite{bietti2021approximation}). We leave this problem to future work. Here we only comment on the structure of $ H^{2\sCK}_{\omega_1, \omega_2}$:

\begin{enumerate}
    \item Including a second convolutional layer allows interactions between patches. The associated RKHS, which we will denote $\cH^{2\sCK}$, contains all the homogeneous polynomials $Y_S$ with $S = S_1 \cup S_2$ with $S_1$, $S_2$ contained on segments of size $q_1$, with the two segments separated by at most $q_2 + \omega_2 - 2 $. In words, the RKHS contains interaction between patches $\bx_{(k)}$ and $\bx_{(k')}$ that are within some distance.
    
    \item The eigenvalue associated to a degree-$k$ homogeneous polynomials is still of order $q^{-k}$ in high-dimension. To learn functions restricted to $L^2 (\Cube^2 , \loc_q)$, it is statistically more efficient to use $H^{\sCK}$ (smaller degeneracy of eigenvalues). However $H^{2\sCK}$ will fit a richer class of functions with two-patch interactions, while still not being plagued by dimensionality: $\dim ( \cH^{2\sCK} ) \leq q_2 d 2^{2q_1}$. Hence we still expect $H^{2\sCK}$ to be much more statistically efficient than a standard inner-product kernel.
    
    \item Local pooling on the two layers plays different roles: pooling on the first layer encourages the interactions to not depend strongly on the relative positions of the patches, while pooling on the second layer penalizes functions that depend on the global position of these interactions.
\end{enumerate}

For more layers and higher degree kernels, one obtain hierarchical interactions of higher-order, with multi-scale absolute and relative local invariances brought by pooling layers.

\subsection{Proofs diagonalization of convolutional kernels}
\label{sec:proof_diag}

In this section, we prove the diagonalization of the kernels $H^{\sCK}$, $H^{\sCK}_{\omega}$ and $H^{\sCK}_{\omega, \Delta}$ introduced in Propositions \ref{prop:diag_CK}, \ref{prop:diag_CK_AP} and \ref{prop:diag_CK_AP_DS} respectively. 

Recall that we can associate to a kernel function $H : \cX \times \cX \to \R$ defined on a probability space $(\cX , \tau)$ (assume $x\mapsto H(\bx, \bx)$ square integrable), the integral operator $\Hop : L^2 ( \cX, \tau) \to L^2 (\cX , \tau)$
\begin{align}\label{eq:def_integral_operator}
\Hop f (\bx ) = \int_{\cX} H (\bx , \bx ' ) f (\bx ' ) \tau ( \de \bx ' ) \, .
\end{align}
By the spectral theorem of compact operators, there exists an orthonormal basis $(\psi_j )_{j \geq 1}$ of $L^2 (\cX , \tau)$ and eigenvalues $(\lambda_j)_{j \geq 1}$, with nonincreasing values $\lambda_1 \geq \lambda_2 \geq \cdots \geq 0$ and $\sum_{j \geq 1} \lambda_j < \infty$, such that
\[
\Hop = \sum_{j=1}^\infty \lambda_j \psi_j \psi_j^*, \qquad H (\bx , \bx ') = \sum_{j =1}^\infty \lambda_j \psi_j (\bx) \psi_j (\bx')\, .
\]

We first prove the diagonalization of $H^{\sCK}_{\omega, \Delta}$ in Proposition \ref{prop:diag_CK_AP_DS}. The case of $H^{\sCK}_{\omega}$ and $H^{\sCK}$ then follows by setting $\Delta = 1$, and $\Delta= \omega = 1$ respectively.

\begin{proof}[Proof of Proposition \ref{prop:diag_CK_AP_DS}]
Consider the inner-product kernel function $h : \R \to \R$ defined on the hypercube $\Cube^q$. By rotational symmetry (see Section \ref{sec:hypercube_background} and Appendix \ref{sec:technical_background}), $h$ admits the following diagonalization: for any $\bu, \bv \in \Cube^q$,
\begin{equation}\label{eq:diag_h_diag}
h \left( \< \bu , \bv \> / q \right) = \sum_{\ell = 0}^q \xi_{q,\ell} \sum_{S \subseteq [q], |S| = \ell} Y_S (\bu) Y_S ( \bv ) \, ,
\end{equation}
where $(Y_S)_{S \subseteq [q]}$ is the Fourier basis on $\Cube^q$, and $\xi_{d,\ell} (h)$ is the $\ell$-th Gegenbauer coefficient of $h$ in dimension $q$ (see Sections \ref{sec:hypercube_background} or \ref{sec:technical_background} for background).

Recall that we defined $\cS_\ell = \{ S \subseteq [q] : |S| = \ell \}$, the equivalence relation $S \sim S'$ if $S'$ is a translated subset of $S$ in $[q]$ (without cyclic convention), and $\cC_{\ell}$ the quotient set of $\cA_\ell$ by $\sim$. For each equivalence class $\overline{S} \in \cC_{\ell}$, consider  $S$ the unique subset in $\overline{S}$ that contains $ 1$. Then the equivalence class $\overline{S}$ contains the subsets $ u+S = \{ u + k: k\in S\} \subseteq [q]$ with $u= 0 , \ldots, q- \gamma(S)$. By a slight abuse of notations, we will identify $\overline{S}$ and this subset $S$. Below we will denote $u + S$ the translated subset with cyclic convention on $[d]$ (e.g., $2 + \{ 1 , 3, d-1 \} = \{3, 5, 1\}$).

Using Eq.~\eref{eq:diag_h_diag} and that $Y_S (\bx_{(k)} ) = Y_{k+S} (\bx)$, we have the following decomposition of $H^{\sCK}_{\omega,\Delta}$ in the Fourier basis
\begin{equation}\label{eq:ker_fourier_basis}
    \begin{aligned}
   &~ H^{\sCK}_{\omega,\Delta}  (\bx , \by ) \\
    =&~  \frac{\Delta}{\omega} \sum_{k \in [d/\Delta]} \sum_{s, s' \in [\omega]} h \left( \< \bx_{(k\Delta + s)} , \by_{(k\Delta+s')} \> / q \right) \\
    =&~ d\omega \xi_{q,0} + \sum_{\ell = 1}^q \xi_{q,\ell} \sum_{S \in \cC_{\ell}} \left\{  \frac{\Delta}{ \omega} \sum_{(k,s,s',t) \in \cI_{\omega, \Delta , \gamma(S)} } Y_{k\Delta +s +t +S } (\bx) Y_{k\Delta +s' +t +S } (\by)\right\} \, ,
    \end{aligned}
\end{equation}
where we recall the definition of the set of indices
\begin{equation}
    \cI_{\omega, \Delta, \gamma(S)} = \Big\{ (k,s,s',t): k \in [ d/\Delta], s,s' \in [\omega], 0 \leq t \leq q - \gamma(S) \Big\}\, .
\end{equation}

Note that the diagonalization of the kernel $H$ can be obtained by computing the matrix $\bM = (M_{SS'})_{S,S' \subseteq [d]} \in \R^{2^d \times 2^d}$ with $M = \E_{\bx,\by} [ Y_S(\bx) H (\bx , \by ) Y_{S'} (\by)]$: if $\lambda_j$ and $\bv_j \in \R^{2^d}$ are the eigenvalues and eigenvectors of $\bM$, then $\lambda_j$ and $\psi_j (\bx) = \sum_{S \subseteq [d]} v_{j,S} Y_{S} (\bx)$ are the eigenvalues and eigenvectors of $H$.

From Eq.~\eref{eq:ker_fourier_basis}, we see 1) the basis functions $Y_S$ with $\gamma(S) > q $ (subset $S$ not contained in a segment of size $q$) are in the null space of $H^{\sCK}_{\omega, \Delta}$, 2) for $S , S' \subseteq [d]$ with $S$ and $S'$ not translations of each other, then $\E_{\bx,\by} [ Y_S(\bx) H^{\sCK}_{\omega,\Delta} (\bx , \by ) Y_{S'} (\by)] =0$, and $Y_S$ and $Y_{S'}$ are contained in orthogonal eigenspaces. We deduce that it is sufficient to diagonalize $H^{\sCK}_{\omega,\Delta}$ on each of the (orthogonal) subspaces $V_S := \Span \{ Y_{k+S}: k \in [d] \}$ for $0 \leq \ell \leq q$ and $S \in \cC_\ell$. 

For each $S \in \cC_{\ell}$, define $\bM^{\gamma(S)} \in \R^{d \times d}$ the matrix with entries $M^{\gamma(S)}_{ij} = \frac{1}{r(S)} \E_{\bx,\by} [ Y_{i +S}(\bx) H^{\sCK}_{\omega,\Delta} (\bx , \by ) Y_{j+S} (\by)]$. From Eq.~\eref{eq:ker_fourier_basis}, we get 
\begin{align}
    M^{\gamma(S)}_{ij} =&~  \frac{\Delta}{ \omega r(S)} \Big\vert \Big\{ (k,s,s',t) \in \cI_{\omega, \Delta, \gamma(S)}: k\Delta + s + t \equiv i [d], k\Delta + s' + t \equiv j [d] \Big\}\Big\vert\, ,
\end{align}
which concludes the proof of Proposition \ref{prop:diag_CK_AP_DS}.
\end{proof}

We can now prove Propositions \ref{prop:diag_CK} and \ref{prop:diag_CK_AP} by taking $\omega = \Delta = 1$ and $\Delta = 1$ respectively.

\begin{proof}[Proof of Proposition \ref{prop:diag_CK}]
Set $\Delta = \omega = 1 $ in Proposition \ref{prop:diag_CK_AP_DS}. We get
\[
\begin{aligned}
\bM_{ij}^{\gamma (S)} =&~ \frac{1}{r(S)}\Big\vert \Big\{ (k,t): k\in [d], 0 \leq t \leq q - \gamma(S),  k + 1 + t \equiv i [d], k + 1 + t \equiv j [d] \Big\}\Big\vert \\
=&~ \delta_{ij}\, .
\end{aligned}
\]
In this case, $\bM^{\gamma(S)}$ is simply equal to identity, which concludes the proof.
\end{proof}

\begin{proof}[Proof of Proposition \ref{prop:diag_CK_AP}]
Set $\Delta = 1 $ in Proposition \ref{prop:diag_CK_AP_DS}. We get
\[
\begin{aligned}
\bM_{ij}^{\gamma (S)} =&~ \frac{1}{\omega r(S)}\Big\vert \Big\{ (k,s,s',t)\in \cI_{\omega, \Delta, \gamma(S)}:  k + s + t \equiv i [d], k + s' + t \equiv j [d] \Big\}\Big\vert \\
=&~ \left( 1  - \frac{d(i,j)}{\omega} \right)_+ \, ,
\end{aligned}
\]
where $d(i,j)$ is the distance between $i$ and $j$ on the torus $[d]$ (i.e., if $i>j$, $d(i,j) = \min ( i - j , d+j - i)$). Hence, $\bM^{\gamma(S)}$ is a circulant matrix independent of $\gamma(S)$, which has well known explicit formula for eigenvalues and eigenvectors (see for example Remark \ref{rmk:diag_block_cir}).
\end{proof}

\subsection{Additional numerical simulations}
\label{app:numerical}

Here, we consider a numerical experiment similar to Figure \ref{fig:poly_comp}. We consider $\bx \sim \Unif ( \Cube^d)$ with $d = 30$ and consider three cyclic invariant target functions:
\[
\begin{aligned}
    f_{2} ( \bx) = \frac{1}{\sqrt{d}} \sum_{i \in [d]} x_{i}x_{i+1}\, , \qquad f_{3} ( \bx) = \frac{1}{\sqrt{d}} \sum_{i \in [d]} x_{i}x_{i+1}x_{i+2} \, ,\\
    f_{4} ( \bx) = \frac{1}{\sqrt{d}} \sum_{i \in [d]} x_{i}x_{i+1}x_{i+2}x_{i+3}\, .
    \end{aligned}
\]
We consider a higher order polynomial kernel $h (x) = \sum_{k \in [7]} 0.2 \cdot x^k$ than in Figure \ref{fig:poly_comp}, which should lead to higher self-induced regularization. We consider the same kernels as before, with $q = 10$ and $\omega =5$.

In Figure \ref{fig:poly_comp2}, we report the test errors of fitting $f_{2}$ (top), $f_{3}$ (middle) and $f_4$ (bottom) using kernel ridge regression with the $5$ kernels of interests in the main text. We choose a small regularization parameter $\lambda = 10^{-6}$, and the noise level $\sigma_\eps = 0$. The curves are averaged over $5$ independent instances and the error bar stands for the standard deviation of these instances.
The results again match with our overall theoretical predictions. We report the predicted thresholds for the three functions: 
\begin{enumerate}
    \item For $f_2$ target: $q < d < dq/\omega <dq < d^2$ for  $H^{\sCK}_{\sGP} < H^{\sFC}_{\sGP} < H^{\sCK}_{\omega}< H^{\sCK} <  H^{\sFC}$.
    \item For $f_3$ target: $q^2 < dq^2/\omega < d^2 < dq^2 < d^3$ for $H^{\sCK}_{\sGP} < H^{\sCK}_{\omega}< H^{\sCK} < H^{\sFC}_{\sGP} < H^{\sFC}$.
    \item For $f_4$ target: $q^3 < dq^3/\omega < d^3 < dq^3 < d^4$ for $H^{\sCK}_{\sGP} < H^{\sCK}_{\omega}< H^{\sCK} < H^{\sFC}_{\sGP} < H^{\sFC}$.
\end{enumerate}

We see that the kernels, especially for $f_4$, perform much better than their theoretical high-dimension predictions: this can be explained by the low-dimensionality of the experiment where $q = 10$.

\begin{figure}[t]
\begin{center}
    \includegraphics[width=8cm]{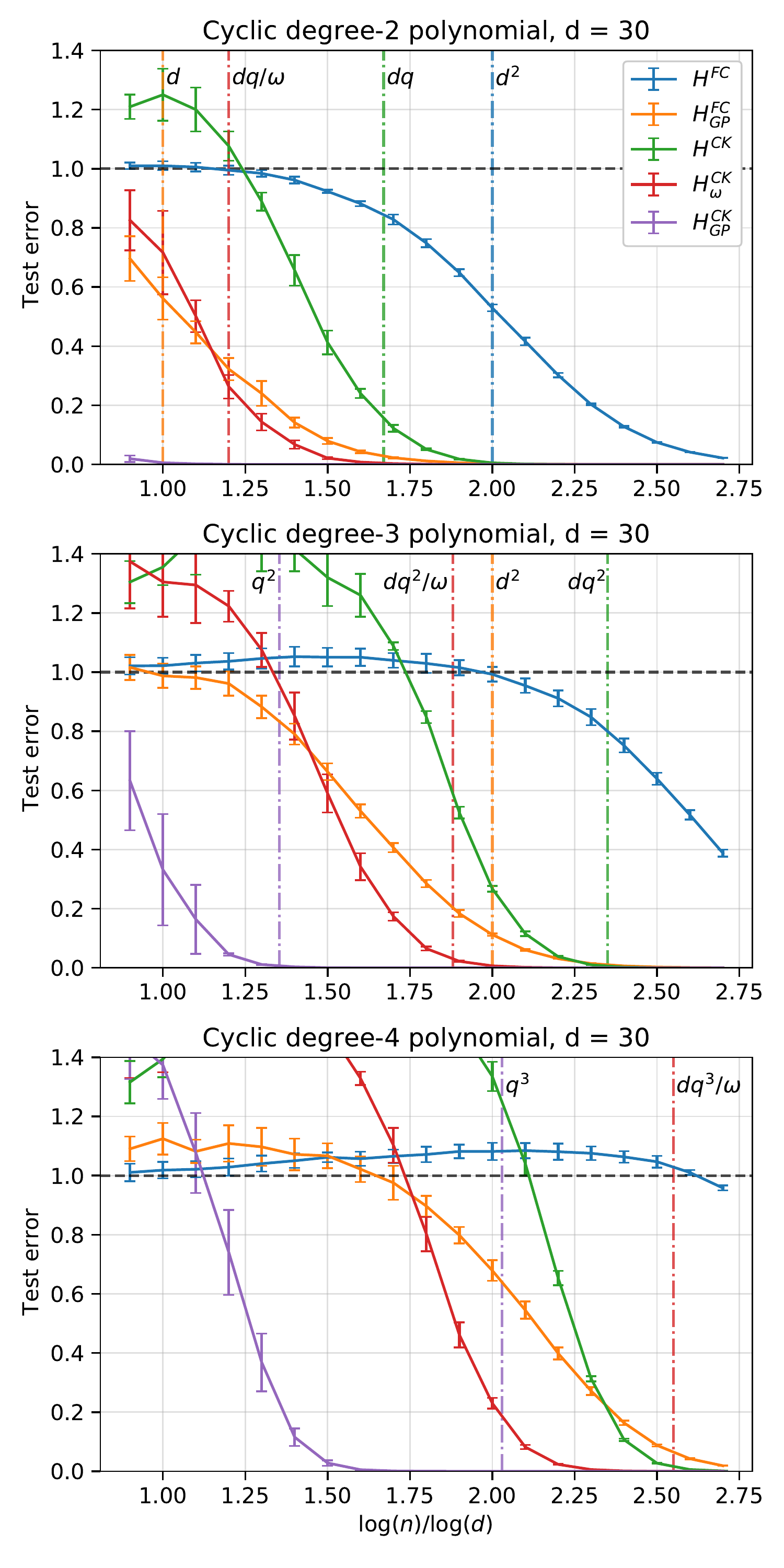}
\end{center}
\vspace{-20pt}

\caption{Learning cyclic polynomials of degree 2 (top), 3 (middle) and 4 (bottom) over the hypercube $d=30$, using KRR with $H^{\sFC}$ (${\rm FC}$), $H^{\sFC}_{\sGP}$ (${\rm FC}$-${\rm GP}$), $H^{\sCK}$ (${\rm CK}$), $H^{\sCK}_{\omega}$ (${\rm CK}$-${\rm LP}$) and $H^{\sCK}_{\sGP}$ (${\rm CK}$-${\rm GP}$), regularization parameter $\lambda = 0^+$ and $h(x) = \sum_{k \in [7]} 0.2 \cdot x^k$. We report the average and the standard deviation of the test error over $5$ realizations, against the sample size $n$.  \label{fig:poly_comp2}}
\end{figure}

\clearpage

\section{Generalization error of kernel methods in fixed dimension}
\label{sec:kernel_classical}

\subsection{Bound on kernel methods using Rademacher complexities}
\label{sec:rademachers}

We first consider the case of a Lipschitz bounded loss and uniform convergence, and make a few simple remarks on the connection between generalization error and eigendecomposition in kernel methods.

Consider i.i.d data $(\bx_i , y_i) \in \cX \times \R$ with $(\bx, y) \sim P$ and a loss function $\ell : \R \times \R \to \R$ that we take $1$-Lipschitz w.r.t second argument and bounded by $1$. The goal is to minimize the expected loss $L (\hat f) = \E_{\by,\bx} \{ \ell ( y , \hat f (\bx)) \}$. Take a RKHS $\cH$ with kernel function $H: \cX \times \cX \to \R$ and consider following constrained empirical risk minimizer:
\begin{align}\label{eq:const_ERM}
    \hat f_{B} = \argmin_{\| f \|_{\cH} \leq B} \left\{ \sum_{i=1}^n \ell (y_i , f (\bx_i) ) \right\}\, .
\end{align}
The generalization error of $\hat f_{B}$ has the following standard bound on the Rademacher complexity of the kernel class $\{f : \| f \|_{\cH} \leq B \}$ \cite{boucheron2005theory,shalev2014understanding}: with probability $1 - \delta$,
\begin{align}\label{eq:gen_bound}
   L (\hat f_B) - \min_{\| f \|_{\cH} \leq B} L (f) \leq \frac{8 B}{\sqrt{n}}  \sqrt{\E_{\bx} \{ H (\bx , \bx )\} }+ \sqrt{\frac{2 \log \frac{2}{\delta}}{n}}\, .
\end{align}
Note that instead of a constraint on the norm in Eq.~\eref{eq:const_ERM}, one might find more convenient to use a penalty. In that case, there exists an equivalent to the bound \eref{eq:gen_bound} \cite{wainwright2019high,bach2021learning}, but we focus here on the constrained formulation for simplicity.

From the bound \eref{eq:gen_bound}, we see that the generalization error depends crucially on the choice of $B$. For simplicity, let us forget about the approximation error and take $\| f_\star \|_{\cH} \leq B$ where $f_\star = \E\{ y | \bx \}$. Recall that for a kernel $H$ with eigenvalues $\{ \lambda_j \}_{j \geq 1}$ and eigenvectors $\{ \psi_j \}_{j \geq 1}$, we have
\[
\| f \|_{\cH}^2 = \sum_{j \geq 1} \lambda_j^{-1} \< \psi_j , f \>_{L^2(P)}^2\, .
\]

Consider $H^{\sCK}_{\omega,\Delta}$ as in Eq.~\eref{eq:CK_AP_def} and assume $\xi_{q,0} = 0$. From the normalization choice of the kernel (see Eq.~\eref{eq:normalization_choice}), we have
\[
\E_{\bx} \{ H^{\sCK}_{\omega,\Delta} (\bx , \bx) \} = h (1).
\]
Consider now for simplicity $\Delta = 1$. From the eigendecomposition in Proposition \ref{prop:diag_CK_AP}, the RKHS norm of $f \in L^2 (\Cube^d, \loc_q)$ is given by
\[
\| f \|_{\cH}^2 = \sum_{\ell \in [q]} \sum_{j \in [d]} \sum_{S \in \cC_{\ell}} \frac{\< \psi_{j,S} , f \>_{L^2}^2}{\xi_{q,\ell} r(S) \kappa_j /d }\, .
\]
Consider the case where $f \in L^2 (\Cube^d, \loc_q)$ has a unique non-zero component in its discrete Fourier transform, i.e., $f(\bx) = \frac{1}{\sqrt{d}}\sum_{k \in [d]} \rho_j^k g ( \bx_{(k)} )$ with $\E \{ g (\bx) \}= 0$ and $\rho_j = e^{2i \pi j/d}$ (see Section \ref{sec:pooling_op}). Note that, denoting $c_S = \< Y_S , g \>_{L^2 (\Cube^q)}$:
\[
f ( \bx) = \sum_{\ell = 1}^q \sum_{S \in \cC_{\ell}} \left( \sum_{u = 0}^{r(S)-1} \rho_j^{-u} c_{u+S} \right) \psi_{j,S}\, .
\]
Hence, 
\[
\| f \|_{\cH}^2 = \sum_{\ell =1}^q \sum_{S \in \cC_{\ell}} \frac{\< \psi_{j,S} , f \>_{L^2}^2}{\xi_{q,\ell} r(S) \kappa_j / d} \leq d\sum_{\ell =1}^q \sum_{S \in \cC_{\ell}} \sum_{u = 0}^{r(S) - 1} \frac{c_{u+S}^2}{\xi_{q,\ell} r(S)} \leq \frac{d \| g \|_{h}^2}{\kappa_j}\, , 
\]
where $\| g \|_{h}^2$ is the RKHS norm associated to the inner-product kernel $h:\R \to \R$ in $\Cube^q$, i.e., $\|g \|_{h}^2 = \sum_{S \subseteq [q]} \frac{c_S}{\xi_{q,|S|}}$. From the bound \eref{eq:gen_bound}, we deduce the first generalization bound using a convolutional kernel: with probability at least $1 - \delta$,
\[
   L (\hat f_B) - \min_{\| f \|_{\cH} \leq B} L (f) \leq 8 \left( \frac{d \| g \|_{h}^2 h(1) }{n \kappa_j } \right)^{1/2} + \sqrt{\frac{2 \log \frac{2}{\delta}}{n}}\, .
\]
We make the following two remarks on this bound:
\begin{enumerate}
    \item It depends on $\| g \|_{h}$, which is a RKHS norm on $\Cube^q$ instead of $\Cube^d$, which has potentially much lower dimension and contain less smooth function for balls of same radius.
    \item There is a factor $\kappa_j$ gain in sample complexity when learning functions that have $j$-th frequency with $\kappa_j > 1$. In particular, for $j = d$ (cyclic invariant functions), $\kappa_j = \omega$, and we need $\omega$ less samples to get the same (upper) bound on the generalization error. On the contrary, when $\kappa_j < 1$, i.e., high-frequency oscillatory functions, the generalization bound becomes worse.
\end{enumerate}

\subsection{Generalization error of KRR in the classical regime}
\label{sec:KRR_classical}

We consider here the regression setting which allows for finer results. Several works have considered bounding the generalization error of kernel ridge regression (KRR) \cite{caponnetto2007optimal,jacot2020kernel}, \cite[Theorem 13.17]{wainwright2019high}. In this section, we consider the following fully-explicit upper bound from \cite{bach2021learning}.

Consider i.i.d data $(\bx_i , y_i) \in \cX \times \R$ with $\bx_i \sim P$, and $y_i = f_\star (\bx_i) + \eps_i$. Assume the noise $\E [ \eps_i | \bx_i ] = 0$ and $\E[ \eps_i^2 | \bx_i ] \leq \sigma_{\eps}^2$, and denote $\beps = (\eps_1 , \ldots , \eps_n)$.

Let $\cH$ be a RKHS with reproducing kernel $H : \cX \times \cX \to \R$. The KRR solution with regularization parameter $\lambda \geq 0$ is given by
\begin{align*}
    \hat f_\lambda =&~ \arg\min_{f \in \cH} \left\{ \sum_{i=1}^n ( y_i - f(\bx_i) )^2 + \lambda \| f \|_{\cH}^2 \right\}\, ,
\end{align*}
which has the following analytical formula:
\begin{align*}
    \hat f_\lambda ( \bx ) = &~ \bh (\bx) ( \bH + \lambda \id_n )^{-1} \by \, ,
\end{align*}
where $\bH = ( H (\bx_i , \bx_j) )_{ij \in [n]}$ is the empirical kernel matrix, $\bh ( \bx ) = [ H(\bx , \bx_1) , \ldots , H(\bx , \bx_n)]$ and $\by = (y_1 , \ldots , y_n)$. The risk is taken to be the test error with squared error loss
\begin{equation}
    R (f_\star , \hat f_\lambda ) = \E_{\bx} \Big\{ \Big( f_\star (\bx) - \hat f_{\lambda} (\bx) \Big)^2 \Big\} \, .
\end{equation}
Below, we give an upper bound on the expected risk over the noise $\beps$ in the training data, i.e., $\E_{\beps} \{ R (f_\star , \hat f_\lambda ) \}$ (it is also possible to give high probability bounds by concentration arguments, but we restrict ourselves to bounding the expected risk).

\begin{theorem}{\cite[Theorem 7.2]{bach2021learning}}\label{thm:KRR_UB}
Assume $H(\bx,\bx) \leq R^2$ almost surely and let the regularization parameter $\lambda \leq R^2$. If $n \geq \frac{5R^2 }{\lambda}\left( 1 + \log \frac{R^2}{\lambda} \right)$, then
\begin{align}\label{eq:KRR_UB_thm}
\E_{\beps} \{ R (f_\star , \hat f_\lambda ) \} \leq 16 \frac{\sigma^2_\eps}{n} \cN (H , \lambda) + 16 \inf_{f \in \cH} \left\{ \| f - f_\star \|_{L^2}^2 + \lambda \| f \|_\cH^2 \right\} + \frac{24}{n^2} \| f_\star \|_{L^\infty}^2,
\end{align}
where $\cN (H, \lambda) = \Tr [ (\Hop + \lambda \id)^{-1} \Hop ]$. 
\end{theorem}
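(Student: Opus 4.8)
This is a special case of the standard bound on kernel ridge regression (see \cite{caponnetto2007optimal,bach2021learning}), and I would prove it by the classical bias--variance decomposition combined with a single operator concentration inequality. First I would pass to the operator formulation: writing $H(\bx,\by)=\langle\phi(\bx),\phi(\by)\rangle_{\cH}$, let $\Hop=\E_\bx[\phi(\bx)\otimes\phi(\bx)]$ be the population (integral) operator and $\widehat\Hop=\tfrac1n\sum_{i\in[n]}\phi(\bx_i)\otimes\phi(\bx_i)$ the empirical one, so that $\widehat f_\lambda$ is a linear functional of $\by=(f_\star(\bx_i))_{i}+\beps$. Splitting $\widehat f_\lambda=\widehat f_\lambda^{\,0}+\widehat f_\lambda^{\,\beps}$ into its noiseless and noise parts and using $\E[\eps_i\mid\bx_i]=0$ with $\E[\eps_i^2\mid\bx_i]\le\sigma_\eps^2$ gives the exact identity $\E_\beps\{R(f_\star,\widehat f_\lambda)\}\le\|\widehat f_\lambda^{\,0}-f_\star\|_{L^2}^2+\E_\beps\|\widehat f_\lambda^{\,\beps}\|_{L^2}^2$, a bias term plus a variance term, each a deterministic function of $(\bx_i)_{i\le n}$.

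For the variance, the push-through identity gives $\E_\beps\|\widehat f_\lambda^{\,\beps}\|_{L^2}^2\le\tfrac{\sigma_\eps^2}{n}\,\Tr[(\widehat\Hop+\lambda\id)^{-1}\Hop]$, the empirical degrees of freedom. To replace $\widehat\Hop$ by $\Hop$ I would condition on the event $\cA=\{\,\tfrac12(\Hop+\lambda\id)\preceq\widehat\Hop+\lambda\id\preceq\tfrac32(\Hop+\lambda\id)\,\}$, on which $\Tr[(\widehat\Hop+\lambda\id)^{-1}\Hop]\le 2\,\cN(H,\lambda)$ and which also controls the bias below. The event $\cA$ holds with high probability by a self-adjoint intrinsic-dimension Bernstein inequality applied to the i.i.d.\ centered summands $(\Hop+\lambda\id)^{-1/2}\phi(\bx_i)\otimes\phi(\bx_i)(\Hop+\lambda\id)^{-1/2}$: their operator norm is at most $\sup_\bx H(\bx,\bx)/\lambda\le R^2/\lambda$ and their intrinsic dimension is $\cN(H,\lambda)$, which produces precisely the requirement $n\gtrsim\tfrac{R^2}{\lambda}(1+\log\tfrac{R^2}{\lambda})$ and makes $\P(\cA^c)$ smaller than any fixed inverse polynomial in $n$.

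For the bias I would introduce a comparison function $g\in\cH$ and use the standard splitting $\widehat f_\lambda^{\,0}-f_\star=(\widehat f_\lambda^{\,0}-g^{\mathrm{KRR}})+(g^{\mathrm{KRR}}-g)+(g-f_\star)$, where $g^{\mathrm{KRR}}$ is KRR run on the noiseless labels $(g(\bx_i))_{i\le n}$. The first term is KRR applied to the residuals $f_\star(\bx_i)-g(\bx_i)$ and is controlled in $L^2$ by $\|f_\star-g\|_{L^2}$, passing from the empirical to the population $L^2$ norm on $\cA$; the second is bounded by $\lambda\|g\|_\cH^2$ via the elementary operator inequality $\lambda(\widehat\Hop+\lambda\id)^{-1}\preceq 2\lambda(\Hop+\lambda\id)^{-1}$ valid on $\cA$; the third is the approximation error. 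Taking the infimum over $g\in\cH$ yields the term $\inf_{f\in\cH}\{\|f-f_\star\|_{L^2}^2+\lambda\|f\|_\cH^2\}$; here square-integrability of $\bx\mapsto H(\bx,\bx)$ ensures the operators are trace class so every manipulation is legitimate.

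Finally I would handle the complement $\cA^c$ using only crude deterministic bounds (the empirical degrees of freedom is at most $R^2/\lambda$ and the ridge fit is controlled by $\|f_\star\|_{L^\infty}$ and $\lambda$); since $\P(\cA^c)$ decays faster than $n^{-2}$ under the stated lower bound on $n$, these contributions are absorbed into the $\tfrac{24}{n^2}\|f_\star\|_{L^\infty}^2$ term, and collecting the pieces with explicit constants gives \eref{eq:KRR_UB_thm}. I expect the main obstacle to be the operator concentration step --- controlling $\widehat\Hop$ versus $\Hop$ in the $(\Hop+\lambda\id)^{-1/2}$-preconditioned norm with the right $\cN(H,\lambda)$ dependence and only a logarithmic overhead --- together with the bookkeeping that keeps the bad-event contributions at order $1/n^2$ with clean numerical constants.
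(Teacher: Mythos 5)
Your sketch is essentially the standard argument behind this result: the paper itself does not prove the theorem but imports it verbatim from \cite[Theorem 7.2]{bach2021learning}, and your bias--variance decomposition, preconditioned empirical-covariance concentration via an intrinsic-dimension Bernstein inequality (which is exactly where the condition $n \geq \frac{5R^2}{\lambda}(1+\log\frac{R^2}{\lambda})$ arises), and crude handling of the bad event to produce the $\frac{24}{n^2}\|f_\star\|_{L^\infty}^2$ term follow the same route as the cited proof. The only minor imprecisions (the exact variance is $\frac{\sigma_\eps^2}{n}\Tr[(\what\Hop+\lambda\id)^{-1}\Hop(\what\Hop+\lambda\id)^{-1}\what\Hop]$, which you then dominate as you state, and the bad-event probability only needs to be $O(n^{-2})$, not superpolynomially small) do not affect correctness.
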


Let us comment on the upper-bound in Eq.~\eref{eq:KRR_UB_thm}. The first term corresponds to an upper bound on the variance: $\cN (H, \lambda)$ is sometimes called the \textit{degrees of freedom} or the \textit{effective dimension} of the kernel $H$. The second term bounds the bias term and corresponds to an approximation error. In particular, for any $r>0$,
\begin{align}
\inf_{f \in \cH} \left\{ \| f - f_\star \|_{L^2}^2 + \lambda \| f \|_\cH^2 \right\} \leq \lambda^{r} \| \Hop^{-r/2} f_\star \|_{L^2}^2\, ,
\end{align}
where we recall that $\Hop$ is the integral operator associated to $H$ (see Eq.~\eref{eq:def_integral_operator}).
The third term can be removed by a more intricate analysis. 

From the above discussion, it is natural to consider the following two assumptions on $H$ and $f_\star$, that are standard in the kernel literature:
\begin{itemize}
    \item[(B1)] \textit{Capacity condition:} $\cN (H , \lambda) \leq C_H \lambda^{-1/\alpha}$ with $\alpha > 1$.
    
    \item[(B2)] \textit{Source condition:} there exists $\beta>0$ such that $\| \Hop^{-\beta/2} f_\star \|_{L^2}^2 =: B_{f_\star}^2 < \infty$.
\end{itemize}

Intuitively, the capacity condition (B1) characterizes the size of the RKHS: for increasing $\alpha$, the RKHS contains less and less functions. It is verified when the eigenvalues $\lambda_j$'s of $H$ decay at the rate $j^{-\alpha}$. For example, taking the Matern kernel of order $s > d/2$, whose RKHS is the Sobolev space of order $s$ (i.e., functions with bounded $s$-order derivatives), we have $\alpha = 2s/d$ (e.g., see \cite{harchaoui2007testing}). The source condition (B2) characterizes the regularity of the target function (the `source') with respect to the kernel: $\beta = 1$ is equivalent to $f_\star \in \cH$, while $\beta > 1$ corresponds to $f_\star$ more smooth (and $\beta <1$ less smooth $f_\star$).

Assuming (B1) and (B2) in Theorem \ref{thm:KRR_UB}, we get the bound
\begin{equation}\label{eq:rate_n_KRR}
    \begin{aligned}
    \E_{\beps} \{ R (f_\star , \hat f_\lambda ) \} \leq&~ 16 C_H \frac{\sigma^2_\eps}{n}  \lambda^{-1/\alpha} + 16 B_{f_\star}^2 \lambda^{\beta} + \frac{24}{n^2} \| f_\star \|_{L^\infty}^2 \\
    =&~ 32 \sigma_{\eps}^2 B_{f_\star}^{\frac{2}{\alpha \beta +1}} \left( \frac{C_H}{n}\right)^{\frac{\alpha \beta}{\alpha \beta +1}} + \frac{24}{n^2} \| f_\star \|_{L^\infty}^2\, ,
    \end{aligned}
\end{equation}
where in the second line, we balanced the two terms by taking $\lambda_* := \left( \frac{C_H \sigma_\eps^2 }{B_{f_\star}^2 n} \right)^{\frac{\alpha}{\alpha \beta +1}}$. Note that in order to use Theorem \ref{thm:KRR_UB}, we need further to constrain $n \geq \frac{5R^2 }{\lambda}\left( 1 + \log \frac{R^2}{\lambda} \right)$. For simplicity, we will choose $r > \frac{\alpha - 1}{\alpha}$, so that this condition is verified for $n$ sufficiently large.

\begin{remark}\label{rmk:curse_d_classical}
The rate in $n$ in Eq.~\eref{eq:rate_n_KRR} is minmax optimal over all functions that verify assumptions (A1) and (A2) \cite{caponnetto2007optimal}. However, for large $d$, the RKHS is composed of very smooth functions (e.g., Sobolev spaces of order $s$ are RKHS if and only if $s> d/2$, i.e., if the order of the bounded derivatives grows with the dimension $d$) and $\beta$ will be small, such that $\beta \alpha \approx \kappa/d$ for functions with bounded derivatives up to order $\kappa$. In that case, the risk decreases at the rate $n^{-O(\frac{\kappa}{d})}$: KRR suffers from the curse of dimensionality when $\kappa$ does not scale with $d$. As a consequence, the bound~\eref{eq:rate_n_KRR} is vacuous when $n$ does not scale exponentially in $d$, which led several groups to derive finer bounds on KRR in the high dimensional regime (see Section \ref{sec:KRR_HD}).
\end{remark}

Let us now apply Theorem \ref{thm:KRR_UB} and Eq.~\eref{eq:rate_n_KRR} to our convolutional kernels to show Theorems \ref{thm:CK_KRR_fixed_d} and \ref{thm:CK_AP_KRR_fixed_d}.

\begin{proof}[Proof of Theorem \ref{thm:CK_KRR_fixed_d}]
First notice that $H^{\sCK} (\bx , \bx) = h(1) =:R^2$ and we can therefore apply Theorem \ref{thm:KRR_UB}. The effective dimension of $H^{\sCK}$ is bounded by
\[
\begin{aligned}
\cN (\Hop^{\sCK}, \lambda) =&~ \frac{\xi_{q,0} }{ \xi_{q,0} + \lambda} + \sum_{\ell = 1}^q \sum_{S \in \cE_\ell} \frac{\xi_{q,\ell} r(S)/d }{\xi_{q,\ell} r(S)/d + \lambda} \\
\leq&~ \frac{d \xi_{q,0} }{\xi_{q,0} + d\cdot\lambda} + \sum_{\ell = 0}^q \frac{\xi_{q,\ell} }{\xi_{q,\ell} + d \cdot \lambda}  \sum_{S \in \cE_\ell} r(S) \\
=&~ d \sum_{\ell = 0}^q B ( \Cube^q , \ell) \frac{\xi_{q,\ell}  }{\xi_{q,\ell} + d \cdot \lambda}   = d \cN (h, d \cdot \lambda)\, ,
\end{aligned}
\]
where we used that $r(S) \geq 1$ in the second line and $\cN (h , \lambda)$ is the effective dimension of the inner-product kernel $h$ on $\Cube^q$. We deduce from (A1) that $\cN (H^{\sCK}, \lambda) \leq C_h d^{1 - 1/\alpha} \lambda^{-1/\alpha}$. Furthermore, from (A2) and the assumption that $\E \{ g_k (\bx ) \} =0$, we have
\[
\begin{aligned}
\| (H^{\sCK})^{- \beta /2 } f_\star \|_{L^2}^2 =&~ d^\beta \sum_{\ell = 1}^q \xi_{q,\ell}^{-\beta} \sum_{S \in \cC_{\ell} } \sum_{k \in [d]} r(S)^{-\beta} \left( \sum_{u = 0}^{r(S) - 1} \< g_{k - u} , Y_{u+S} \>_{L^2} \right)^2 \\
\leq &~ d^{\beta} \sum_{\ell = 1}^q \xi_{q,\ell}^{-\beta} \sum_{S \in \cC_{\ell} } \sum_{k \in [d]} r(S)^{1-\beta}  \sum_{u = 0}^{r(S) - 1} \< g_{k - u} , Y_{u+S} \>_{L^2}^2 \\
\leq&~ d^{\beta} q^{1 - \beta} \sum_{k = 1}^d \| h^{-\beta /2} g_k \|_{L^2}^2 \leq d^{\beta} q B^2\, .
\end{aligned}
\]
Injecting the two above bounds in Eq.~\eref{eq:rate_n_KRR}, we deduce that there exists constants $C_1, C_2, C_3$ that only depends on the constants in (A1) and (A2), and $h(1), \sigma^2_\eps$ (but independent of $d$), such that taking $n \geq C_1 \max ( \| f_\star \|^2_{L^\infty} , d )$ and $\lambda_* = \frac{C_2}{d} (d/n)^{\frac{\alpha}{\alpha \beta +1}}$, we get
\[
\E_{\beps} \big\{ R ( f_\star , \hat f_{\lambda_\star} ) \big\} \leq C_3 \left( \frac{d}{n} \right)^{\frac{\alpha \beta}{\alpha \beta +1 }} \, .
\]
\end{proof}

\begin{proof}[Proof of Theorem \ref{thm:CK_AP_KRR_fixed_d}]
The proof is similar to the proof of Theorem \ref{thm:CK_KRR_fixed_d}. Notice that $H^{\sCK}_{\omega} ( \bx , \bx ) \leq h(1)$, and that the effective dimension of $H^{\sCK}_{\omega}$ is bounded by
\[
\begin{aligned}
\cN (H^{\sCK}_\omega, \lambda) =&~ \sum_{j = 1}^d \sum_{\ell = 1}^q \sum_{S \in \cC_\ell} \frac{\xi_{q,\ell} r(S) \kappa_j /d }{\xi_{q,\ell} r(S) \kappa_j / d + \lambda} \\
\leq&~ \sum_{j = 1}^d \sum_{\ell = 1}^q \sum_{S \in \cC_\ell} r(S) \frac{\xi_{q,\ell}  }{\xi_{q,\ell} +d \lambda/\kappa_j}   = \sum_{j = 1}^d  \cN (h, d \lambda/\kappa_j) \leq C_h d^{- 1/\alpha} \lambda^{-1/\alpha} \sum_{j = 1}^d \kappa_j^{1/\alpha}\, ,
\end{aligned}
\]
where we used condition (A1). Denoting $d_{\seff} = \sum_{j = 1}^d (\kappa_j / \omega)^{1/\alpha}$, the rest of the proof follows from the proof of Theorem \ref{thm:CK_KRR_fixed_d} with $d$ replaced by $d_{\seff} \omega^{1 / \alpha}$ and $B^2$ replaced by $\omega^\beta B^2$.
\end{proof}

\begin{remark}
Note that the requirement $\| (\Hop_{\omega}^{\sCK} / \omega )^{-\beta / 2} f_\star \|_{L^2} \leq B$ is to make the result comparable to the other theorems when we consider target functions with low-frequencies. For a cyclic invariant function, we get exactly $\| (\Hop_{\omega}^{\sCK} / \omega )^{-\beta / 2} f_\star \|_{L^2} = \| (\Hop^{\sCK} )^{-\beta / 2} f_\star \|_{L^2} $.
\end{remark}

\section{Generalization error of KRR in high dimension}
\label{sec:KRR_HD}

In Section \ref{sec:KRR_classical}, we considered upper bounds on the test error of KRR using the standard \textit{capacity} and \textit{source conditions}. However, these results suffer from several limitations:
\begin{enumerate}
    \item They only provide an upper bound on the test error. While the decay rate with respect to $n$ is minmax optimal (see \cite{caponnetto2007optimal}), this is not strong enough to show, for example, a statistical advantage of using local average pooling, which appears as a prefactor $d_{\seff}$, and which would require a lower bound matching the upper bound within a constant factor.
    
    \item As mentioned in Remark \ref{rmk:curse_d_classical}, the bound is of order $n^{-1/O(d)}$, except when the target function has smoothness order increasing with $d$. This bound is non-vacuous only if $n = \exp ( O(d) )$ which is impractical in modern image datasets where typically $d \geq 100$. This motivates a new type of question: given $n \asymp d^{\alpha}$, what is the prediction error achieved by KRR for a given function?
    
    \item In order to achieve the bound Eq.~\eref{eq:rate_n_KRR}, one need to carefully balance the bias and the variance terms by setting the regularization parameter. This is in contrast with modern practice which usually train until interpolation (which corresponds to setting $\lambda \to 0$).
\end{enumerate}

Given the above limitations, several recent works have instead considered a high-dimensional setting where the number of samples scales with $d$, and derived asymptotic test errors, exact up to a vanishing additive error \cite{ghorbani2021linearized,ghorbani2020neural,mei2021generalization}. In addition to these works, several papers have derived general estimates for the test error using non-rigorous methods \cite{jacot2020kernel,canatar2021spectral,cui2021generalization} that are believe to be correct in the high dimensional limit and which show great agreement with numerical experiments. The picture that emerges in this regime is much more precise than in the classical regime: KRR approximately acts as a \textit{shrinkage operator} on the target function (not assumed to be in a particular space anymore), with shrinkage parameter that scales as a self-induced regularization parameter over the number of samples.

More precisely, \cite{mei2021generalization} shows the following: considers a kernel $H_d :\R^d \times \R^d \to \R$ with eigenvalues $(\lambda_{d,j})_{j \geq 1}$ in nonincreasing order and $n \equiv n(d)$ the number of samples.  Let $m \equiv m(d)$ be an integer such that $m \leq n^{1 - \delta}$ and 
\[
\lambda_{d,m+1} \cdot n^{1 + \delta} \leq \sum_{j = m+1}^\infty \lambda_{d,j}\, ,
\]
for some $\delta >0$. Then, assuming some additional conditions insuring that the kernel $H_d$ is `spread-out' and well behaved, the KRR solution
\begin{equation}\label{eq:KRR_problem}
\hat f_{\lambda} = \argmin_{f \in \cH_d}  \left\{ \frac{1}{n} \sum_{i = 1}^n \big( y_i - f (\bx_i ) \big)^2 + \frac{\lambda}{n} \| f \|_{\cH_d}^2 \right\}\, ,
\end{equation}
is equal up to a vanishing additive $L^2$-error (as $d \to \infty$) to the following effective ridge regression estimator
\begin{equation}\label{eq:effective_KRR}
\hat f^{\seff}_{\lambda_{\seff}} = \argmin_{f \in \cH_d}  \left\{\| f_\star - f \|_{L^2}^2 + \frac{\lambda_{\seff}}{n} \| f \|_{\cH_d}^2 \right\}\, ,
\end{equation}
where $\lambda_{\seff} = \lambda + \sum_{j = m+1}^\infty \lambda_{d,j}$. The effective estimator \eref{eq:effective_KRR} amounts to replacing the empirical risk in Eq.~\eref{eq:KRR_problem} by its population counterpart $\| f_\star - f \|_{L^2}^2 = \E_{\bx} \{ (f_\star (\bx) - f ( \bx))^2 \}$. In words, in high dimension, KRR with a finite number of samples is the same as KRR with infinite number of samples but with a larger ridge regularization.

The solution of Eq.~\eref{eq:effective_KRR} admits an explicit solution in terms of a \textit{shrinkage operator} in the basis $(\psi_{d,j} )_{j \geq 1}$ of eigenfunctions of $H_d$:
\begin{align}\label{eq:shrinkage_operator}
    f_\star (\bx ) = \sum_{j = 1}^\infty c_j \psi_{d,j} (\bx)  \,\,\,\,\,\,\,\, \mapsto \,\,\,\,\,\,\,\, \hat f^{\seff}_{\lambda_{\seff}} = \sum_{j = 1}^\infty \frac{\lambda_{d,j}}{\lambda_{d,j} +\frac{\lambda_{\seff} }{ n }} \cdot c_j \cdot \psi_{d,j} (\bx) \, .
\end{align}
Hence, KRR will fit better the target function along eigendirections associated to larger eigenvalues of $H$. If $\lambda_{d,j} \gg \lambda_{\seff} /n$, KRR fits perfectly $f_\star$ along the eigendirection $\psi_{d,j}$, while if $\lambda_{d,j} \ll \lambda_{\seff} /n$, KRR does not fit this eigendirection at all. This phenomena has been referred as the \textit{spectral bias} and \textit{task-kernel alignment} of kernel ridge regression in several works.

Finally, notice from Eq.~\eref{eq:shrinkage_operator} that the minimum test error is achieved for the regularization parameter $\lambda = 0$, which corresponds to the KRR estimator fitting perfectly the training data. In other words, the \textit{interpolating solution is optimal for kernel ridge regression in high dimension.}

\subsection{Generalization error of convolutional kernels in high dimension}
\label{app:theorems_KRR_HD}

Consider a sequence of integers $\{ d(\q) \}_{\q \ge 1}$ which corresponds to a sequence of image spaces $\bx \in \Cube^d$ of increasing dimension, and assume $  d (\q)/2 \geq \q \geq d(q)^{\delta}$ for some constant $\delta >0$. For ease of notations, we will keep the dependency on $q$ implicit, i.e., $d := d(q)$. Let $\{ h_{\q} \}_{\q\ge 1}$ be a sequence of inner-product kernels $h_q : \R \to \R$.

\paragraph{Test error with one-layer convolutional kernel:} we first consider a vanilla one-layer convolutional kernel $H^{\sCK}$ as defined in Eq.~\eref{eq:CK_def}. We will assume that the kernels $\{ h_q \}_{q \geq 1}$ verify the following `genericity' condition.

\begin{assumption}[Generecity assumption on $\{ h_q \}_{q \geq 1}$ at level $\lvn \in\naturals$]\label{ass:kernel_loc}
For $\{ h_q \}_{q \ge 1}$ a sequence of inner-product kernels $h_q : \R \to \R$, we assume the following conditions to hold. There exists $\lvn ' \geq 1 / \delta + 2\lvn +3$ where $\delta >0$ verifies $\q \geq d^{\delta}$ and a constant $C$ such that $h_q (1) \leq C$, and 
\begin{align}
 \min_{k \le \lvn - 1 } \q^{\lvn - 1 -k} \xi_{q,k} B(q,k)  = & \Omega_d(1), \label{eq:ass_loc_b1}\\
 \min_{k \in \{ \lvn , \lvn+1 , \lvn' \} } \xi_{q,k} B(q,k) = & \Omega_d(1),  \label{eq:ass_loc_b2} \\ 
 \max_{k = 0 , \ldots , \lvn '} \q^{\lvn'-k+1}\xi_{q,q-k} B(q, q-k)  =&~ O_d(1).
 \label{eq:ass_loc_b3}
\end{align}
\end{assumption}

Assumption \ref{ass:kernel_loc} will be verified by standard kernels, e.g., the Gaussian kernel. We discuss this assumption in Section \ref{app:checking_assumptions} and present sufficient conditions on the activation function $\sigma$ for its associated CNTK to verify Assumption \ref{ass:kernel_loc}. 

Recall that we denoted $L^2(\Cube^d,  \loc_q)$ the space of local functions, i.e., that can be decomposed as $f (\bx) = \sum_{k \in [d]} f_k ( \bx_{(k)} )$. Denote $h_{q, > \ell}$ the inner-product kernel $h_q$ with its $(\ell +1)$-first Gegenbauer coefficients set to $0$, i.e.,
\begin{align}\label{eq:truncated_h_q}
    h_{q, > \ell} (\< \bu, \bv \> / \q) = \sum_{k = \ell+1}^\q \xi_{\q, k} B (\Cube^\q ; k ) Q^{(\q)}_k (\< \bu, \bv \>)\, ,
\end{align}
for any $\bu,\bv \in \Cube^\q$. The following result is a consequence of the general theorem on the generalization error of KRR in \cite{mei2021generalization}.

  \begin{theorem}[Test error of CK in high dimension]\label{thm:CK_KRR_infinite_d}
       Let $\{ f_d \in L^2(\Cube^d,  \loc_q)\}_{\q \ge 1}$ be a sequence of local functions. Let $(\bx_i)_{i \in [n(d)]} \sim_{\siid} \Unif (\Cube^d )$ and $y_i = f_d (\bx_i) + \eps_i$ with $\eps_i \sim_{\siid} \normal (0, \sigma_\eps^2) $. Assume $d \cdot \q^{\lvn -1 + \delta} \leq n \leq d \cdot \q^{\lvn -\delta}$ for some $\delta >0$ and let $\{ h_{q} \}_{q\ge 1}$ be a sequence of activation functions satisfying Assumption \ref{ass:kernel_loc} at level $\lvn$. Consider $\{ H^{\sCK,d} \}_{q \ge 1}$ the sequence of convolutional kernels associated to $\{ h_{q} \}_{q \ge 1}$ as defined in Eq.~\eref{eq:CK_def}. Then the following holds for the solution $\hat f_{\lambda}$ of KRR with kernels $\{ H^{\sCK,d} \}_{q \ge 1}$.

      For any regularization parameter $\lambda \geq 0$, define the effective regularization $\lambda_{\seff}:=\lambda + h_{q, > \lvn} (1)$. Then for  
    any $\eta> 0$, we have 
\begin{align}\label{eq:approx_shrink}
  \big\|\hf_{\lambda} - \hf_{\lambda_{\seff}}^{\seff}\big\|_{L^2}^2 = o_{d, \P}(1) \cdot ( \| f_d \|_{L^{2+\eta}}^2 +
  \noise^2). 
  \end{align}
\end{theorem}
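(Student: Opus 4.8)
### Proof strategy for Theorem \ref{thm:CK_KRR_infinite_d}

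The plan is to reduce the statement to the general KRR-in-high-dimension theorem of \cite{mei2021generalization}, which says that under suitable ``spread-out'' and eigenvalue-decay conditions, the KRR estimator $\hf_\lambda$ is $L^2$-close to the effective shrinkage estimator $\hf_{\lambda_\seff}^\seff$ with $\lambda_\seff = \lambda + \sum_{j > m}\lambda_{d,j}$ for an appropriate cutoff $m$. So the work is entirely in (i) identifying the correct cutoff level $m$ from the sample size regime $d\q^{\lvn-1+\delta}\le n\le d\q^{\lvn-\delta}$, (ii) verifying that with this $m$ the tail sum $\sum_{j>m}\lambda_{d,j}$ equals $h_{q,>\lvn}(1) + o(1)$ up to the relevant precision, and (iii) checking the hypotheses of the general theorem for the explicit eigenstructure of $H^{\sCK}$ given in Proposition \ref{prop:diag_CK}.

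\textbf{Step 1: eigenvalue bookkeeping.} From Proposition \ref{prop:diag_CK}, $H^{\sCK}$ has eigenvalues $\xi_{q,0}$ (multiplicity $1$) and $r(S)\xi_{q,\ell}/d$ for $S\in\cE_\ell$, $1\le \ell\le q$. For each $\ell$ there are $\Theta_d(d\q^{\ell-1})$ sets $S\in\cE_\ell$ (the diameter constraint contributes a factor $\q^{\ell-1}$ up to the ordering of the $\ell$ chosen coordinates within a window, and the free choice of the window base contributes $d$), and $r(S)=q+1-\gamma(S)\in[1,q]$, so the $\ell$-th block of eigenvalues has magnitude $\asymp \xi_{q,\ell}/d$ (times a bounded factor) with total multiplicity $\asymp d\q^{\ell-1}$. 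Using Assumption \ref{ass:kernel_loc}, namely $\q^{\lvn-1-k}\xi_{q,k}B(q,k)=\Omega_d(1)$ for $k\le\lvn-1$ and $\xi_{q,k}B(q,k)=\Omega_d(1)$ for $k\in\{\lvn,\lvn+1\}$, together with $B(q,k)\asymp \q^k/k!$, one reads off that the $\ell$-th eigenvalue block sits at scale $\asymp \q^{-\ell}/d$ (up to constants and to the $r(S)$ weights), and the number of eigenvalues at scale $\gtrsim \q^{-\lvn}$ is $\Theta_d(d\q^{\lvn-1})$. Set $m$ to be the total multiplicity of all eigenvalues of size $\gg \lambda_\seff/n$; in the regime $n\asymp d\q^{\lvn-1+\nu}$ this is $m\asymp d\q^{\lvn-1}\le n^{1-\delta'}$, and the first omitted eigenvalue satisfies $\lambda_{d,m+1}\cdot n^{1+\delta'}\lesssim \q^{-\lvn}\cdot n^{1+\delta'}/d \ll \sum_{j>m}\lambda_{d,j}$ because the tail sum is $\Theta_d(1)$ (it equals $h_{q,>\lvn}(1)$ up to vanishing corrections). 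This verifies the cutoff condition of \cite{mei2021generalization}.

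\textbf{Step 2: identifying the tail.} The tail $\sum_{j>m}\lambda_{d,j}$ consists of all eigenvalue blocks $\ell\ge\lvn+1$ plus a small residual from the within-block $r(S)$-weights at level $\lvn$. Summing, $\sum_{\ell\ge\lvn+1}\sum_{S\in\cE_\ell} r(S)\xi_{q,\ell}/d = \sum_{\ell\ge\lvn+1}\xi_{q,\ell} B(\Cube^q;\ell) = h_{q,>\lvn}(1)$, using the normalization identity $\sum_{\ell}\xi_{q,\ell}B(\Cube^q;\ell)=h(1)$ and $\frac1d\sum_{S\in\cE_\ell}r(S)=B(\Cube^q;\ell)$ (each size-$\ell$ subset of a $q$-window, counted with the right multiplicity across $d$ windows). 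The residual at level $\lvn$ — the eigendirections with $\gamma(S)$ close to $q$, i.e.\ $r(S)$ small — is controlled by Assumption \ref{ass:kernel_loc}(\ref{eq:ass_loc_b3}), which forces the high-diameter tail to be negligible; this is exactly the $\proj_{\cE_{\le\lvn,\nu}}$ boundary in the informal statement. So $\lambda_\seff = \lambda + h_{q,>\lvn}(1) + o_d(1)$ as claimed.

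\textbf{Step 3: verifying the regularity hypotheses and concluding.} It remains to check the technical ``hypercube spread-out'' conditions of \cite{mei2021generalization} (boundedness of the eigenfunctions in appropriate norms, a concentration/hypercontractivity input for the empirical kernel matrix, and that the eigenvalue profile has no pathological near-degeneracies between the $m$-th and $(m+1)$-th eigenvalues). The eigenfunctions $Y_S$ are $\pm1$-valued, hence uniformly bounded, and the hypercube satisfies the requisite hypercontractivity; the genericity Assumption \ref{ass:kernel_loc} provides precisely the quantitative gap between levels $\lvn-1$, $\lvn$, $\lvn+1$, and $\lvn'$ needed to separate ``in-RKHS-effectively'' from ``in-the-tail'' directions. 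The condition $\q\ge d^\delta$ and $\lvn'\ge 1/\delta+2\lvn+3$ is what makes the polynomial slack in the general theorem's exponents ($n^{\pm\delta}$) absorb all the combinatorial prefactors. Once the hypotheses are in place, \cite{mei2021generalization} directly yields $\|\hf_\lambda - \hf_{\lambda_\seff}^\seff\|_{L^2}^2 = o_{d,\P}(1)\cdot(\|f_d\|_{L^{2+\eta}}^2 + \noise^2)$, which is the claim.

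\textbf{Main obstacle.} The bulk of the difficulty is Step 3 together with the precise counting in Step 1: one must show that the eigenvalue list of $H^{\sCK}$, once sorted in nonincreasing order, has a clean ``staircase'' structure with well-separated plateaus at scales $\q^{-\ell}/d$ and controlled in-plateau spread (the $r(S)$ factors), so that the hypotheses of \cite{mei2021generalization} apply with the cutoff $m\asymp d\q^{\lvn-1}$ — in particular checking that no eigenvalue lands awkwardly near the $\lambda_\seff/n$ threshold, which is exactly where Assumption \ref{ass:kernel_loc}, and especially its level-$\lvn'$ tail control \eqref{eq:ass_loc_b3}, does the work. The reduction to \cite{mei2021generalization} itself is then essentially a black-box invocation.
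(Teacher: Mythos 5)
Your proposal follows essentially the same route as the paper's proof: reduce to Theorem 4 of \cite{mei2021generalization}, take the cutoff $\evn \asymp d\q^{\lvn-1} = O(\q^{-\delta}n)$ at the level-$\lvn$ eigenvalue plateau, identify the tail trace with $h_{q,>\lvn}(1)$ via $\frac{1}{d}\sum_{S\in\cE_\ell}r(S)=B(\Cube^q;\ell)$, and verify the kernel-concentration and eigenvalue conditions using hypercube hypercontractivity of low-degree polynomials, the constant kernel diagonal, and the level-$\lvn'$ part of Assumption \ref{ass:kernel_loc} (with $\lvn'\ge 1/\delta+2\lvn+3$) for the trace-ratio bounds. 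One small correction: the negligible level-$\lvn$ residual from sets of diameter close to $q$ is handled in the paper by a direct count, $\sum_{S\in\cE_{\lvn}:\,\gamma(S)\ge q+1-\alpha} r(S)=O(d\q^{\lvn-2})$ combined with $\xi_{q,\lvn}=O(\q^{-\lvn})$, not by Eq.~\eqref{eq:ass_loc_b3}, whose actual role is to prevent spuriously large eigenvalues at degrees near $q$ (where $B(\Cube^q;\ell)$ is small) when bounding $\Tr(\Hop_{d,>u})$, $\Tr(\Hop_{d,>u}^2)$ and $\Tr(\Hop_{d,>u}^4)$ at the auxiliary cutoff $u$.
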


The proof of Theorem \ref{thm:CK_KRR_infinite_d} is deferred to Section \ref{app:proof_HD_CK}.

Let us expound on the predictions of Theorem \ref{thm:CK_KRR_infinite_d}. First, recall that $\hat f_{\lambda_{\seff}}^{\seff}$ is given explicitly in Eq.~\eref{eq:shrinkage_operator} by a shrinkage operator with parameter $\lambda_{\seff}$. From Assumption \ref{ass:kernel_loc} and taking $\lambda = 0$, the shrinkage operator is of order $1$
\[
\lambda_{\seff} =  h_{q, > \lvn} (1) =  \sum_{\ell = \lvn+1}^q \xi_{q,\ell} B ( \Cube^q ; \ell) = \Theta_q (1)\, .
\]
From the eigendecomposition of $H^{\sCK}$ introduced in Proposition \ref{prop:diag_CK}, KRR fits perfectly $f_\star$ along the eigendirection $Y_S$ with $|S| = \ell$ if $n \cdot \xi_{d,\ell} r(S)/d \gg \lambda_{\seff}$, while it does not fit this eigendirection at all if $n \cdot \xi_{d,\ell} r(S)/d \leq \lambda_{\seff}$. Consider $n = d\cdot \q^{\lvn - 1 + \alpha}$:
\begin{itemize}
    \item KRR fits the eigendirections corresponding to the homogeneous polynomials of degree $\lvn -1$ and less, and of degree $\lvn$ for subsets $S$ such that $\gamma (S) \ll \q - \q^{1 - \alpha}$.
    \item KRR does not fit at all the eigendirections correpsonding to homogeneous polynomials of degree $\lvn +1$ and larger, and degree $\lvn$ for subsets $S$ such that $\gamma (S) \gg \q - \q^{1 - \alpha}$.
\end{itemize} 
In words, for $d \cdot \q^{\lvn - 1} \ll n \ll d \cdot \q^{\lvn}$, KRR fits at least a degree-$(\lvn-1)$ polynomial approximation to $f_\star$ and at most a degree-$\lvn$ polynomial approximation. As $n$ increases from $d \cdot \q^{\lvn - 1}$ to $d \cdot \q^{\lvn }$, KRR first fits degree-$\lvn$ homogeneous polynomials that have smaller diameter $\gamma(S)$ (i.e., `more localized').

\paragraph{Test error of CK with global average pooling:} we consider the kernel $H^{\sCK}_{\sGP}$ given by a convolutional layer followed by global average pooling:
\begin{equation}\label{eq:CK_GP_def}
    H^{\sCK}_{\sGP}  (\bx , \by ) =  \frac{1}{d} \sum_{k, k' \in [d]} h \left( \< \bx_{(k )} , \by_{(k')} \> / q \right) \, ,
\end{equation}
In addition to the genericity condition, we will assume that the kernels $\{ h_q \}_{q \geq 1}$ verify the following differentiability condition.

\begin{assumption}[Differentiability assumption on $\{ h_q \}_{q \geq 1}$ at level $\lvn \in \naturals$]\label{ass:diff_kernel}
For $\{ h_q \}_{q \geq 1}$ a sequence of inner-product kernels $h_\q : \R \to \R$, we assume the following conditions to hold. There exists $v\geq \max(2/\delta , \lvn)$ where $\delta >0$ verifies $q \geq d^\delta$ such that $h_q$ is $(v+1)$-differentiable and for $k \leq v$,
\[
\begin{aligned}
 \sup_{\gamma \in [-1,1]} \big\vert h^{(v+1)}_{q, > v} ( \gamma ) \big\vert \leq&~ O_\q (1),\\
\big\vert h^{(k)}_{\q, > v} ( 0) \big\vert \leq&~ O_\q ( \q^{-(v + 1 - k ) /2} ),
\end{aligned}
\]
where we denoted $h_{\q, > v}$ the truncated inner-product kernel $h_q$ as in Eq.~\eref{eq:truncated_h_q}.
\end{assumption}

Assumption \ref{ass:diff_kernel} is used to extend the following theorem to non-polynomial kernel $h_q$ (in particular, it is trivially verified for polynomial kernels by taking $v$ larger than the degree of $h_q$). This assumption is difficult to check in practice, however we provide some examples where it holds in Appendix \ref{app:checking_assumptions}.

Recall that we denoted $L^2(\Cube^d,  \Conv)$ the space of functions that are given by the convolution of a function $g :\R^q \to \R$ with the image $\bx \in \Cube^d$, i.e., $f (\bx) = \sum_{k \in [d]} g ( \bx_{(k)} )$. 

  \begin{theorem}[Test error of CK with GP in high dimension]\label{thm:CK_GP_KRR_infinite_d}
         Let $\{ f_d \in L^2(\Cube^d,  \Conv)\}_{\q \ge 1}$ be a sequence of convolutional functions. Assume $ \q^{\lvn -1 + \delta} \leq n \leq \q^{\lvn -\delta}$ for some $\delta >0$ and let $\{ h_{q} \}_{q\ge 1}$ be a sequence of activation functions satisfying Assumptions \ref{ass:kernel_loc} and \ref{ass:diff_kernel} at level $\lvn$. Consider $\{ H^{\sCK,d}_{\sGP} \}_{q \ge 1}$ the sequence of convolutional kernels with global pooling associated to $\{ h_{q} \}_{q \ge 1}$ as defined in Eq.~\eref{eq:CK_GP_def}. Then the solution $\hat f_{\lambda}$ of KRR with kernels $\{ H^{\sCK,d}_{\sGP} \}_{q \ge 1}$ verifies Eq.~\eref{eq:approx_shrink} with $\lambda_{\seff}:=\lambda + h_{q, > \lvn} (1)$. 
\end{theorem}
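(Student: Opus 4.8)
The plan is to deduce the statement from the general high-dimensional KRR theorem of \cite{mei2021generalization}, following the same template as the proof of Theorem \ref{thm:CK_KRR_infinite_d} in Section \ref{app:proof_HD_CK}, but feeding in the eigendecomposition of $H^{\sCK}_{\sGP}$ in place of that of $H^{\sCK}$. Specializing Proposition \ref{prop:diag_CK_AP} to $\omega = d$ (so that $\kappa_d = d$ and $\kappa_j = 0$ for $j \le d-1$), the nonzero eigenpairs of $H^{\sCK}_{\sGP}$ are the constant function with eigenvalue $d\,\xi_{q,0}$ and, for each $\ell \in [q]$ and each shape class $S \in \cC_\ell$, the cyclic-invariant eigenfunction $\psi_{d,S}(\bx) = \frac{1}{\sqrt d}\sum_{k\in[d]}Y_{k+S}(\bx)$ with eigenvalue $\xi_{q,\ell}\,r(S)$, where $r(S)=q+1-\gamma(S)$. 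The key structural difference with $H^{\sCK}$ is that the $d$ cyclic translates of a shape collapse into a single eigenfunction, so the number of eigenvalues of order $\Theta(q^{-\ell})$ is $\Theta(q^{\ell-1})$ rather than $\Theta(d\,q^{\ell-1})$; this is precisely what moves the transition from $n \asymp d\,q^{\lvn}$ to $n \asymp q^{\lvn}$, and it is why $f_d$ is assumed to lie in $L^2(\Cube^d,\Conv)$, since it must lie in the span of the $\psi_{d,S}$ for the shrinkage formula \eref{eq:shrinkage_operator} to describe the solution.

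Given this eigendecomposition I would verify the three inputs demanded by the abstract theorem. First, the effective ridge: with $\lambda_{\seff} = \lambda + h_{q,>\lvn}(1) = \lambda + \sum_{\ell>\lvn}\xi_{q,\ell}B(q,\ell)$, Assumption \ref{ass:kernel_loc} (parts \eqref{eq:ass_loc_b2}--\eqref{eq:ass_loc_b3}) gives $\lambda_{\seff} = \Theta_q(1)$, with degrees $\ge \lvn'$ contributing negligibly. Second, the threshold index $m$ of \cite{mei2021generalization}: I would take $m$ to be the dimension of $\Span\{\psi_{d,S}: |S|\le \lvn-1\}$ augmented by the degree-$\lvn$ shapes whose $r(S)$ exceeds a cutoff; since $r(S)$ ranges over all of $\{1,\dots,q-\lvn+1\}$ and the number of degree-$\lvn$ shapes of diameter exactly $\gamma$ is $\binom{\gamma-2}{\lvn-2}$, one can place the cutoff so that simultaneously $m \le n^{1-\delta}$ (using $n\le q^{\lvn-\delta}$) and $\lambda_{d,m+1}\,n^{1+\delta}\le \sum_{j>m}\lambda_{d,j}$ (using $\xi_{q,\lvn}B(q,\lvn)=\Theta(1)$ from \eqref{eq:ass_loc_b2} and $n\ge q^{\lvn-1+\delta}$), which reproduces the cut $\gamma(S)\lesssim q(1-q^{-\nu})$ of Theorem \ref{thm:informal_H_CK_GP}. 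Third, the spread-out / concentration hypotheses: hypercontractivity of the eigenfunctions is immediate since $\psi_{d,S}$ is a normalized sum of distinct Fourier characters and products of characters are characters, so $\|\psi_{d,S}\|_{L^4} = O(1) = O(\|\psi_{d,S}\|_{L^2})$.

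The substantive obstacle is this last point, namely controlling the random kernel entries $H^{\sCK}_{\sGP}(\bx,\bx) = \frac1d\sum_{k,k'}h(\langle\bx_{(k)},\bx_{(k')}\rangle/q)$ and $H^{\sCK}_{\sGP}(\bx_i,\bx_j)$, which — unlike the constant diagonal $h(1)$ of $H^{\sCK}$ — are sums of $\Theta(d^2)$ correlated patch-pair terms. This is exactly where Assumption \ref{ass:diff_kernel} is needed. I would split the patch pairs into ``aligned'' pairs, those with $|k-k'|$ small, for which the overlap $\langle\bx_{(k)},\bx_{(k')}\rangle/q$ concentrates near a deterministic value and the corresponding partial sum behaves like a lower-dimensional inner-product-kernel piece; and ``far'' pairs, for which $\langle\bx_{(k)},\bx_{(k')}\rangle/q$ is $O(q^{-1/2+\eps})$ with high probability, so that Taylor-expanding $h_{q,>\lvn}$ at $0$ and invoking $|h^{(k)}_{q,>v}(0)| = O(q^{-(v+1-k)/2})$ together with $\sup|h^{(v+1)}_{q,>v}| = O(1)$ shows their aggregate contribution is $o(1)$; a Hanson--Wright / bounded-differences estimate on the Fourier expansion of the kernel matrix then yields the required concentration (for polynomial $h_q$, Assumption \ref{ass:diff_kernel} holds trivially with $v$ above the degree and this step collapses).

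With the three inputs verified, the theorem of \cite{mei2021generalization} gives that $\hat f_\lambda$ coincides, up to an $L^2$ error of size $o_{d,\P}(1)\cdot(\|f_d\|_{L^{2+\eta}}^2 + \noise^2)$, with the effective estimator $\hat f^{\seff}_{\lambda_{\seff}}$ of Eq.~\eref{eq:effective_KRR} at ridge $\lambda_{\seff} = \lambda + h_{q,>\lvn}(1)$, which is exactly \eqref{eq:approx_shrink}; the explicit shrinkage description \eref{eq:shrinkage_operator} then yields the informal statement of Theorem \ref{thm:informal_H_CK_GP}. I expect the bookkeeping of the overlap scales $|k-k'|$ — tracking precisely which bounded-dimensional block of eigendirections escapes the effective-ridge regime, and showing that the ``aligned'' block does not disturb the degree count used to fix $m$ — to be the most delicate part of the argument.
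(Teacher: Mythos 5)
Your overall skeleton matches the paper's proof: diagonalize $H^{\sCK}_{\sGP}$ via Proposition \ref{prop:diag_CK_AP} with $\omega=d$, note that the shape classes $\cC_\ell$ with $|\cC_\ell|=\binom{\q-1}{\ell-1}$ replace the $d$-fold degenerate eigenspaces of $H^{\sCK}$, choose the threshold index so that the top block is the cyclic degree-$\le\lvn$ eigenspace (possibly trimmed by a cutoff on $r(S)$; the paper instead assumes $\xi_{q,\lvn}>\q\,\xi_{q,\lvn+1}$ for simplicity and remarks the general case follows as in Theorem \ref{thm:CK_KRR_infinite_d}), identify $\lambda_{\seff}=\lambda+h_{q,>\lvn}(1)=\Theta_q(1)$, and invoke the general theorem of \cite{mei2021generalization}. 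You also correctly single out the genuinely new difficulty relative to Theorem \ref{thm:CK_KRR_infinite_d}: the diagonal entries and row second moments of the GP kernel are no longer deterministic, and this is where Assumption \ref{ass:diff_kernel} enters.

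The gap is in the mechanism you propose for that concentration step. Your ``far pair'' argument (Taylor expansion at $0$ plus Hanson--Wright) only controls the part of the kernel of degree larger than $v$: the derivative bounds of Assumption \ref{ass:diff_kernel} concern $h_{q,>v}$, not $h_{q,>\lvn}$, and for the intermediate degrees $\lvn+1\le\ell\le v$ a pointwise smallness bound does not suffice. Each off-diagonal patch pair contributes $\xi_{q,\ell}B(\Cube^\q;\ell)Q^{(\q)}_\ell(\langle\bx_{(k)},\bx_{(k')}\rangle)$, which is only of order $q^{-\ell/2}$ (up to logarithms) when the overlap is of order $\sqrt{\q}$, so after the $\frac1d\sum_{k,k'}$ normalization the aggregate pointwise bound is of order $d\,q^{-\ell/2}$; since the setting allows $d$ as large as $q^{1/\delta}$, this diverges for $\ell=\lvn+1,\dots,v$ (it is precisely the requirement $v\ge 2/\delta$ that makes the pointwise route work above degree $v$). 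The paper therefore treats the intermediate degrees by a cancellation argument, not a smallness argument: writing $H_{d,>\evn}(\bx,\bx)=\sum_{\ell>\lvn}\xi_{q,\ell}B(\Cube^\q;\ell)\Upsilon^{(\q)}_\ell(\bx)$, it shows in Lemma \ref{lem:gegenbauer_SW} (used in Proposition \ref{prop:concentration_diag_elements}) that the fluctuation of $\Upsilon^{(\q)}_\ell$ has second moment $O(1/\q)$ via a combinatorial matching of the Fourier characters $Y_{i+S}Y_{j+S}$ over pairs of shape classes, and then boosts this to a bound uniform over the $n$ samples by hypercontractivity of the degree-$2\ell$ polynomial fluctuation; the analogous quantities $\Xi^{(\q)}_\ell$ handle $\E_{\bx'}[H_{d,>\evn}(\bx_i,\bx')^2]$. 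Your closing mention of ``a Hanson--Wright / bounded-differences estimate on the Fourier expansion'' gestures toward this but does not supply it. Two secondary corrections: the premise of your aligned/far split is off, since for any $k\ne k'$ (overlapping patches included) the overlap $\langle\bx_{(k)},\bx_{(k')}\rangle$ is a mean-zero quadratic form concentrating at scale $\sqrt{\q}$ around $0$ rather than around a nonzero deterministic value, so the paper treats all off-diagonal pairs alike and splits by polynomial degree instead; and the hypercontractivity hypothesis of \cite{mei2021generalization} must hold uniformly over the span of the top eigenfunctions, not merely for each $\psi_{d,S}$ individually --- this follows from Lemma \ref{lem:hypercontractivity_hypercube} because that span consists of polynomials of degree at most $\lvn'-1$.
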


The proof of Theorem \ref{thm:CK_GP_KRR_infinite_d} is deferred to Section \ref{app:proof_HD_CK_GP}.

The predictions of Theorem \ref{thm:CK_GP_KRR_infinite_d} are similar to the ones of Theorem \ref{thm:CK_KRR_infinite_d} but with a factor $d$ gain in statistical efficiency: this is due to the eigenvalues of $H^{\sCK}_{\sGP}$ being a factor $d$ larger than for $H^{\sCK}$. Therefore, with global average pooling, for $\q^{\lvn - 1} \ll n \ll  \q^{\lvn}$, KRR fits at least a degree-$(\lvn-1)$ invariant polynomial approximation to $f_\star$ and at most a degree-$\lvn$ invariant polynomial approximation. As $n$ increases from $\q^{\lvn - 1}$ to $ \q^{\lvn }$, KRR first degree-$\lvn$ invariant homogeneous polynomials with increasing diameter $\gamma(S)$.

\paragraph{Test error of CK with local average pooling:} In the case of local average pooling with $\omega < d$, the eigenvalues are harder to control. Indeed, we have mixing of the eigenvalues between polynomials of different degree: there exists $j,j' \in [d]$ such that $\xi_{q,\ell}  \kappa_{j} \ll \xi_{q,\ell+1}  \kappa_{j'}$. The eigenvalues are not ordered in increasing degree of their associated eigenfunctions anymore. While this case is potentially tractable with a more careful analysis, we instead introduce a simplified kernel which we believe qualitatively captures the statistical behavior of local average pooling. 

Assume $q \leq \omega/2$ and $\omega$ is a divisor of $d$. Denote $\bx^{(k\omega)} = (x_{k\omega +1} , \ldots , x_{k \omega +\omega})$ the $k$-th segment of length $\omega$ in $[d]$ and $\bx_{(i)}^{(k\omega)} = (x_{k\omega +i}, \ldots , x_{k\omega +q +i})$ the patch of size $q$ with cyclic convention in $\{k\omega+1 , \ldots ,k\omega + \omega\}$. Consider the following convolutional kernel with `non-overlapping' average pooling:
\begin{equation}\label{eq:CK_NO_def}
    H^{\sCK,\sNO}_{\omega} (\bx , \by) = \frac{1}{\omega}  \sum_{k \in [d/\omega]} \sum_{i,j \in [\omega]} h_\q \big(\< \bx^{(k\omega)}_{(i)} , \by^{(k\omega)}_{(j)} \> / q \big)\, ,
\end{equation}
In words, $ H^{\sCK,\sNO}_{\omega} $ is the combination of $d/\omega$ non-overlapping convolutional kernels with global average pooling on images of size $\omega$:
\begin{align}\label{eq:CK_NO_diag}
    \begin{aligned}
    H^{\sCK,\sNO}_{\omega} =&~ \sum_{k \in [d/\omega]}    
    H^{\sCK}_{\sGP} \big( \bx^{(k\omega)} , \by^{(k\omega)} \big) \, \\
    =&~  \sum_{\ell = 0}^q \xi_{q,\ell} \sum_{k \in [d/\omega]} \sum_{S \in \cC_{\ell}} \psi_{k,S} (\bx)\psi_{k,S} (\by)\, ,
\end{aligned}
\end{align}
where $\psi_{k,S} (\bx) = \frac{1}{\sqrt{\omega}} \sum_{ i \in [ \omega]} Y_{i + S} ( \bx^{(k\omega)} )$ where $i+S$ is the translated set with cyclic convention in $[\omega]$. 

Denote $L^2 ( \Cube^d , \loc\Conv)$ the RKHS associated to $H^{\sCK,\sNO}_{\omega}$, which contains functions that are locally convolutions on segments of size $\omega$. For this simplified model, the proof of Theorem \ref{thm:CK_GP_KRR_infinite_d} can be easily adapted and we obtain the following result:

  \begin{corollary}[Test error of CK with NO pooling in high dimension]\label{cor:CK_NO_KRR_infinite_d}
         Let $\{ f_d \in L^2(\Cube^d,  \loc\Conv)\}_{\q \ge 1}$ be a sequence of local convolutional functions. Assume $ (d/\omega) \cdot \q^{\lvn -1 + \delta} \leq n \leq (d/\omega) \cdot \q^{\lvn -\delta}$ for some $\delta >0$ and let $\{ h_{q} \}_{q\ge 1}$ be a sequence of activation functions satisfying Assumptions \ref{ass:kernel_loc} and \ref{ass:diff_kernel} at level $\lvn$. Consider $\{ H^{\sCK,\sNO,d}_{\omega} \}_{q \ge 1}$ the sequence of convolutional kernels with non-overlapping pooling associated to $\{ h_{q} \}_{q \ge 1}$ as defined in Eq.~\eref{eq:CK_NO_def}. Then the solution $\hat f_{\lambda}$ of KRR with kernels $\{ H^{\sCK,\sNO,d}_{\omega} \}_{q \ge 1}$ verifies Eq.~\eref{eq:approx_shrink} with $\lambda_{\seff}:=\lambda + \frac{d}{\omega}h_{q, > \lvn} (1)$. 
\end{corollary}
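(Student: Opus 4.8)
The plan is to reduce the statement to the single--block result of Theorem~\ref{thm:CK_GP_KRR_infinite_d} by exploiting the orthogonal direct--sum structure of $H^{\sCK,\sNO}_\omega$, and then to invoke the general high--dimensional KRR theorem of \cite{mei2021generalization}. \textbf{Step 1 (eigenstructure).} I start from the eigendecomposition \eref{eq:CK_NO_diag}: $H^{\sCK,\sNO}_\omega$ is the orthogonal direct sum of $d/\omega$ copies of the global--pooling kernel $H^{\sCK}_{\sGP}$, the $k$-th copy acting on the coordinates $\{k\omega+1,\dots,(k+1)\omega\}$ through patches of size $q$ with cyclic convention \emph{inside} each length--$\omega$ window. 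Since $q\le\omega/2$, the windows are genuinely non--overlapping, so the eigenfunctions $\psi_{k,S}$ attached to distinct blocks depend on disjoint sets of coordinates; in particular $\{\psi_{k,S}\}$ is an orthonormal system and the families indexed by different $k$ are jointly independent under $\bx\sim\Unif(\Cube^d)$. Carrying out the sum over translates inside each window shows that the eigenvalue attached to $\psi_{k,S}$ with $|S|=\ell$ is $r(S)\,\xi_{q,\ell}$ (the factor $r(S)$ absent from the display in \eref{eq:CK_NO_diag} reappears at this point), independent of the block index $k$. Thus the spectrum of $H^{\sCK,\sNO}_\omega$ is exactly that of $H^{\sCK}_{\sGP}$ on $\Cube^q$-patches, but with every eigenvalue repeated $d/\omega$ times.

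\textbf{Step 2 (effective regularization).} Summing the tail of the spectrum above degree $\lvn$ gives
\begin{align*}
\sum_{\ell>\lvn}\frac{d}{\omega}\sum_{S\in\cC_\ell}r(S)\,\xi_{q,\ell}
=\frac{d}{\omega}\sum_{\ell>\lvn}\xi_{q,\ell}\,B(\Cube^q;\ell)
=\frac{d}{\omega}\,h_{q,>\lvn}(1),
\end{align*}
so the self--induced ridge predicted by \cite{mei2021generalization} is $\lambda_{\seff}=\lambda+\frac{d}{\omega}h_{q,>\lvn}(1)$, as claimed; under Assumption~\ref{ass:kernel_loc} this is $\Theta_q(d/\omega)$ when $\lambda=0$. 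The shrinkage operator \eref{eq:shrinkage_operator} is then governed by the comparison between $n\cdot r(S)\xi_{q,\ell}/(d/\omega)$ and $h_{q,>\lvn}(1)$: degrees $\le\lvn-1$ are fit exactly, degrees $\ge\lvn+1$ not at all, and degree $\lvn$ interpolates according to the diameter $\gamma(S)$ --- exactly the picture of Theorem~\ref{thm:CK_GP_KRR_infinite_d} with an extra factor $d/\omega$ in the sample budget, which is the content of the corollary.

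\textbf{Step 3 (verifying the hypotheses of \cite{mei2021generalization}).} One must exhibit a cutoff index $m\equiv m(q)$ with $m\le n^{1-\delta'}$ and $\lambda_{q,m+1}\,n^{1+\delta'}\le\sum_{j>m}\lambda_{q,j}$, and check the remaining spread--out and hypercontractivity conditions. I take $m$ to be the number of eigenfunctions of degree $\le\lvn$ with $\gamma(S)$ bounded away from $q$; then $m\asymp (d/\omega)\,q^{\lvn-1}$ while $n=(d/\omega)\,q^{\lvn-1+\delta}$, so $m/n\to 0$ polynomially, while the largest eigenvalue below the cutoff is of order $\xi_{q,\lvn}\asymp q^{-\lvn}$ and the tail mass is $\Theta_q(d/\omega)$, yielding the required separation for $q$ large. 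The concentration and hypercontractivity requirements are where the block structure does the work: the needed $L^4$--$L^2$ comparisons for the $\psi_{k,S}$ and the operator--norm concentration of the empirical kernel matrix $\bH=\sum_k \bH^{(k)}$ follow, block by block, from the estimates already established for $H^{\sCK}_{\sGP}$ in the proof of Theorem~\ref{thm:CK_GP_KRR_infinite_d} (which use Assumptions~\ref{ass:kernel_loc} and~\ref{ass:diff_kernel}); since the $\bH^{(k)}$ are built from disjoint coordinate blocks, these bounds aggregate without loss and the single--block argument transcribes verbatim. Plugging these verifications into the main theorem of \cite{mei2021generalization} gives \eref{eq:approx_shrink} with $\lambda_{\seff}$ as above.

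\textbf{Main obstacle.} The delicate point is the boundary degree $\ell=\lvn$: within this degree the eigenvalues $r(S)\xi_{q,\lvn}$ sweep a nontrivial range as $\gamma(S)$ varies, so $m$ cannot be chosen at a clean degree boundary, and one needs the within--degree counting and separation argument of Theorem~\ref{thm:CK_KRR_infinite_d} grafted onto the invariant (global--pooling) setting of Theorem~\ref{thm:CK_GP_KRR_infinite_d}, while simultaneously tracking that every estimate is uniform over the $d/\omega$ blocks. Once this bookkeeping is in place, the remainder is a routine adaptation of the proof of Theorem~\ref{thm:CK_GP_KRR_infinite_d}.
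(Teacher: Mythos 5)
Your proposal is correct and follows essentially the same route as the paper, which proves this corollary by adapting the proof of Theorem \ref{thm:CK_GP_KRR_infinite_d}: you use the block-diagonal decomposition of $H^{\sCK,\sNO}_{\omega}$ into $d/\omega$ independent copies of $H^{\sCK}_{\sGP}$ on windows of size $\omega$, compute the tail trace $\frac{d}{\omega}h_{q,>\lvn}(1)$, and verify the eigenvalue and kernel-concentration conditions of \cite{mei2021generalization} exactly as in that proof (with concentration of the random diagonal aggregating over independent blocks). Your two side observations are also consistent with the paper: the factor $r(S)$ you restore in Eq.~\eref{eq:CK_NO_diag} is indeed needed for the trace to equal $\frac{d}{\omega}h_q(1)$ and for the stated $\lambda_{\seff}$, and the boundary-degree $\ell=\lvn$ bookkeeping you flag is handled in the paper either by the simplifying assumption $\xi_{q,\lvn}>q\,\xi_{q,\lvn+1}$ or by the $\cC_{\lvn,<\talpha}$ splitting borrowed from Theorem \ref{thm:CK_KRR_infinite_d}.
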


Corollary \ref{cor:CK_NO_KRR_infinite_d} shows that $H^{\sCK,\sNO}_{\omega}$ enjoys a factor $\omega$ gain in statistical efficiency compared to $H^{\sCK}$, due to a factor $\omega$ smaller effective ridge regularization. Therefore, with (non-overlapping) local average pooling, for $(d/\omega) \cdot \q^{\lvn - 1} \ll n \ll  (d/\omega) \cdot \q^{\lvn}$, KRR fits degree-$(\lvn-1)$ locally invariant polynomials and none of the polynomials of degree-$(\lvn+1)$ and larger. Heuristically, we see that this yields the same statistical efficiency than $H^{\sCK}$ for $\omega = 1$ and $H^{\sCK}_{\sGP}$ for $\omega = d$, and interpolates between the two cases for $1 < \omega < d$.

\paragraph*{Test error of convolutional kernels with downsampling:} We consider adding a downsampling operation to the previous kernels. Let $\Delta$ be a constant and a divisor of $d$ and $\omega$ and consider the following `downsampled' kernels:
\begin{align}
    H^{\sCK}_{\Delta} (\bx , \by) =&~ \Delta \sum_{k \in [d/\Delta]} h \big( \< \bx_{(k\Delta)} , \by_{(k \Delta)} \> / q \big) \, , \\
    H^{\sCK}_{\sGP,\Delta} (\bx , \by) =&~ \frac{\Delta}{d} \sum_{k, k' \in [d/\Delta]} h \big( \< \bx_{(k\Delta)} , \by_{(k' \Delta)} \> / q \big) \, , \\
    H^{\sCK,\sNO}_{\omega,\Delta} (\bx , \by) = &~ \sum_{k \in [ d/ \omega] }  H^{\sCK}_{\sGP,\Delta} \big(\bx^{(k\omega)} , \by^{(k\omega)} \big)\, .
\end{align}
We can easily adapt the proofs of Theorems \ref{thm:CK_KRR_infinite_d} and \ref{thm:CK_GP_KRR_infinite_d}, and Corollary \ref{cor:CK_NO_KRR_infinite_d} to these kernels. In particular, their conclusions do not change (for any constant $\Delta$) and downsampling do not provide a statistical advantage.

\subsection{Checking the assumptions}
\label{app:checking_assumptions}

In this section, we discuss Assumptions \ref{ass:kernel_loc} and \ref{ass:diff_kernel} and present sufficient conditions for them to be verified.

\paragraph{Genericity assumption:} Recall that the inner-product kernel $h_q : \R \to \R$ has the following eigendecomposition on $\Cube^q$ as
\[
\begin{aligned}
h_q \big( \< \bu , \bv \> / q \big) = \sum_{\ell = 0}^q \xi_{q,\ell} \sum_{S \subseteq [q], | S | = \ell} Y_S ( \bu ) Y_S (\bv) \, .
\end{aligned}
\]
The genericity assumption amounts to: 1) A universality condition in Eqs.~\eref{eq:ass_loc_b1} and \eref{eq:ass_loc_b2}: if $ P_k h ( \< \ones, \cdot \> / q ) = 0$, then $h$ does not learn degree-$k$ homogeneous polynomials; 2) A constant order scaling of the self-induced regularization $h_{q, > \lvn} (1)$, from $h_q (1) \leq C$ and Eq.~\eref{eq:ass_loc_b2} with $\lvn '$, i.e., $h_{q, > \lvn} (1) \leq h_q (1 ) = O_q (1)$ and $h_{q, > \lvn} (1) \geq \xi_{q,\lvn'} B(q, \lvn') = \Omega_q(1)$; 3) The last eigenvalues decay sufficiently fast in Eq.~\eref{eq:ass_loc_b3} in order to avoid pathological cases.

 For generic kernels, we have typically $\xi_{q,\ell} \asymp q^{-\ell}$ (for fix $\ell$). For example, if $h$ is smooth, $\xi_{q,\ell} = q^{-\ell} ( h^{(k)} (0) + o_q (1) )$ and it is sufficient to have $h^{(k)} (0) > 0$. See Appendix D.2 in \cite{mei2021generalization} for a proof of Eq.~\eref{eq:ass_loc_b3} when $h$ is sufficiently smooth. 

Below, we present instead sufficient conditions on the activation $\sigma$ such that the induced neural tangent kernel verifies the `genericity' assumption. More precisely, we display sufficient conditions on the sequence $\{ \sigma_q \}_{q \geq1}$ of activation functions $\sigma_q : \R \to \R$, such that the induced neural tangent kernels $\{ h_q \}_{q \geq 1}$ verifies Assumption \ref{ass:kernel_loc}, where $h_q$ was derived in Section \ref{app:CNTK_derivation} and is given by ($\bu,\bv \in \Cube^q$)
\begin{equation}\label{eq:NTK_ker}
h_q ( \< \bu , \bv \>/q ) :=  h_q^{(1)} ( \< \bu , \bv \>/q ) +  h_q^{(2)} ( \< \bu , \bv \>/q )\, ,
\end{equation}
where 
\begin{align}
    h_q^{(1)} ( \< \bu , \bv \>/q ) = &~ \E_{\bw \sim \Unif( \Cube^q)} \big[ \sigma_q ( \< \bu , \bw \> /\sqrt{\q} ) \sigma_q ( \< \bv , \bw \> /\sqrt{\q} ) \big] \, \label{eq:h1q_def} ,\\
    h^{(2)}_q ( \< \bu , \bv \>/np.sqrt(\q) ) = &~ \E_{\bw \sim \Unif( \Cube^q)} \big[ \sigma_q ' ( \< \bu , \bw \> /\sqrt{\q} ) \sigma_q '  ( \< \bv , \bw \> /\sqrt{\q} ) \< \bu, \bv \> \big]/q \label{eq:h2q_def} \, .
\end{align}
\begin{assumption}[Assumptions on $\{ \sigma_q \}_{q \geq 1}$ at level $\lvn\in\naturals$]\label{ass:activation_loc}
For $\{ \sigma_q \}_{q \ge 1}$ a sequence of functions $\sigma_q:\reals\to\reals$, we assume the following conditions to hold. There exists $\lvn ' \geq 1 / \delta + 2\lvn +3$ where $\delta >0$ verifies $\q \geq d^{\delta}$, such that
\begin{itemize}
\item[(a)] The function $\sigma_q$ is differentiable and there exists $c_0 >0$ and $c_1 < 1$ independent of $q$, such that $| \sigma_q (x) |, | \sigma_q ' (x) | \leq c_0 \exp (c_1 x^2 / 2)$.

\item[(b)] We have
\begin{align}
 \min_{k \le \lvn - 1 } \q^{\lvn - 1 -k} \|\proj_{k}\sigma_{q } (\< \be, \cdot \>/\sqrt{\q}) \|_{L^2(\Cube^\q)}  = & \Omega_q(1)\, , \label{eq:ass_loc_c1}\\
 \min_{k \in \{ \lvn , \lvn+1 , \lvn' \} } \|\proj_{k}\sigma_{q } (\< \be, \cdot \>/\sqrt{\q}) \|_{L^2(\Cube^\q)} = & \Omega_q(1)\, ,  \label{eq:ass_loc_c2} 
\end{align}
where  $\be \in \Cube^\q$ is arbitrary.

\item[(c)] We have for a fixed $\delta >0$
\begin{align}
\max_{k = 0 , \ldots , \lvn '} \q^{\lvn'-k+1}\|\proj_{k}\sigma_{q } (\< \be, \cdot \>/\sqrt{\q}) \|_{L^2(\Cube^\q)}  =&~ O_q(1)\, , \label{eq:ass_loc_d1} \\
 \max_{k = 0 , \ldots , \lvn '} \q^{\lvn'-k+1}\|\proj_{k}\sigma_{q } ' (\< \be, \cdot \>/\sqrt{\q}) \|_{L^2(\Cube^\q)}  = &~ O_q(1)\, . \label{eq:ass_loc_d2} 
\end{align}
\end{itemize}
\end{assumption}

\begin{proposition}\label{prop:sigma_to_CNTK}
Consider a sequence $\{ \sigma_q \}_{q\geq 1}$ of activation functions $\sigma_q : \R \to \R$ that satisfies Assumption \ref{ass:activation_loc}. Let $\{ h_q \}_{q \geq 1}$ be the sequence of neural tangent kernels associated to $\{ \sigma_q \}_{q\geq 1}$ as defined in Eq.~\eref{eq:NTK_ker}. Then the sequence $\{ h_q \}_{q \geq 1}$ satisfies the `genericity' Assumption \ref{ass:kernel_loc}. 
\end{proposition}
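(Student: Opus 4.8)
The plan is to diagonalize the neural tangent kernel $h_q$ explicitly in the Fourier basis of $\Cube^q$, relating its Gegenbauer coefficients $\xi_{q,\ell}$ to the Fourier content of $\sigma_q$ and $\sigma_q'$, and then to transfer Assumption~\ref{ass:activation_loc} to Assumption~\ref{ass:kernel_loc} coefficient by coefficient. Write $h_q = h_q^{(1)} + h_q^{(2)}$ as in \eqref{eq:h1q_def}--\eqref{eq:h2q_def}. Both $h_q^{(1)}$ and $h_q^{(2)}$ are positive semidefinite inner-product kernels on $\Cube^q$, so $\xi_{q,\ell}(h_q) = \xi_{q,\ell}(h_q^{(1)}) + \xi_{q,\ell}(h_q^{(2)})$ with both summands nonnegative. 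This is what makes the lower bounds \eqref{eq:ass_loc_b1}--\eqref{eq:ass_loc_b2} inherit from a single summand, while the decay estimate \eqref{eq:ass_loc_b3} needs control of both.

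For $h_q^{(1)}$: expanding $\sigma_q(\langle \be, \cdot\rangle/\sqrt q)$ in the Fourier basis of $\Cube^q$ and using the permutation and sign-flip symmetries of the cube, one gets $\proj_\ell[\sigma_q(\langle \be, \cdot\rangle/\sqrt q)](\bx) = \lambda_{q,\ell}\sum_{|S| = \ell} Y_S(\be) Y_S(\bx)$ for scalars $\lambda_{q,\ell}$ with $B(q,\ell)\lambda_{q,\ell}^2 = \|\proj_\ell\sigma_q(\langle \be, \cdot\rangle/\sqrt q)\|_{L^2(\Cube^q)}^2$. Substituting this expansion (in the variable $\bw$) into $h_q^{(1)}(\langle \bu,\bv\rangle/q) = \E_\bw[\sigma_q(\langle \bu,\bw\rangle/\sqrt q)\sigma_q(\langle \bv,\bw\rangle/\sqrt q)]$ and using orthonormality of $\{Y_S\}$ under $\bw\sim\Unif(\Cube^q)$ gives $\xi_{q,\ell}(h_q^{(1)}) = \lambda_{q,\ell}^2$, i.e.
\[
B(q,\ell)\,\xi_{q,\ell}(h_q^{(1)}) = \big\|\proj_\ell\sigma_q(\langle \be, \cdot\rangle/\sqrt q)\big\|_{L^2(\Cube^q)}^2 .
\]
The same computation for $\sigma_q'$ diagonalizes $\E_\bw[\sigma_q'(\langle \bu,\bw\rangle/\sqrt q)\sigma_q'(\langle \bv,\bw\rangle/\sqrt q)]$; multiplying by the prefactor $\langle \bu,\bv\rangle/q = \tfrac1q\sum_{|S|=1}Y_S(\bu)Y_S(\bv)$ and using the degree-one $\times$ degree-$k$ product rule on the cube (via $Y_S Y_{S'} = Y_{S\triangle S'}$, which produces a degree-$(k{+}1)$ part with combinatorial weight $q-k$ and a degree-$(k{-}1)$ part with weight $k$, each divided by $q$) then yields
\[
B(q,\ell)\,\xi_{q,\ell}(h_q^{(2)}) = \tfrac{q-\ell+1}{q}\big\|\proj_{\ell-1}\sigma_q'(\langle \be,\cdot\rangle/\sqrt q)\big\|_{L^2(\Cube^q)}^2 + \tfrac{\ell+1}{q}\big\|\proj_{\ell+1}\sigma_q'(\langle \be,\cdot\rangle/\sqrt q)\big\|_{L^2(\Cube^q)}^2 .
\]

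Given these two identities, Assumption~\ref{ass:kernel_loc} follows by matching powers of $q$ with Assumption~\ref{ass:activation_loc}. For the lower bounds \eqref{eq:ass_loc_b1}--\eqref{eq:ass_loc_b2}, use $B(q,\ell)\xi_{q,\ell}(h_q)\ge B(q,\ell)\xi_{q,\ell}(h_q^{(1)}) = \|\proj_\ell\sigma_q(\langle \be,\cdot\rangle/\sqrt q)\|_{L^2}^2$ together with \eqref{eq:ass_loc_c1}--\eqref{eq:ass_loc_c2} (the nonnegative $h_q^{(2)}$ contribution only helps); for the decay estimate \eqref{eq:ass_loc_b3}, add the two displayed formulas at $\ell = q-k$, bound the weights $\tfrac{q-\ell+1}{q},\tfrac{\ell+1}{q}$ by $1$, and invoke \eqref{eq:ass_loc_d1}--\eqref{eq:ass_loc_d2} (a $\pm1$ shift in the degree changes the relevant power of $q$ by at most one, which is absorbed into constants). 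Finally, $h_q(1) = h_q^{(1)}(1) + h_q^{(2)}(1) = \|\sigma_q(\langle \be,\cdot\rangle/\sqrt q)\|_{L^2}^2 + \|\sigma_q'(\langle \be,\cdot\rangle/\sqrt q)\|_{L^2}^2$ (using $\langle \be,\be\rangle = q$), and by the growth condition~(a) this is at most $2c_0^2\,\E_{\bw\sim\Unif(\Cube^q)}[\exp(c_1\langle \be,\bw\rangle^2/q)]$; since $\langle \be,\bw\rangle/\sqrt q$ is a normalized sum of independent Rademacher variables (hence $1$-sub-Gaussian), a standard Gaussian-smoothing moment estimate bounds this by a constant depending only on $c_0,c_1$, uniformly in $q$, which gives $h_q(1)\le C$.

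The step I expect to be the main obstacle is the explicit diagonalization of $h_q^{(2)}$: getting the combinatorial weights in the degree-one $\times$ degree-$k$ product on $\Cube^q$ exactly right, then bookkeeping the powers of $q$ and the $\ell\mapsto\ell\pm1$ degree shifts precisely enough to reproduce the exponents of \eqref{eq:ass_loc_b1}--\eqref{eq:ass_loc_b3} and to verify that the $\sigma_q'$ contribution cannot spoil the lower bounds. A secondary technical point is making the moment estimate for $h_q(1)$ quantitative and uniform in $q$ under the stated exponential-growth condition.
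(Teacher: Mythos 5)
Your proposal follows essentially the same route as the paper's proof: you diagonalize $h_q^{(1)}$ and $h_q^{(2)}$ in the hypercube Fourier/Gegenbauer basis, and your weights $\tfrac{q-\ell+1}{q},\tfrac{\ell+1}{q}$ for the $\sigma_q'$ part are exactly the paper's recurrence $\zeta_{q,\ell}^2=\tfrac{\ell}{q}\kappa_{q,\ell-1}^2+\tfrac{q-\ell}{q}\kappa_{q,\ell+1}^2$ (Lemma \ref{lem:rec_Gegenbauer}) rewritten in terms of $\|\proj_{\ell\pm 1}\sigma_q'(\<\be,\cdot\>/\sqrt{q})\|_{L^2}^2$, after which the coefficientwise transfer of Assumption \ref{ass:activation_loc} and the uniform bound on $h_q(1)$ via sub-Gaussianity of $\<\be,\bw\>/\sqrt{q}$ mirror the paper's Steps 1--2 (the paper merely repackages the same computation through an ``effective activation'' $\sigma_{\eff,q}$). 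So the proposal is correct and essentially identical in approach.
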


\paragraph*{Differentiability assumption:} As mentioned in the previous section, this condition is required in our proof technique to extend Theorem \ref{thm:CK_GP_KRR_infinite_d} to non-polynomial kernel functions. While we believe that weaker conditions should be sufficient, we leave checking them to future work. Note that Assumption \ref{ass:diff_kernel} was proved for $\bx \sim \Unif (\S^{d-1} (\sqrt{d}))$ and $h_q ( \< \bx , \by \> /q) = \E_{\bw} \{ \sigma (\< \bx , \bw \>) \sigma (\< \by , \bw \>)\}$ for $\bw \sim \Unif (\S^{d-1}(1))$, given that $\sigma$ satisfies some differentiability conditions, in \cite{mei2021learning}.

\subsection{Proof of Proposition \ref{prop:sigma_to_CNTK}}

\begin{proof}[Proof of Proposition \ref{prop:sigma_to_CNTK}]

\noindent
{\bf Step 1. Effective activation function.}

Let us decompose both functions $\sigma_q$ and $\sigma_q '$ in the Gegenbauer polynomial on the hypercube basis:
\begin{align}
\sigma_{q} (\< \bu , \bv \> / \sqrt{\q}) =&~ \sum_{\ell = 0}^\q \chi_{\q,\ell} B( \Cube^\q ; \ell) Q_\ell^{(\q)} ( \< \bu , \bv \> ) \, , \label{eq:eigen_sig} \\
\sigma_{q} '  (\< \bu , \bv \> / \sqrt{\q})  =&~ \sum_{\ell = 0}^\q \kappa_{\q,\ell} B( \Cube^\q ; \ell) Q_\ell^{(\q)} ( \< \bu , \bv \> )\,  \label{eq:eigen_sigp} ,
\end{align}
where we recall $B (\Cube^\q ;  \ell ) =  {{d}\choose{ \ell}}$ and (for $\be \in \Cube^\q$ arbitrary)
\[
\begin{aligned}
\chi_{\q,\ell} (\sigma_q) = &~ \E_{\bu \sim \Unif ( \Cube^\q)} \big[ \sigma_q ( \< \bu , \be \> / \sqrt{\q} ) Q^{(\q)}_{ \ell} ( \< \bu , \be \> ) \big], \\
\kappa_{\q,\ell} (\sigma_q ' )=&~ \E_{\bu \sim \Unif ( \Cube^\q)} \big[ \sigma_q ' ( \< \bu , \be \> / \sqrt{\q} ) Q^{(\q)}_{ \ell} ( \< \bu , \be \> ) \big].
\end{aligned}
\]
From the definition of $h_q^{(1)}$ in Eq.~\eref{eq:h1q_def} and the eigendecomposition \eref{eq:eigen_sig}, we have
\[
h_q^{(1)} (\< \bu , \bv \> / q ) =  \sum_{\ell = 0}^\q \chi_{\q,\ell}^2 B( \Cube^\q ; \ell) Q_\ell^{(\q)} ( \< \bu , \bv \> ).
\]
Similarly, from the definition of $h_q^{(2)}$ in Eq.~\eref{eq:h2q_def}, the eigendecomposition \eref{eq:eigen_sigp} and using Lemma \ref{lem:rec_Gegenbauer} stated below, we get 
\[
\begin{aligned}
h_q^{(2)} (\< \bu , \bv \> / q )= &~ \sum_{\ell = 0}^\q \kappa_{\q,\ell}^2 B( \Cube^\q ; \ell) Q_\ell^{(\q)} ( \< \bu , \bv \> ) \< \bu , \bv \>/\q = \sum_{\ell = 0}^\q \zeta_{\q,\ell}^2 B( \Cube^\q ; \ell) Q_\ell^{(\q)} ( \< \bu , \bv \> ),
\end{aligned}
\]
where 
\begin{equation}\label{eq:zeta_expression}
\zeta_{\q,\ell}^2 = \frac{\ell }{\q } \kappa_{\q, \ell - 1}^2 + \frac{\q - \ell}{\q} \kappa_{\q, \ell +1}^2.
\end{equation}

We can therefore define $\pi_{\q,\ell} = \sqrt{\chi_{\q,\ell}^2 + \zeta_{\q,\ell}^2}$ and $\sigma_{\eff,q} ( \< \cdot , \cdot \> / \sqrt{\q}) : \Cube^\q \times \Cube^\q \to \R$ by
\[
\sigma_{\eff,q} (\< \bu , \bv \> / \sqrt{\q}) = \sum_{\ell = 0}^\q \pi_{\q,\ell} B( \Cube^\q ; \ell)Q_\ell^{(\q)} ( \< \bu , \bv \> ),
\]
such that the NT kernel \eref{eq:NTK_ker} can be written as the kernel of the effective activation $\sigma_{\eff,q}$:
\begin{equation}\label{eq:eff_h_sig}
\begin{aligned}
h_q ( \< \bu , \bv \> /q) =&~ \E_{\btheta \sim \Unif(\Cube^d )} \Big[ \sigma_{\eff,q} ( \< \bu , \btheta \> / \sqrt{\q} ) \sigma_{\eff,q} ( \< \by_{\pkp} , \btheta \> /\sqrt{\q} )  \Big] \\
=&~ \sum_{\ell = 0}^\q \pi_{\q,\ell}^2 B( \Cube^\q ; \ell)Q_\ell^{(\q)} ( \< \bu , \bv \> )\, .
\end{aligned}
\end{equation}
We will show that $h_q$ with Gegenbauer coefficients $\xi_{q,\ell} :=\pi_{\q,\ell}^2 $ verifies Assumption \ref{ass:kernel_loc}.

\noindent
{\bf Step 2. Decay of the eigenvalues.}

Recall that the sequence $\{\sigma_q \}_{q \geq 1}$ satisfies Assumption \ref{ass:activation_loc} at level $\lvn$. From Assumption \ref{ass:activation_loc}.$(a)$ (for example by adapting the proof of Lemma C.1 in \cite{ghorbani2021linearized} to the hypercube), there exists $C>0$ such that
\[
h_q (1) = \| \sigma_{\eff, q} \|_{L^2 (\Cube^\q)}^2 = h^{(1)}_q (1  ) +h^{(2)}_q (1  ) = \| \sigma_{q} \|_{L^2 (\Cube^\q)}^2 + \| \sigma_{q} '  \|_{L^2 (\Cube^\q)}^2 \leq C,
\]
and we deduce that $\chi^2_{\q,\ell}, \kappa^2_{\q, \ell}, \pi^2_{\q,\ell} = O_q ( B( \Cube^\q ; \ell)^{-1} )$. Using that $B( \Cube^\q ; \ell) = {{\q}\choose{\ell}}$, we deduce that for any fixed $\ell$, $\chi^2_{\q,\ell}, \kappa^2_{\q, \ell}, \pi^2_{\q,\ell} = O_q ( \q^{-\ell} )$. Furthermore, from Assumption \ref{ass:activation_loc}.$(c)$, we have for $k = 0, \ldots , \lvn' + 1 $,
\[
\begin{aligned}
\chi^2_{\q,\q - k } = &~ B( \Cube^\q ; \q - k)^{-1} \| \proj_{\q -k } \sigma_q \|_{L^2( \Cube^\q)}^2 = O_q ( \q^{-\lvn '  - 1}) \, , \\
\kappa^2_{\q,\q - k } = &~ B( \Cube^\q ; \q - k)^{-1} \| \proj_{\q -k } \sigma_q '\|_{L^2( \Cube^\q)}^2 = O_q ( \q^{-\lvn '  - 1}) \, ,
\end{aligned}
\]
By Eq.~\eref{eq:zeta_expression} and the definition of $\pi_{\q,\ell}^2$, we have $\pi^2_{\q,\q - k } = O_d ( \q^{-\lvn ' - 1})$ for any $k \leq \lvn'$, which verifies Eq.~\eref{eq:ass_loc_b3} in Assumption \ref{ass:kernel_loc}.

Furthermore, by Assumption \ref{ass:activation_loc}.$(b)$, using that $\chi_{\q, k}^2 =  B( \Cube^\q ;  k)^{-1} \| \proj_{k } \sigma_q \|_{L^2( \Cube^\q)}^2$ and $\xi^2_{\q,k} \geq \chi^2_{\q , k}$, we get
\[
\min_{k \leq \lvn-1} \xi^2_{\q,k} = \Omega_q ( \q^{-\lvn +1}),
\]
and 
\[
\xi^2_{\q,\lvn} = \Omega_q ( \q^{-\lvn}), \qquad \xi^2_{\q,\lvn+1} = \Omega_q ( \q^{-\lvn-1}),   \qquad \xi^2_{\q,\ell'} = \Omega_q ( \q^{-\ell'}).
\]
In particular, this implies that $\| \sigma_{\eff, d, >\lvn} \|_{L^2(\Cube^\q)}^2 \geq \| \proj_{\lvn' } \sigma_q \|_{L^2( \Cube^\q)}^2 = \Omega_q (1)$.
\end{proof}

\begin{lemma}\label{lem:rec_Gegenbauer}
Let $\ell$ be an integer such that $0 \leq \ell \leq \q$. Consider the following Gegenbauer polynomial defined on the $q$-dimensional hypercube (see Section \ref{sec:technical_background}): for $\bx,\by \in \Cube^\q$,
\[
Q^{(\q)}_\ell ( \< \bx , \by \>) = \frac{1}{B ( \Cube^\q ; \ell)} \sum_{S \subset [\q], |S| = \ell} Y_{S} ( \bx) Y_{S} (\by),
\]
where we recall the definition of the homogeneous polynomial $Y_{S} ( \bx)  = \bx^S = \prod_{i \in S} x_i$. We have
\[
Q^{(\q)}_\ell ( \< \bx , \by \>)  \< \bx , \by \> /\q = \frac{\ell}{\q} Q^{(\q)}_{\ell - 1} ( \< \bx , \by \>)+ \frac{\q - \ell}{\q}  Q^{(\q)}_{\ell + 1} ( \< \bx , \by \>),
\]
with the convention $Q^{(\q)}_{-1} = Q^{(\q)}_{\q+1} = 0$.
\end{lemma}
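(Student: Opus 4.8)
The plan is to work directly with the expansion of the Gegenbauer polynomial in the Fourier basis $Y_S$ and track how multiplying by $\langle \bx, \by\rangle/\q$ permutes the monomials. Write $\langle \bx, \by\rangle = \sum_{i\in[\q]} x_i y_i$, so that for a subset $S\subseteq[\q]$ with $|S|=\ell$,
\begin{equation}
Y_S(\bx)Y_S(\by)\cdot\frac{\langle\bx,\by\rangle}{\q} = \frac{1}{\q}\sum_{i\in[\q]} x_i y_i \prod_{j\in S} x_j y_j\,.
\end{equation}
Now split the sum over $i$ into the case $i\in S$ and the case $i\notin S$. When $i\in S$, since $x_i^2 = y_i^2 = 1$ on the hypercube, the term $x_i y_i \prod_{j\in S} x_j y_j$ collapses to $\prod_{j\in S\setminus\{i\}} x_j y_j = Y_{S\setminus\{i\}}(\bx)Y_{S\setminus\{i\}}(\by)$, a degree-$(\ell-1)$ monomial; there are $\ell$ such terms. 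When $i\notin S$, the term becomes $Y_{S\cup\{i\}}(\bx)Y_{S\cup\{i\}}(\by)$, a degree-$(\ell+1)$ monomial; there are $\q-\ell$ such terms. The key combinatorial bookkeeping step is then to sum over all $S$ with $|S|=\ell$ and recognize, for each target set $T$ with $|T|=\ell-1$ or $|T|=\ell+1$, how many pairs $(S,i)$ produce it.

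Concretely, using $B(\Cube^\q;\ell) = \binom{\q}{\ell}$, I would compute
\begin{equation}
\sum_{|S|=\ell} Y_S Y_S \cdot \frac{\langle\bx,\by\rangle}{\q} = \frac{1}{\q}\Big[\sum_{|S|=\ell}\sum_{i\in S} Y_{S\setminus\{i\}}Y_{S\setminus\{i\}} + \sum_{|S|=\ell}\sum_{i\notin S} Y_{S\cup\{i\}}Y_{S\cup\{i\}}\Big]\,.
\end{equation}
In the first bracketed sum, each $T$ with $|T|=\ell-1$ arises from $S = T\cup\{i\}$ with $i\notin T$, giving $\q-(\ell-1) = \q-\ell+1$ choices of $i$; in the second, each $T$ with $|T|=\ell+1$ arises from $S = T\setminus\{i\}$ with $i\in T$, giving $\ell+1$ choices. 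Dividing both sides by $B(\Cube^\q;\ell)$ and using the identities $\binom{\q}{\ell-1}/\binom{\q}{\ell} = \ell/(\q-\ell+1)$ and $\binom{\q}{\ell+1}/\binom{\q}{\ell} = (\q-\ell)/(\ell+1)$, the multiplicities $\q-\ell+1$ and $\ell+1$ cancel against the binomial ratios to leave exactly $\frac{1}{\q}\big[\ell\, Q^{(\q)}_{\ell-1} + (\q-\ell)\, Q^{(\q)}_{\ell+1}\big]$, which is the claimed identity.

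The boundary conventions $Q^{(\q)}_{-1} = Q^{(\q)}_{\q+1} = 0$ handle the cases $\ell=0$ (no $i\in S$, so the degree-lowering term is absent) and $\ell=\q$ (no $i\notin S$, so the degree-raising term is absent) automatically, since the coefficients $\ell/\q$ and $(\q-\ell)/\q$ respectively vanish there as well. I do not expect any serious obstacle: the only point requiring a little care is the double-counting argument identifying the multiplicities $\q-\ell+1$ and $\ell+1$, and making sure the normalization factor $B(\Cube^\q;\ell)$ is carried correctly through the reindexing. An alternative, essentially equivalent route is to invoke the known three-term recurrence for Gegenbauer (Krawtchouk) polynomials on $\Cube^\q$ directly, but the combinatorial derivation above is self-contained and matches the Fourier-analytic conventions already set up in Section~\ref{sec:hypercube_background}.
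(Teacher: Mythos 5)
Your proposal is correct and follows essentially the same route as the paper's proof: expand $\<\bx,\by\>$ as $\sum_i x_iy_i$, use $x_i^2=1$ to map $Y_S x_i$ to $Y_{S\setminus\{i\}}$ or $Y_{S\cup\{i\}}$, and count that each target set of size $\ell-1$ (resp.\ $\ell+1$) is produced by $\q-\ell+1$ (resp.\ $\ell+1$) pairs $(S,i)$, with the binomial ratios giving the coefficients $\ell/\q$ and $(\q-\ell)/\q$. The multiplicities and normalization bookkeeping all check out, so no gap remains.
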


\begin{proof}[Proof of Lemma \ref{lem:rec_Gegenbauer}]
Consider $1 \leq \ell \leq \q -1$. We have
\[
\begin{aligned}
Q^{(\q)}_\ell ( \< \bx , \by \>)  \< \bx , \by \> /\q =&~  \frac{1}{\q B ( \Cube^\q ; \ell)} \sum_{S \subset [\q], |S| = \ell} \sum_{i \in [\q]} Y_{S} ( \bx)x_i \cdot Y_{S} (\by) y_i.
\end{aligned}
\]
We have $Y_{S} ( \bx)x_i  = Y_{S\cup\{ i \}} ( \bx)$ if $i\not\in S$, and $Y_{S} ( \bx)x_i  = Y_{S\setminus \{ i \}} ( \bx)$ if $i\in S$. Hence, the above sum contains sets of size $\ell - 1$ and $\ell +1$. For each set $S \subset [\q]$ with $|S| = \ell -1$, there $\q + 1 - \ell$ sets $|\Tilde S| = \ell$, such that by removing one element we can obtain $S$. For each set $S \subset [\q]$ with $|S| = \ell +1$, there $\ell + 1 $ sets $|\Tilde S| = \ell$, such that by adding one element we can obtain $S$.

We deduce that
\[
\begin{aligned}
&~Q^{(\q)}_\ell ( \< \bx , \by \>)  \< \bx , \by \> /\q\\  =&~  \frac{\q+1 - \ell}{\q B ( \Cube^\q ; \ell)} \sum_{S \subset [\q], |S| = \ell - 1} Y_{S} ( \bx)  Y_{ S} (\by) + \frac{\ell+1}{\q B ( \Cube^\q ; \ell)} \sum_{S \subset [\q], |S| = \ell + 1} Y_{S} ( \bx)  Y_{S} (\by).
\end{aligned}
\]
Using $B ( \Cube^\q ; \ell) = {{\q}\choose{\ell}}$, we obtain
\[
Q^{(\q)}_\ell ( \< \bx , \by \>)  \< \bx , \by \> /\q = \frac{\ell}{\q} Q^{(\q)}_{\ell - 1} ( \< \bx , \by \>)+ \frac{\q - \ell}{\q}  Q^{(\q)}_{\ell + 1} ( \< \bx , \by \>).
\]
The cases $\ell = 0$ and $\ell = \q$ are straightforward.
\end{proof}

\subsection{Proof of Theorem \ref{thm:CK_KRR_infinite_d}}
\label{app:proof_HD_CK}

Let $\{ d(q) \}_{q \geq 1}$ be a sequence of integers with $2q \leq d(q) \leq q^{1/\delta}$ for some $\delta >0$. We will denote $d = d(q)$ for simplicity. Consider $\bx \sim \Unif ( \Cube^d )$, $d \q^{\lvn - 1 + \delta} \leq n \leq d \q^{\lvn - \delta}$ for some $\delta   >0$ and a sequence of inner-product kernels $\{ h_{q} \}_{q \geq 1} $ that satisfies Assumption \ref{ass:kernel_loc} at level $\lvn$. We consider the vanilla one-layer convolutional kernel
\[
H^{\sCK,d} ( \bx , \by ) = \frac{1}{d} \sum_{k = 1}^d h_{q} ( \< \bx_{(k)} , \by_{(k)} \> / q ).
\]
Theorem \ref{thm:CK_KRR_infinite_d} is a consequence of Theorem 4 in \cite{mei2021generalization} where we take $\cX_d = \Cube^d$, $\nu_d = \Unif (\cX_d)$ and $\cD_d = L^2 ( \Cube^d , \loc_q ) \subset L^2 (\Cube^d)$. The proof amounts to checking that $\{ H^{\sCK,d} \}_{q \geq 1}$ verifies the kernel concentration properties and eigenvalue condition (see Section 3.2 in \cite{mei2021generalization}). We borrow some of the notations introduced in \cite{mei2021generalization} and we refer the reader to their Section 2.1. 

\begin{proof}[Proof of Theorem \ref{thm:CK_KRR_infinite_d}] 

\noindent
{\bf Step 1. Diagonalization of the kernel and choosing $\evn = \evn (\q)$. }

From Proposition \ref{prop:diag_CK}, we have the following diagonalization of $H^{\sCK,d} $: 
\[
H_d ( \bx , \by ) := H^{\sCK,d}  ( \bx , \by )  =  \frac{1}{d} \sum_{\ell = 0}^\q \sum_{S \in \cE_\ell} \xi_{\q,\ell} r(S) \cdot Y_{S} ( \bx)  Y_{S} ( \by) ,
\]
where $r (\emptyset) = d$ and $r(S) = \q+1 - \gamma (S)$ for $S \subset [\q] \setminus \{\emptyset\}$, and we recall $\cE_{\ell} = \{ S \subseteq [d] : |S| = \ell, \gamma(S) \leq q\}$. Using that $B(\Cube^\q ; \ell) = \Theta_q ( q^\ell )$, $\xi_{q,\ell} B(\Cube^\q ; \ell) \leq h_q (1)$ and Assumption \ref{ass:kernel_loc}, we have
\begin{equation}\label{eq:eigen_scale_CNTK}
\begin{aligned}
\min_{\ell \leq \lvn -1 } \xi_{\q,\ell} = \Omega_q ( \q^{-\lvn+1} ),&  \qquad &\xi_{\q,\lvn} = \Theta_q ( \q^{-\lvn} ), \\
\xi_{\q,\lvn+1} = \Theta_q ( \q^{-\lvn - 1} ),& \qquad &\sup_{\ell \ge \lvn +2 } \xi_{\q,\ell}  = O_{q} (\q^{-\lvn -2} ).
\end{aligned}
\end{equation}
Further define $\cE_{\ell,h}= \{ S \in \cE_{\ell}: \gamma (S) = h \}$ for $h = \ell, \ldots , q$. It is easy to check that $| \cE_{\ell,h}| = d {{h - 2}\choose{\ell - 2}}$ and
\[
 | \cE_\ell |= \sum_{ h = \ell }^\q | \cE_{\ell,h}| = d  \sum_{ h = \ell }^\q {{h - 2}\choose{\ell - 2}}  = d {{\q - 1}\choose{\ell - 1}} ,
\]
and therefore $ | \cE_\ell | = \Theta_\q (d \cdot \q^{\ell -1})$. 

Denote $\{ \lambda_{q,j} \}_{j \ge 1}$ the eigenvalues $\{  \xi_{\q,\ell} r(S) /d   \}_{\ell = 0, \ldots, \q ; S \in \cE_\ell}$ in nonincreasing order, and $\{ \psi_{q,j} \}_{j \ge 1}$ the reordered eigenfunctions. Set $\evn$ to be the number of eigenvalues such that $\lambda_{q,j} > \q \xi_{\q, \lvn+1} / d $ (recall $\q \xi_{\q, \lvn+1} = \Theta_d ( \q^{-\lvn})$). Denote $\alpha = \q \xi_{\q, \lvn+1}/ \xi_{\q, \lvn}$. From the bounds \eref{eq:eigen_scale_CNTK} on $\xi_{\q, \lvn+1}$ and $\xi_{\q, \lvn}$, we have $\alpha = \Theta_q (1)$. Denote $\talpha = \q + 1 - \alpha$ and $\cE_{\lvn, \geq \talpha} = \{ S \in \cE_{\lvn}: \gamma(S) \geq \talpha\}$ and $\cE_{\lvn, < \talpha} = \cE_{\lvn} \setminus \cE_{\lvn, \geq \talpha}$. Using Eq.~\eref{eq:eigen_scale_CNTK} and that $1 \leq r(S) \leq \q$, we have $\{ \lambda_{d,j} \}_{j \in [\evn]}$ that contains exactly the eigenvalues associated to homogeneous polynomials of degree less or equal to $\lvn - 1$ and of degree $\lvn$ with $S \in \cE_{\lvn, < \talpha}$ (which corresponds to the sets $S$ such that $r(S) > \alpha$, i.e., $\xi_{\q, \lvn} r(S) > \q \xi_{\q, \lvn+1}$). In particular, if $\alpha < 1$, then $\{ \lambda_{d,j} \}_{j \in [\evn]}$ contains exactly the eigenvalues associated to all homogeneous polynomials of degree less or equal to $\lvn$. 

Note that we have
\begin{equation}\label{eq:bound_m_CNTK}
\evn \leq \sum_{\ell = 0}^\lvn  | \cE_\ell |  = O_q ( d \q^{\lvn - 1}) = O_q ( q^{-\delta} n ) .
\end{equation}

\noindent
{\bf Step 2. Diagonal elements of the truncated kernel.}

Define the truncated kernel $H_{d, > \evn}$ to be
\[
\begin{aligned}
H_{d, > \evn} ( \bx , \by) = &~ \sum_{j \ge \evn +1} \lambda_{q,j} \psi_{q,j} (\bx) \psi_{d,j} (\by) \\
 = &~ \frac{\xi_{\q, \lvn}}{d} \sum_{S \in  \cE_{\lvn, \geq \talpha}} r(S) \cdot Y_{S} ( \bx)  Y_{S} ( \by ) +  \frac{1}{d} \sum_{\ell = \lvn+1}^\q \xi_{\q,\ell} \sum_{S \in \cE_{\ell}} r(S) \cdot Y_{S} ( \bx)  Y_{S} ( \by ). 
\end{aligned}
\]
The diagonal elements of the truncated kernel are given by: for any $\bx \in \Cube^d$,
\begin{equation}\label{eq:diag_H_CNTK}
H_{d, > \evn} ( \bx , \bx) = \frac{\xi_{\q, \lvn}}{d} \sum_{S \in  \cE_{\lvn, \geq \talpha}} r(S) +  \frac{1}{d} \sum_{\ell = \lvn+1}^\q  \xi_{\q,\ell} \sum_{S \in \cE_{\ell}} r(S)  = \Tr (\Hop_{d, > \evn} ).
\end{equation}
Notice that 
\[
\begin{aligned}
\sum_{S \in \cE_{\ell}} r(S) =&~ \sum_{h = \ell}^\q (\q + 1 - h ) | \cE_{\ell,h}| = d \sum_{h = \ell}^\q (\q + 1 - h ) {{h - 2}\choose{\ell - 2}} = d {{\q}\choose{\ell}} = d  B ( \Cube^\q ; \ell ), \\
\sum_{S \in  \cE_{\lvn, \geq \talpha}} r(S) \leq&~ \alpha \sum_{ h = \q+1 - \alpha }^\q | \cE_{\lvn,h}| \leq d \alpha^2 {{\q - 2}\choose{\lvn - 2}} = O_d ( d \q^{\lvn - 2} ).
\end{aligned}
\]
Hence using that $\xi_{\q, \lvn} = O_d ( \q^{-\lvn})$, we have
\[
\Tr (\Hop_{d, > \evn} ) =    \frac{\xi_{\q, \lvn}}{d} \sum_{S \in  \cE_{\lvn, \geq \talpha}} r(S) +   \sum_{\ell = \lvn+1}^\q \xi_{\q,\ell} B ( \Cube^\q ; \ell ) =   h_{\q, > \lvn} (1)  + o_{q,\P}(1) ,
\]
where $h_{q, > \lvn}$ is the inner-product kernel with the $(\lvn + 1)$-first Gegenbauer coefficients set to zero, i.e., $h_{q, > \lvn} ( \< \bu , \bv \>/ q) = \sum_{\ell= s+1}^\q \xi_{q,\ell} B (\Cube^\q ; \ell ) Q_{\ell}^{(\q)} ( \< \bu , \bv \>)$, for any $\bu,\bv \in \Cube^\q$. From Assumption \ref{ass:kernel_loc} at level $\lvn$, we have $\Omega_q (1) = \xi_{\q, \ell '} B (\Cube^\q ; \ell ' ) \leq h_{\q, > \lvn} (1) \leq h_q (1) = O_\q (1)$. Hence, $\Tr (\Hop_{d, > \evn} ) =\Theta_d (1)$. 

Similarly,
\begin{equation}\label{eq:diag_H2_CNTK}
\E_{\bx'} [ H_{d, > \evn} ( \bx , \bx')^2 ] =  \frac{\xi_{\q, \lvn}^2}{d} \sum_{S \in  \cE_{\lvn, \geq \talpha}} r(S)^2 + \frac{1}{d} \sum_{\ell = \lvn+1}^\q \xi_{\q,\ell}^2 \sum_{S \in \cE_{\ell}} r(S)^2 =  \Tr (\Hop_{d, > \evn}^2 ).
\end{equation}

\noindent
{\bf Step 3. Choosing the sequence $u = u(d)$. }

Let $\lvn'$ be chosen as in Assumption \ref{ass:kernel_loc}, i.e., such that $ \xi_{\q , \lvn' } B (\Cube^\q ; \lvn') = \Omega_\q (1)$. We have 
\begin{equation}\label{eq:eigen_scale_CNTK_2}
\xi_{\q , \lvn'} = \Theta_\q ( \q^{-\lvn'}), \qquad \sup_{\ell \geq \lvn' +1 } \xi_{\q , \ell} = O_\q ( \q^{-\lvn' -1} ).
\end{equation}

Set $u = u(d)$ to be the number of eigenvalues such that $\lambda_{\q,j} > \q \xi_{\q, \lvn '} / d =\Theta_\q ( \q^{-\lvn'+1} /d )$. From Eqs.~\eref{eq:eigen_scale_CNTK} and \eref{eq:eigen_scale_CNTK_2}, and recalling that $1 \leq r(S) \leq \q$, we deduce that $\{ \lambda_{d,j} \}_{j \in [u]}$ must contain all the eigenvalues associated to homogeneous polynomials of degree less or equal to $\ell$ and does not contain any of the eigenvalues associated to homogeneous polynomials of degree larger or equal to $\lvn'$.

We have
\[
\begin{aligned}
\Tr ( \Hop_{d,>u} ) = &~ \sum_{j > u } \lambda_{\q,j} \leq  \Tr ( \Hop_{d,>m} )  = O_\q ( 1), \\
\Tr ( \Hop_{d,>u} ) \geq &~ \frac{\xi_{\q, \lvn '} }{d} \sum_{S \in \cE_{\lvn'} } r(S) =   \xi_{\q, \lvn '} B(\Cube^\q ; \lvn ' ) =  \Omega_\q (1).
\end{aligned}
\]
Similarly, we have
\[
\begin{aligned}
\Tr ( \Hop_{d,>u}^2 ) = &~ \sum_{j > u } \lambda_{\q,j}^2  \leq \Tr ( \Hop_{d,>u} )  \cdot \sup_{j > m}  \lambda_{d,j}  =  \q d^{-1} \xi_{\q, \lvn '} \Tr ( \Hop_{d,>m} )  = O_\q ( d^{-1}  \q^{-\lvn'+1}  ), \\
\Tr ( \Hop_{d,>u}^2 ) \geq &~ \frac{\xi_{\q, \lvn '}^2}{d^2} \sum_{S \in \cE_{\lvn'} } r(S)^2 \geq  d^{-1}  \xi_{\q, \lvn '}^2 B(\Cube^\q ; \lvn ' ) =  \Omega_\q ( d^{-1}  \q^{-\lvn'} ) .
\end{aligned}
\]
Finally,
\[
\begin{aligned}
\Tr ( \Hop_{d,>u}^4 ) = &~ \sum_{j > u } \lambda_{d,j}^4  \leq  d^{-3} \q^3 \xi_{\q, \lvn '}^3 \Tr ( \Hop_{d,>m} )  = O_\q (d^{-3} \q^{-3\ell'+3}  ).
\end{aligned}
\]


\noindent
{\bf Step 4. Checking the kernel concentration property at level $\{ (n(\q),\evn (\q) )\}_{\q \ge 1}$.}

Let us check the kernel concentration property at level $(n, \evn )$ with the sequence of integers $\{ u(\q)\}_{\q \ge 1}$ defined in the previous step (Assumption 4 in \cite{mei2021generalization}):
\begin{itemize}
\item[(a)] \textit{(Hypercontractivity of finite eigenspaces)} The subspace spanned by the top eigenvectors $\{ \psi_{\q , j } \}_{j \in [u]}$ is contained in the subspace of polynomials of degree less or equal to $\lvn ' -1$ on the hypercube. The hypercontractivity of this subspace is a consequence of a classical result due to Beckner, Bonami and Gross (see Lemma \ref{lem:hypercontractivity_hypercube} in Section \ref{sec:technical_background}).

\item[(b)] \textit{(Properly decaying eigenvalues.)} From step 3 and recalling that $\lvn ' \geq 1 / \delta + 2\lvn +3$ where $\delta >0$ verifies $\q \geq d^{\delta}$,
we have
\[
\frac{\Tr ( \Hop_{d,>u} )^2 }{\Tr ( \Hop_{d,>u}^2 ) } = \Omega_\q (1) \cdot d \q^{\lvn ' -1} = \Omega_\q (1) \cdot d^2 \q^{2\lvn +1 } \geq n^{2 + \delta'},
\]
for $\delta'>0$ sufficiently small. Similarly,
\[
\frac{\Tr ( \Hop_{d,>u}^2 )^2 }{\Tr ( \Hop_{d,>u}^4 ) } = \Omega_\q (1) \cdot d \q^{\lvn ' -3} = \Omega_\q (1) \cdot d^2 \q^{2\lvn } \geq n^{2 + \delta'},
\]
for $\delta'>0$ chosen sufficiently small.
\item[(c)] \textit{(Concentration of the diagonal elements of the kernel)} From Eqs.~\eref{eq:diag_H_CNTK} and \eref{eq:diag_H2_CNTK}, the diagonal elements of the kernel are constant and the assumption is automatically verified.
\end{itemize}

\noindent
{\bf Step 5. Checking the eigenvalue condition at level $\{ (n(\q),\evn (\q) )\}_{\q \ge 1}$.}

Let us now check the eigenvalue condition at level $\{(n(\q), \evn(\q) )\}_{\q\ge 1}$ which corresponds to Assumption 5 in \cite{mei2021generalization}):

\begin{itemize}
\item[(a)] First notice that
\begin{equation}\label{eq:bound_rSsquare}
\begin{aligned}
 \sum_{S \in \cE_{\lvn+1} } r(S)^2 =&~ d \sum_{h= \lvn+1}^\q (\q+1 - h)^2 {{h-1}\choose{\lvn -1}} \geq  d \sum_{h= \lvn+1}^{\lfloor \q/2\rfloor} (\q+1 - h)^2 {{h-1}\choose{\lvn -1}}  \\
 \geq &~ \frac{dq^2}{4}\sum_{h= \lvn+1}^{\lfloor \q/2\rfloor}  {{h-1}\choose{\lvn -1}} = \frac{d \q^2}{4} {{\lfloor \q/2\rfloor}\choose{\lvn}} = \Omega_\q (1) \cdot d \q^{2 + \lvn}.
 \end{aligned}
\end{equation}
Hence
\[
\frac{\Tr ( \Hop_{d, > \evn}^2 )}{\lambda_{d, \evn +1}^2} \geq \frac{    \sum_{S \in \cE_{\lvn+1} } \xi_{d, \lvn +1}^2 r(S)^2  }{ q^2 \xi_{d, \lvn +1}^2 }  =  \Omega_\q (1) \cdot d \q^{ \lvn} \geq n^{1 + \delta},
\]
for $\delta>0$ sufficiently small. Similarly,
\[
\frac{\Tr ( \Hop_{d, > \evn}) }{\lambda_{d, \evn +1} } = \Omega_q (1) \cdot \frac{  d  }{ q \xi_{d, \lvn +1} }  =  \Omega_d (1) \cdot d \q^{ \lvn} \geq n^{1 + \delta}.
\]
\item[(b)] This is a direct consequence of Eq.~\eref{eq:bound_m_CNTK}.
\end{itemize}

We can therefore apply Theorem 4 in \cite{mei2021generalization}, which concludes the proof.
\end{proof}

\subsection{Proof of Theorem \ref{thm:CK_GP_KRR_infinite_d}}
\label{app:proof_HD_CK_GP}

Consider $\q^{\lvn - 1 + \delta} \leq n \leq  \q^{\lvn - \delta}$ for some $\delta   >0$ and a sequence of inner-product kernels $\{ h_{q} \}_{q \geq 1} $ that satisfies Assumptions \ref{ass:kernel_loc} and \ref{ass:diff_kernel} at level $\lvn$. We consider the one-layer convolutional kernel with global average pooling
\[
H_{\sGP}^{\sCK,d} ( \bx , \by ) = \frac{1}{d} \sum_{k, k' = 1}^d h_{q} \big( \< \bx_{(k)} , \by_{(k')} \> / \q \big).
\]
Again, the proof of Theorem \ref{thm:CK_GP_KRR_infinite_d} will amount to checking that the conditions of Theorem 4 in \cite{mei2021generalization} hold. 

For the sake of simplicity, we will further assume that $\xi_{\q, \lvn} > \q \xi_{\q, \lvn +1}$, which simplifies some of the computation. This condition can be removed as in Theorem \ref{thm:CK_KRR_infinite_d}, by considering the set $\cC_{\lvn , < \talpha } = \{ S \in \cC_\lvn : \gamma (S) < \talpha \}$ and showing that the extra terms corresponding to these eigenfunctions are negligible.

\begin{proof}[Proof of Theorem \ref{thm:CK_GP_KRR_infinite_d}] 

\noindent
{\bf Step 1. Diagonalization of the kernel and choosing $\evn = \evn (\q)$. }

From Proposition \ref{prop:diag_CK_AP} with $\omega  = d$, we have the following diagonalization of $H_{d,q}^d$: 
\[
H_d ( \bx , \by ) := H_{\sGP}^{\sCK,d}  ( \bx , \by )  =  \sum_{\ell = 0}^\q \sum_{S \in \cC_\ell} \xi_{\q,\ell} r(S) \cdot \psi_{S} ( \bx)  \psi_{S} ( \by) ,
\]
where we recall $\psi_{S} (\bx) = \frac{1}{\sqrt{d}} \sum_{k \in [d]} Y_{k+S} (\bx)$ and that $\cC_\ell$ is the quotient space of $\cE_\ell$ with the translation equivalence relation. It is easy to check that $|\cC_\ell | = {{\q - 1}\choose{\ell - 1}}$. 

From Assumption \ref{ass:kernel_loc}, we get the same bounds on the Gegenbauer coefficients $\xi_{\q,\ell}$ as Eq.~\eref{eq:eigen_scale_CNTK} in the proof of Theorem \ref{thm:CK_KRR_infinite_d}. Denote $\{ \lambda_{q,j} \}_{j \ge 1}$ the eigenvalues $\{  \xi_{\q,\ell} r(S)   \}_{\ell = 0, \ldots, \q ; S \in \cE_\ell}$ in nonincreasing order, and $\{ \psi_{q,j} \}_{j \ge 1}$ the reordered eigenfunctions. Set $\evn$ to be the number of eigenvalues such that $\lambda_{q,j} > \q \xi_{\q, \lvn+1} $ (recall $\q \xi_{\q, \lvn+1} = \Theta_d ( \q^{-\lvn})$). From the bounds \eref{eq:eigen_scale_CNTK} and our simplifying assumption that $\xi_{\q, \lvn} > \q \xi_{\q, \lvn+1}$, we have $\{ \lambda_{d,j} \}_{j \in [\evn]}$ that contains exactly the eigenvalues associated to homogeneous polynomials of degree less or equal to $\lvn$.

Note that we have
\begin{equation}\label{eq:bound_m_CCK}
\evn = \sum_{\ell = 0}^\lvn  | \cC_\ell |  = O_q (  \q^{\lvn - 1}) = O_q ( q^{-\delta} n ) .
\end{equation}

\noindent
{\bf Step 2. Diagonal elements of the truncated kernel. }

Define the truncated kernel $H_{d, > \evn}$ to be
\[
\begin{aligned}
H_{d, > \evn} ( \bx , \by) = &~ \sum_{j \ge \evn +1} \lambda_{d,j} \psi_{d,j} (\bx) \psi_{d,j} (\by) =  \sum_{\ell = \lvn + 1}^\q \sum_{S \in \cC_\ell} \xi_{\q,\ell} r(S) \cdot \psi_{S} ( \bx)  \psi_{S} ( \by ) .
\end{aligned}
\]
The diagonal elements of the truncated kernel are given by: for any $\bx \in \Cube^d$,
\[
H_{d, > \evn} ( \bx , \bx) = \sum_{\ell = \lvn+1}^\q \xi_{\q,\ell} B ( \Cube^\q ; \ell ) \Upsilon^{(\q)}_\ell (\bx),
\]
where 
\[
\Upsilon^{(\q)}_\ell (\bx) = \frac{1}{B (\Cube^\q ; \ell ) }\sum_{S \in \cC_{\ell}} r (S)  \psi_{S} (\bx)^2. 
\]
Notice that we have now
\[
\sum_{S \in \cC_{\ell}} r (S) = \sum_{h = \ell}^q (q + 1 - h ) {{h - 2}\choose{\ell - 2}} = {{\q}\choose{\ell}} = B (\Cube^\q ; \ell).
\]
Therefore $\E_{\bx} [ \Upsilon^{(\q)}_\ell (\bx) ] = 1$ and
\[
\Tr ( \Hop_{d, > \evn} ) = \E_{\bx} [ H_{d, > \evn} ( \bx , \bx)] = \sum_{\ell = \lvn+1}^\q \xi_{\q,\ell} B ( \Cube^\q ; \ell ) = h_{q, > \lvn} (1).
\]

From Proposition \ref{prop:concentration_diag_elements} with $\ell = \lvn$, we have
\begin{equation}\label{eq:concentration_diag_CC}
\begin{aligned}
\sup_{i \in [n]} \Big\vert H_{d, > \evn} ( \bx_i , \bx_i ) - \E_\bx [ H_{d, > \evn} ( \bx , \bx)  ] \Big\vert =&~ \Tr ( \Hop_{d, >\evn}) \cdot o_{d,\P} (1), \\
\sup_{i \in [n]}\Big\vert \E_{\bx'} [ H_{d, > \evn} ( \bx_i , \bx')^2 ] -  \E_{\bx, \bx'} [ H_{d, > \evn} ( \bx , \bx')^2 ]  \Big\vert =&~\Tr ( \Hop_{d, >\evn}^2 ) \cdot o_{d,\P} (1).
\end{aligned}
\end{equation}

\noindent
{\bf Step 3. Choosing the sequence $u = u(d)$. }

Let $\lvn'$ be chosen as in Assumption \ref{ass:kernel_loc}. Similarly to step 3 in the proof of Theorem \ref{thm:CK_KRR_infinite_d}, take $u = u(d)$ to be the number of eigenvalues such that $\lambda_{\q,j} > \q \xi_{\q, \ell '} $. We get
\[
\begin{aligned}
\Tr ( \Hop_{d,>u} ) = &~ \Theta_\q (1), \\
\Tr ( \Hop_{d,>u}^2 ) =&~ O_\q ( \q^{-\lvn'+1}  ), \\
\Tr ( \Hop_{d,>u}^2 ) = &~ \Omega_\q (  \q^{-\ell'} ), \\
\Tr ( \Hop_{d,>u}^4 ) = &~  O_\q ( \q^{-3\lvn'+3}  ).
\end{aligned}
\]

\noindent
{\bf Step 4. Checking the kernel concentration property at level $\{ (n(\q),\evn (\q) )\}_{\q \ge 1}$.}

 The kernel concentration property at level $(n, \evn )$ hold with the sequence $\{ u(\q) \}_{\q \geq 1}$ as defined in step 3. The hypercontractivity of finite eigenspaces and the properly decaying eigenvalues are obtained as in step 4 of the proof of Theorem \ref{thm:CK_KRR_infinite_d}, while the concentration of the diagonal elements of the kernel is given by Eq.~\eref{eq:concentration_diag_CC}.
 
 \noindent
{\bf Step 5. Checking the eigenvalue condition at level $\{ (n(\q),\evn (\q) )\}_{\q \ge 1}$.}

This is obtained similarly as in step 5 of the proof of Theorem \ref{thm:CK_KRR_infinite_d}.

\end{proof}

\subsection{Auxiliary results}

\begin{proposition}\label{prop:concentration_diag_elements}
Let $\lvn \geq 1$ be a fixed integer. Assume that the sequence of inner-product kernels $\{ h_\q \}_{\q \geq 1}$ satisfies Assumptions \ref{ass:kernel_loc} and \ref{ass:diff_kernel} at level $\lvn$. Define $H_{d}^{ > \lvn} : \Cube^d \times \Cube^d \to \R$ as the convolutional kernel with global average pooling 
\[
H_{d}^{ > \lvn} (\bx , \by ) = \frac{1}{d} \sum_{k, k' \in [d]}h_{\q, >\lvn} ( \< \bx_{(k)} , \by_{(k')} \> / \q),
\]
where $h_{\q, >\lvn}$ is the inner-product kernel where the $\lvn +1$ first Gegenbauer coefficients are set to $0$.

Then for $n = O_\q (\q^p )$ for some fixed $p$, letting $(\bx_i )_{i \in [n]} \sim \Unif (\Cube^d)$, we have
\begin{align}
\sup_{i \in [n]}  \Big\vert H_{d}^{ > \lvn } ( \bx_i , \bx_i ) - \E_\bx [ H_{d}^{ > \lvn } ( \bx , \bx)  ] \Big\vert =&~ \E_\bx [ H_{d}^{ > \lvn } ( \bx , \bx)  ]  \cdot o_{d,\P} (1), \label{eq:concentration_diag_kernel_CC}\\
\sup_{i \in [n]}  \Big\vert \E_{\bx ' } [ H_{d}^{ > \lvn } ( \bx_i , \bx' )^2 ] - \E_{\bx, \bx ' } [ H_{d}^{ > \lvn } ( \bx , \bx')^2  ] \Big\vert =&~ \E_{\bx, \bx ' } [ H_{d}^{ > \lvn } ( \bx , \bx')^2  ]   \cdot o_{d,\P} (1).\label{eq:concentration_diag_kernel2_CC}
\end{align}
\end{proposition}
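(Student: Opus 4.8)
The plan is to treat both $g(\bx):=H_d^{>\lvn}(\bx,\bx)$ and $g_2(\bx):=\E_{\bx'}\{H_d^{>\lvn}(\bx,\bx')^2\}$ as (approximately bounded-degree) functions on $\Cube^d$, identify the degree-$0$ Fourier component of each with its mean, bound the $L^2(\Unif(\Cube^d))$-mass of the non-constant part relative to that mean, and upgrade the $L^2$ bound to the uniform bound over the $n=O_q(q^p)$ sample points via hypercontractivity. Applying the Gegenbauer decomposition of $h_{q,>\lvn}$ on $\Cube^q$ patch-by-patch and using $x_i^2=1$ gives $g(\bx)=\frac1d\sum_{k,k'\in[d]}\sum_{\ell\ge\lvn+1}\xi_{q,\ell}\sum_{S\subseteq[q],\,|S|=\ell}Y_{(k+S)\triangle(k'+S)}(\bx)$; since $(k+S)\triangle(k'+S)=\emptyset$ forces $k=k'$ when $\ell\ge2$ and $q\le d/2$, the constant term is $\sum_{\ell\ge\lvn+1}\xi_{q,\ell}\binom q\ell=h_{q,>\lvn}(1)=\E_\bx[g]$, which is $\Theta_q(1)$ by Assumption \ref{ass:kernel_loc}. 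For $g_2$, the eigendecomposition of Proposition \ref{prop:diag_CK_AP} with $\omega=d$ writes $H_d^{>\lvn}=\sum_{\ell\ge\lvn+1}\sum_{S\in\cC_\ell}\lambda_S\psi_S\psi_S^*$ with $\lambda_S=r(S)\xi_{q,\ell}$, and orthonormality of the $\psi_S$ gives $g_2(\bx)=\sum_S\lambda_S^2\psi_S(\bx)^2$, whose constant term is $\sum_S\lambda_S^2=\Tr((\Hop_d^{>\lvn})^2)=\Theta_q(q^{-\lvn})=\E_\bx[g_2]$. So it suffices to prove $\|\bar g\|_{L^2}=o_q(1)$ and $\|\bar g_2\|_{L^2}=o_q(q^{-\lvn})$, where $\bar g=g-\E[g]$ and $\bar g_2=g_2-\E[g_2]$.

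To make these into bounded-degree polynomials, split $h_{q,>\lvn}=P+h_{q,>v}$ with $P$ carrying the Gegenbauer degrees $\lvn+1,\dots,v$ and $h_{q,>v}$ the smooth tail controlled by Assumption \ref{ass:diff_kernel}, inducing $g=g^P+g^{\rm rem}$ and $g_2=g_2^P+g_2^{\rm rem}$ with $g^P,g_2^P$ polynomials of degree $\le 2v$. The norms $\E[(\bar g^P)^2]$ and $\E[(\bar g_2^P)^2]$ reduce, by orthogonality of the $Y_T$, to counting configurations of patch indices and sets of size $\le v$ whose symmetric-difference sets $(k+S)\triangle(k'+S)$ (resp.\ their $\psi_S^2-1$ analogues) coincide; the governing combinatorial fact is that the map $(S,m)\mapsto[\,S\triangle(m+S)\,]$ (translate-class) has few preimages, which should yield $\|\bar g^P\|_{L^2}^2=o_q(1)$ and $\|\bar g_2^P\|_{L^2}^2=o_q(q^{-2\lvn})$. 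For the smooth remainder, first show that with probability $1-o_q(1)$ every one of the $O_q(nd^2)$ patch inner products satisfies $|\langle\bx_{i,(k)},\bx_{i,(k')}\rangle|/q\le t_0:=C\sqrt{\log(ndq)/q}$ (a bounded-differences estimate on sums of at most $q$ bounded terms, union-bounded), and then, on this event, Taylor-expand $h_{q,>v}$ about $0$: the bounds $|h_{q,>v}^{(j)}(0)|=O_q(q^{-(v+1-j)/2})$ and $\sup|h_{q,>v}^{(v+1)}|=O_q(1)$ make each off-diagonal term contribute $O_q(q^{-(v+1)/2}\,\mathrm{polylog}(q))$ pointwise, so after the $\frac1d\sum_{k\ne k'}$ the whole remainder is $O_q(d\,q^{-(v+1)/2}\,\mathrm{polylog}(q))$, which is $o_q(1)$ (and even $o_q(q^{-\lvn})$) because $v\ge\max(2/\delta,\lvn)$ and $d\le q^{1/\delta}$; the contribution of the very-high-degree part is bounded crudely using $\|\psi_S\|_{L^\infty}\le\sqrt d$ together with the decay of $\xi_{q,\ell}$.

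With $\|\bar g^P\|_{L^2}=o_q(1)$ established, hypercontractivity on the hypercube (Lemma \ref{lem:hypercontractivity_hypercube}) gives $\|\bar g^P\|_{L^p}\le(p-1)^{v}\|\bar g^P\|_{L^2}$ since $\deg\bar g^P\le 2v=O_q(1)$; taking $p\asymp\log q$ and union-bounding over $i\in[n]$ yields $\sup_{i\in[n]}|\bar g^P(\bx_i)|=o_{q,\P}(1)$, while $\sup_{i\in[n]}|(g-g^P)(\bx_i)-\E[g-g^P]|=o_{q,\P}(1)$ by the pointwise remainder bound; combining these with $\E[g]=\Theta_q(1)$ gives \eref{eq:concentration_diag_kernel_CC}. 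The proof of \eref{eq:concentration_diag_kernel2_CC} is the same once $\|\bar g_2^P\|_{L^2}=o_q(q^{-\lvn})$: there one decomposes $\bar g_2^P=\sum_{S}\lambda_S^2(\psi_S^2-1)$ and $\|\bar g_2^P\|_{L^2}^2$ splits into a ``diagonal'' term $\asymp\sum_S\lambda_S^4=\Tr((\Hop_d^{>\lvn})^4)=\Theta_q(q^{-3\lvn})$ (which is $o_q(q^{-2\lvn})$) and an ``off-diagonal'' term $\sum_{S\ne S'}\lambda_S^2\lambda_{S'}^2\langle\psi_S^2-1,\psi_{S'}^2-1\rangle_{L^2}$, which has to be shown to be $o_q(q^{-2\lvn})$ via the same translate-class counting.

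The hard part is precisely this combinatorics: getting control on how many index/set configurations share a symmetric-difference pattern that is sharp enough for \eref{eq:concentration_diag_kernel2_CC}. Crude bounds (e.g.\ $\langle\psi_S^2-1,\psi_{S'}^2-1\rangle_{L^2}=O_q(q/d)$) are \emph{not} sufficient when $d\asymp q$; one must show that the translate-class of $S\triangle(m+S)$ pins down $(S,m)$ up to $O_q(1)$ choices and a small, explicit list of ``cluster-reshuffling'' ambiguities (of the type $\{0,1\}\cup(m+\{0,1\})=\{0,m\}\cup(1+\{0,m\})$), separate out the degenerate configurations (heavily overlapping patches, sets $S$ with internal periodicities), and track the cyclic windowing of $S\subseteq[q]$ inside $[d]$ under $q\le d/2$ --- so that the off-diagonal mass comes out genuinely smaller than the $\Theta_q(q^{-2\lvn})$ target rather than of the same order. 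A secondary technical point is keeping the patch-inner-product tail radius $t_0$ of order $\sqrt{(\log q)/q}$ while union-bounding over all $O_q(nd^2)$ pairs, so that the smooth-remainder estimate comfortably beats the $\Theta_q(q^{-\lvn})$ scale that is relevant for \eref{eq:concentration_diag_kernel2_CC}.
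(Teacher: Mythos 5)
Your plan coincides with the paper's proof in every structural respect: split $h_{q,>\lvn}=P+h_{q,>v}$ with $P$ the Gegenbauer degrees $\lvn+1,\dots,v$; treat the polynomial part by bounding the $L^2$ norm of the centered diagonal functions through a count of coinciding symmetric differences and upgrading to a uniform bound over the $n=O_q(q^p)$ points by hypercontractivity; treat the tail $h_{q,>v}$ by a Taylor expansion at $0$ using Assumption~\ref{ass:diff_kernel} together with a union-bounded concentration estimate on the patch inner products (the paper uses Hanson--Wright, Lemma~\ref{lem:HansonWright}, where you propose bounded differences --- same effect).

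The only substantive divergence is that you leave the central counting estimate unexecuted and, I think, overestimate its difficulty. You are right that Cauchy--Schwarz on $\<\psi_S^2-1,\psi_{S'}^2-1\>_{L^2}$ only reproduces the target order, but no fine classification of the preimages of $(S,m)\mapsto S\triangle(m+S)$ is required. The paper normalizes per degree: it suffices to show $\max_i|\Upsilon^{(q)}_\ell(\bx_i)-1|=o_\P(1)$ and $\max_i|\Xi^{(d)}_\ell(\bx_i)-1|=o_\P(1)$ for each fixed $\ell\in\{\lvn+1,\dots,v\}$, i.e.\ a \emph{relative} $o(1)$ bound, not an absolute $o(q^{-\lvn})$ one. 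Writing $F_\ell=\Upsilon_\ell-1$ as a quadruple sum over $(i,S,j,S',i',j')$ with the indicator $B_1\triangle B_2=B_3\triangle B_4$ ($B_1=i+S$, etc.), one fixes $(i,S)$ and splits on whether $B_1\cap B_2=\emptyset$. If not, $j$ has $O(\ell^2)$ choices and then, for each $S'$, $(i',j')$ has $O(\ell^4)$ choices, contributing $O(1)\cdot B(\Cube^q;\ell)$; if $B_1\cap B_2=\emptyset$, $j$ has $d$ choices but $B_3\cup B_4=B_1\cup B_2$ pins $(S',i',j')$ down to $\binom{2\ell}{\ell}$ choices, contributing $O(dq)$. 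After dividing by $d^2B(\Cube^q;\ell)^2$ (resp.\ $d^2R_\ell^2$ with $R_\ell=\Omega_q(q^{\ell+1})$ for $\Xi_\ell$), this gives $\E[F_\ell^2],\E[G_\ell^2]=O_q(d^{-1}+q^{2}R_\ell^{-1})=O_q(q^{-1})$ since $\ell\ge 2$ --- comfortably enough slack that the "cluster-reshuffling" ambiguities you worry about are absorbed by the crude $O(1)$ multiplicity constants. So your outline is correct and matches the paper; what is missing is only the realization that the normalization leaves a polynomial margin making the combinatorics routine.
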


\begin{proof}[Proof of Proposition \ref{prop:concentration_diag_elements}]
\noindent
{\bf Step 1. Bounding $\sup_{i \in [n]}  \Big\vert H_{d}^{ > \lvn} ( \bx_i , \bx_i ) - \E_\bx [ H_{d}^{ > \lvn } ( \bx , \bx)  ] \Big\vert$. }

Recall that we defined 
\[
\Upsilon^{(\q)}_\ell (\bx) = \frac{1}{B (\Cube^\q ; \ell ) }\sum_{S \in \cC_{\ell}} r (S)  \psi_{S} (\bx)^2. 
\]
Following the same proof as Proposition 8 in \cite{mei2021learning}, notice that for the integer $v$ in Assumption \ref{ass:diff_kernel}, by
Lemma \ref{lem:gegenbauer_SW} stated below, we have
\[
\begin{aligned}
&~ \sup_{i \in [n]} \Big\vert H_{d}^{>\lvn} ( \bx_i , \bx_i ) - \E_\bx [ H_{d}^{>\lvn} ( \bx , \bx)  ] \Big\vert\\
 \leq &~ \sup_{i \in [n]} \Big\vert H_{d}^{>v} ( \bx_i , \bx_i ) - \E_\bx [ H_{d}^{>v} ( \bx , \bx)  ] \Big\vert + \sum_{\ell = \lvn+1}^v \xi_{\q,\ell} B ( \Cube^\q ; \ell ) \cdot  \max_{i \in [n]} \Big\vert \Upsilon^{(d)}_\ell (\bx_i) - \E_{\bx}  [\Upsilon^{(d)}_\ell (\bx)] \Big\vert  \\
 =&~ \sup_{i \in [n]} \Big\vert H_{d}^{>v} ( \bx_i , \bx_i ) - \E_\bx [ H_{d}^{>v} ( \bx , \bx)  ] \Big\vert+ \left( \sum_{\ell = \lvn+1}^v \xi_{\q,\ell} B ( \Cube^\q ; \ell ) \right)  \cdot o_{d,\P} (1).
\end{aligned}
\]
By Assumption \ref{ass:diff_kernel}, there exists $C>0$ such that for any $\gamma \in [-1,1]$,
\begin{equation}\label{eq:Taylor_rest_kernel}
\Big\vert h_{\q, > v} (\gamma) - \sum_{r = 0}^v \frac{1}{r !} h_{\q , >v}^{(r)} (0) \gamma^r \Big\vert \leq C \cdot | \gamma |^{v + 1},
\end{equation}
and $| h_{\q , >v}^{(r)} (0) | \leq C \q^{-(v+1 - r )/2}$ for $r \leq v$. Moreover, by Hanson-Wright inequality as in Lemma \ref{lem:HansonWright}, using $n = O_\q (\q^p)$ (at most polynomial in $\q$) and a union bound, we have for any $\eta >0$,
\[
\begin{aligned}
\sup_{1 \leq r \leq v+1} \sup_{k \neq l} \sup_{i \in [n]} \Big\vert \< (\bx_i )_{(k)} , (\bx_i )_{(l)} \>^r \Big\vert \cdot \q^{-k/2 - \eta} = &~ o_{\q , \P} (1), \\
\sup_{1 \leq r \leq v+1} \sup_{k \neq l}  \E \left[ \Big\vert \< \bx_{(k)} ,\bx_{(l)} \>^r \Big\vert \right] \cdot \q^{-k/2 - \eta} = &~ o_{\q , \P} (1).
\end{aligned}
\]
Therefore, injecting these bounds in Eq.~\eref{eq:Taylor_rest_kernel}, we get
\[
\begin{aligned}
 \sup_{k \neq l} \sup_{i \in [n]} \Big\vert h_{\q , > v} ( \< (\bx_i )_{(k)} , (\bx_i )_{(l)} \> /\q ) \Big\vert = &~ O_{\q , \P} (\q^{ - (v+1)/ 2 +\eta}), \\
 \sup_{k \neq l}  \E \left[ \Big\vert h_{\q , > v} ( \< \bx_{(k)} ,\bx_{(l)} \> / \q ) \Big\vert \right]  = &~ O_{\q , \P} (\q^{ - (v+1)/ 2 +\eta}).
\end{aligned}
\]
Hence, we deduce that 
\[
\begin{aligned}
 &~ \sup_{i \in [n]} \Big\vert H_{d}^{>v} ( \bx_i , \bx_i ) - \E_\bx [ H_{d}^{>v} ( \bx , \bx)  ] \Big\vert \\
 \leq &~ \frac{1}{d} \sum_{k \neq l \in [d]} \sup_{i \in [n]} \Big\vert h_{\q , > v} ( \< (\bx_i )_{(k)} , (\bx_i )_{(l)} \> /\q ) - \E_\bx [ h_{\q , > v} ( \< \bx_{(k)} ,\bx_{(l)} \> / \q )  ] \Big\vert \\
 \leq &~ d \sup_{k \neq l} \left\{ \sup_{i \in [n]} \Big\vert h_{\q , > v} ( \< (\bx_i )_{(k)} , (\bx_i )_{(l)} \> /\q ) \Big\vert  + \E \left[ \Big\vert h_{\q , > v} ( \< \bx_{(k)} ,\bx_{(l)} \> / \q ) \Big\vert \right] \right\} \\
 =&~ O_{\q , \P} (d q^{ - (v +1)/2 + \eta} ) = o_{d,\P} (1).
\end{aligned}
\]
Furthermore, recall that by Assumption \ref{ass:kernel_loc}, we have $\E [ \H_{d}^{>\ell} (\bx , \bx ) ] \geq \xi_{\q , \lvn'} B( \Cube^\q ; \lvn' ) = \Omega_\q (1)$. We get 
\[
\sup_{i \in [n]} \Big\vert H_{d}^{>v} ( \bx_i , \bx_i ) - \E_\bx [ H_{d}^{>v} ( \bx , \bx)  ] \Big\vert = \E [ \H_{d}^{>\ell} (\bx , \bx ) ]  \cdot o_{\q, \P} (1),
\]
which concludes the proof of the first bound.

\noindent
{\bf Step 2. Bounding $\sup_{i \in [n]}  \Big\vert \E_{\bx ' } [ H_{d}^{ >\lvn } ( \bx_i , \bx' )^2 ] - \E_{\bx, \bx ' } [ H_{d}^{ > \lvn } ( \bx , \bx')^2  ] \Big\vert$. }

Notice that we can write,
\[
\E_{\bx'} [ H_{d}^{ > \lvn} ( \bx , \bx')^2 ] =    \sum_{\ell = \lvn+1}^\q \xi_{\q,\ell}^2 R_{\ell}  \cdot \Xi^{(d)}_{\ell} ( \bx),
\]
where we denoted $R_{\ell} = \sum_{S \in \cC_{\ell}} r (S)^2$ and
\[
\Xi^{(d)}_\ell (\bx) = \frac{1}{R_{\ell} }\sum_{S \in \cC_{\ell}} r (S)^2  \psi_{S} (\bx)^2. 
\]
Then, by Lemma \ref{lem:gegenbauer_SW}, we get for any $u \geq \lvn$,
\[
\begin{aligned}
&~ \sup_{i \in [n]} \Big\vert \E_{\bx ' } [ H_{d}^{ > \lvn } ( \bx_i , \bx' )^2 ] - \E_{\bx, \bx ' } [ H_{d}^{ > \lvn } ( \bx , \bx')^2  ] \Big\vert\\
 \leq &~ \sup_{i \in [n]}\Big\vert \E_{\bx ' } [ H_{d}^{ > u } ( \bx_i , \bx' )^2 ] - \E_{\bx, \bx ' } [ H_{d}^{ > u } ( \bx , \bx')^2  ] \Big\vert + \sum_{\ell = \lvn+1}^u \xi_{\q,\ell}^2 R_\ell  \cdot  \max_{i \in [n]} \Big\vert \Xi^{(d)}_\ell (\bx_i) - \E_{\bx}  [\Xi^{(d)}_\ell (\bx)] \Big\vert   \\
 =&~  \sup_{i \in [n]}\Big\vert \E_{\bx ' } [ H_{d}^{ > u } ( \bx_i , \bx' )^2 ] - \E_{\bx, \bx ' } [ H_{d}^{ > u } ( \bx , \bx')^2  ] \Big\vert + \left( \sum_{\ell = \lvn+1}^u \xi_{\q,\ell}^2 R_{\ell} \right)  \cdot o_{d,\P} (1).
\end{aligned}
\]
We conclude following the same argument as in the proof of Proposition 9 in \cite{mei2021learning}.
\end{proof}

\begin{lemma}\label{lem:gegenbauer_SW}
Let $\ell \ge 2$ be an integer. Define $\Upsilon^{(d)}_\ell : \Cube^d \to \R$ and $\Xi^{(d)}_\ell : \Cube^d \to \R$ to be
\begin{align}
\Upsilon^{(d)}_\ell (\bx) = &~ \frac{1}{B (\Cube^\q ; \ell ) }\sum_{S \in \cC_{\ell}} r (S) \psi_{S} (\bx)^2, \\
\Xi^{(d)}_\ell (\bx) = &~ \frac{1}{R_\ell }\sum_{S \in \cC_{\ell}} r (S)^2 \psi_{S} (\bx)^2,
\end{align}
where $R_\ell =  \sum_{S \in \cC_{\ell}} r (S)^2$.

Let $n \leq \q^p$ for some fixed $p$. Then, for $(\bx_i)_{i \in [n]} \simiid \Unif(\Cube^d)$, we have
\begin{align}
\max_{i \in [n]} \Big\vert \Upsilon^{(d)}_\ell (\bx_i) - \E_{\bx}  [\Upsilon^{(d)}_\ell (\bx)] \Big\vert = &~ o_{d,\P}(1), \label{eq:bound_Upsid_SW} \\
\max_{i \in [n]} \Big\vert \Xi^{(d)}_\ell (\bx_i) - \E_{\bx}  [\Xi^{(d)}_\ell (\bx)] \Big\vert = &~ o_{d,\P}(1),\label{eq:bound_Xid_SW} 
\end{align}
where $ \E_{\btheta}  [\Upsilon^{(d)}_\ell (\btheta)]  = \E_{\bx}  [\Xi^{(d)}_\ell (\bx)] = 1$.
\end{lemma}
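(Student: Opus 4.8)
The plan is to combine a second–moment (variance) estimate with hypercontractivity on the hypercube and a union bound. I treat $\Upsilon^{(d)}_\ell$ in detail; $\Xi^{(d)}_\ell$ is handled identically after replacing the weight $r(S)$ by $r(S)^2$ and the normalization $B(\Cube^\q;\ell)$ by $R_\ell=\sum_{S\in\cC_\ell}r(S)^2$. Because $\q\le d/2$, the cyclic translates $\{k+S:k\in[d]\}$ are pairwise distinct for every $S\in\cC_\ell$, so $\E_\bx[\psi_S(\bx)^2]=1$ and $\E_\bx[\Upsilon^{(d)}_\ell]=B(\Cube^\q;\ell)^{-1}\sum_{S\in\cC_\ell}r(S)=1$ (this last identity was already recorded), and likewise $\E_\bx[\Xi^{(d)}_\ell]=1$. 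Expanding $\psi_S(\bx)^2=\tfrac1d\sum_{k,k'\in[d]}Y_{(k+S)\triangle(k'+S)}(\bx)$, the centered function $g:=\Upsilon^{(d)}_\ell-1$ is a mean-zero polynomial of degree at most $2\ell$ on $\Cube^d$, and the quantitative heart of the argument will be the variance bound
\[
\Var_\bx\big(\Upsilon^{(d)}_\ell\big)=\|g\|_{L^2}^2=O_\q\big(\q^{\,1-\ell}\big),\qquad\text{and similarly}\qquad \Var_\bx\big(\Xi^{(d)}_\ell\big)=O_\q\big(\q^{\,1-\ell}\big).
\]

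Granting this, Lemma \ref{lem:hypercontractivity_hypercube} (hypercontractivity of degree-$2\ell$ polynomials on the hypercube) gives $\|g\|_{L^{2k}}\le(2k-1)^{\ell}\|g\|_{L^2}$ for every integer $k\ge1$, whence by Markov's inequality and a union bound over the (at most $\q^p$) points,
\[
\P\Big(\max_{i\in[n]}|g(\bx_i)|>t\Big)\le \q^{p}\,t^{-2k}(2k-1)^{2k\ell}\big(C\q^{\,1-\ell}\big)^{k}.
\]
Choosing $t=\q^{-(\ell-1)/4}$ (which tends to $0$ as $\q\to\infty$ since $\ell\ge2$) and any fixed integer $k>4p/(\ell-1)$ makes the right-hand side tend to $0$; this proves \eref{eq:bound_Upsid_SW}, and \eref{eq:bound_Xid_SW} follows in exactly the same way.

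The main work is the variance estimate, a combinatorial counting problem on cyclic translates of patches. Set $U_{S,\delta}:=S\triangle(\delta+S)$ for $\delta\in[d]$, with translates taken cyclically in $[d]$ and $S$ through its canonical representative, so that $(k+S)\triangle(k'+S)=k+U_{S,k'-k}$ and $U_{S,0}=\emptyset$; hence $g=\tfrac1{d\,B(\Cube^\q;\ell)}\sum_{S\in\cC_\ell}r(S)\sum_{\delta\ne0}\sum_{k\in[d]}Y_{k+U_{S,\delta}}(\bx)$. Since $\{Y_A\}_{A\subseteq[d]}$ is orthonormal, and for $U_{S,\delta}\neq\emptyset$ the equation $k+U_{S,\delta}=l+U_{S',\delta'}$ has exactly one solution $k$ per $l$ whenever $U_{S,\delta}$ is a cyclic translate of $U_{S',\delta'}$ (a nonempty subset of diameter $\le d/2$ has trivial stabilizer in $[d]$, and $U_{S,\delta}\neq\emptyset$ because $|S|=\ell\ge2$ and $\gamma(S)\le\q\le d/2$ preclude $\delta+S=S$), one obtains, grouping terms by the cyclic-translate class $U$ of the symmetric difference,
\[
\|g\|_{L^2}^2=\frac{1}{d\,B(\Cube^\q;\ell)^2}\sum_{U\ne\emptyset}W(U)^2,\qquad W(U):=\sum_{\substack{S\in\cC_\ell,\ \delta\ne0\\ U_{S,\delta}\,\sim\,U}}r(S),
\]
where $U$ ranges over classes of nonempty subsets of size $\le2\ell$. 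Every pair $(S,\delta)$ with $\delta\neq0$ contributes $r(S)$ to exactly one $W(U)$, so $\sum_{U\ne\emptyset}W(U)=(d-1)\sum_{S\in\cC_\ell}r(S)=(d-1)B(\Cube^\q;\ell)=\Theta_\q(d\,\q^{\ell})$. Together with the elementary binomial identities $B(\Cube^\q;\ell)=\binom{\q}{\ell}=\Theta_\q(\q^{\ell})$ and $R_\ell=\sum_{S\in\cC_\ell}r(S)^2=2\binom{\q}{\ell+1}+\binom{\q}{\ell}=\Theta_\q(\q^{\ell+1})$, the variance bound reduces to the uniform estimate $\max_{U\ne\emptyset}W(U)=O_\ell(\q)$ (and the analogue $\max_{U\ne\emptyset}\sum_{S,\delta:\,U_{S,\delta}\sim U}r(S)^2=O_\ell(\q^2)$ for $\Xi^{(d)}_\ell$), since then $\sum_U W(U)^2\le\big(\max_U W(U)\big)\sum_U W(U)=O_\ell(\q)\cdot\Theta_\q(d\,\q^{\ell})$, which gives $\|g\|_{L^2}^2=O_{\ell,\q}(\q^{1-\ell})$ and likewise for $\Xi^{(d)}_\ell$.

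What remains — and this is the step I expect to be the main obstacle — is the combinatorial lemma that a fixed nonempty set $U$ of size at most $2\ell$ arises as $S\triangle(\delta+S)$, up to cyclic translation, for only $O_\ell(1)$ translate-classes $S\in\cC_\ell$ (each admitting only $O_\ell(1)$ values of $\delta$, since $S$ has no nonzero period); this immediately yields $W(U)\le O_\ell(1)\cdot\max_S r(S)=O_\ell(\q)$ and the corresponding bound with $r(S)^2$. One proves it by case analysis on $|U|$: if $|U|=2$ then $S\cap(\delta+S)$ has $\ell-1$ points, which forces $S$ to be an arithmetic progression with common difference $\delta$, so $U=\{a,a+\ell\delta\}$ determines $\delta$ (hence the class of $S$) up to $O_\ell(1)$ choices, and distinct such classes never yield cyclic translates of one another because the relevant gaps stay $\le d/2$; for $2<|U|\le2\ell$ one fixes a splitting of $U$ into the two ``boundary pieces'' $S\setminus(\delta+S)$ and $(\delta+S)\setminus S$ — finitely many ways — after which the hidden overlap $S\cap(\delta+S)$ and the shift $\delta$ are pinned down by finitely many affine relations. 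This parallels the combinatorics behind the corresponding concentration statements on the sphere in \cite{mei2021learning} (their Proposition~8 and the surrounding lemmas), the only genuine addition here being the bookkeeping of the translate action defining $\cC_\ell$; the hypercontractivity and union-bound steps are routine.
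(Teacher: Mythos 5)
Your proposal is correct and follows the same overall skeleton as the paper's proof --- a second-moment bound on the centered degree-$2\ell$ polynomial, upgraded to all even moments via hypercontractivity (Lemma \ref{lem:hypercontractivity_hypercube}), then Markov plus a union bound over the $n\le \q^p$ points --- but the variance computation, which is the technical core, is organized differently. The paper expands $\E[F_\ell^2]$ directly as a count of quadruples of translates $(i+S,\,j+S,\,i'+S',\,j'+S')$ with equal symmetric differences, fixes $(i,S)$, and splits on $k=|S\setminus(j-i+S)|$: for $k<\ell$ the forced overlap leaves $O_\ell(1)$ choices of $(j,i',j')$ per class $S'$, and for $k=\ell$ there are $\le d$ choices of $j$ and $\binom{2\ell}{\ell}$ splittings of the union, yielding $\E[F_\ell^2]=O(d^{-1}+\q\, B(\Cube^\q;\ell)^{-1})=O(\q^{-1})$. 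You instead work on the Fourier side, grouping the monomials $Y_{k+U_{S,\delta}}$ by the translate class of $U_{S,\delta}=S\triangle(\delta+S)$ and reducing everything to the rigidity statement that each class $U$ has only $O_\ell(1)$ preimages $(S,\delta)$; the $\ell^1$--$\ell^\infty$ bound $\sum_U W(U)^2\le(\max_U W(U))\sum_U W(U)$ then gives the marginally sharper rate $O(\q^{1-\ell})$. Your rigidity sketch is sound and can be completed as you indicate: for $|U|=2$ the arithmetic-progression characterization is correct, and in general decomposing $S$ into maximal $\delta$-chains identifies $S\setminus(\delta+S)$ and $(\delta+S)\setminus S$ with the chain endpoints, so that a matching $\pi$ between the two boundary pieces forces $\delta=\ell^{-1}\sum_j(b_{\pi(j)}-a_j)$ and hence $S$. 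Two small points to patch in a full write-up: (i) the assertion that $k+U=k'+U$ has a unique solution requires $U$ to have trivial stabilizer in $\Z_d$; a nontrivial stabilizer is a subgroup whose order divides $|U|\le 2\ell$ (since $U$ is a union of its cosets), so the count is off by at most an $O_\ell(1)$ factor, which is harmless but should be stated; (ii) the hypercontractivity constant $(2k-1)^{2k\ell}$ depends on $k$, so $k$ must be fixed as a function of $(p,\ell)$ only before taking $\q\to\infty$ --- which your choice $k>4p/(\ell-1)$ does. Net: a correct proof, slightly sharper than the paper's $O(\q^{-1})$ (both suffice), at the price of isolating a combinatorial lemma that the paper's more direct two-case quadruple count sidesteps.
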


\begin{proof}[Proof of Lemma \ref{lem:gegenbauer_SW}]
\noindent
{\bf Step 1. Bounding $\max_{i \in [n]} \Big\vert \Upsilon^{(d)}_\ell (\bx_i) - \E_{\bx}  [\Upsilon^{(d)}_\ell (\bx)] \Big\vert$. }

Define $F_{\ell} : \Cube^d \to \R$ to be
\begin{equation}
\begin{aligned}
F_{\ell} (\bx) = &~   \Upsilon^{(d)}_\ell (\bx) - \E_{\bx}  [\Upsilon^{(d)}_\ell (\bx)] 
=   \frac{1}{d B (\Cube^\q ; \ell ) }\sum_{S \in \cC_{\ell}} r(S) \sum_{ i\neq j \in [d] } Y_{i+S} (\bx) Y_{j + S} (\bx) .
\end{aligned}
\end{equation}
Notice that $F_{\ell} (\bx)$ is a degree $2\ell$ polynomial and therefore satisfies the hypercontractivity property. For any $m\ge 1$, there exists $C>0$ such that
\begin{equation}\label{eq:hyper_F_ell}
\E_{\bx} [ F_{\ell} (\bx)^{2m} ]^{1/(2m)} \leq C \cdot \E_{\bx} [ F_{\ell} (\bx)^{2} ]^{1/2}.
\end{equation}
Let us bound the right hand side. We have
\[
\E [ F_{\ell} (\bx)^{2} ] = \frac{1}{d^2 B (\Cube^\q ; \ell )^2 }\sum_{S, S' \in \cC_{\ell}} r(S) r(S') \sum_{i,j , i', j' \in [d]  } \omega(B_1,B_2,B_3,B_4),
\]
where $B_1 = i + S$, $B_2 = j + S$, $B_3 = i' + S'$ and $B_4 = j' + S'$, and we denoted
\[
\omega (B_1 , B_2 , B_3 , B_4 ) = \E_{\bx} \Big[ Y_{ B_1} ( \bx)  Y_{B_2} ( \bx)  Y_{B_3} ( \bx)  Y_{B_4} ( \bx) \Big] \mathbbm{1}_{B_1 \neq B_2}\mathbbm{1}_{B_3 \neq B_4}.
\]
Notice that $\omega (B_1 , B_2 , B_3 , B_4 )  = 1$ if $B_1 \Delta B_2 = B_3 \Delta B_4$ (the symmetric difference) and $0$ otherwise. In other words, every elements in $B_1 \cup B_2\cup B_3 \cup B_4$ appears exactly in 2 or 4 of these sets.

Let us fix $i \in [d]$ and $S\in \cC_\ell$, and bound
\begin{equation}\label{eq:contribution_fixed_S}
\sum_{S' \in \cC_{\q,\ell}}  r(S')  \sum_{j,i'\neq j' \in [d] } \omega(B_1,B_2,B_3,B_4).
\end{equation}
Denote $|B_1 \Delta B_2| = 2k$ with $1 \leq k \leq \ell$. In order for $\omega(B_1,B_2,B_3,B_4) = 1$, $B_3$ must contain exactly  $k$ points in $B_1 \Delta B_2$ while $B_4$ must contain the remaining $k$ points. 
\begin{itemize}
\item \textbf{Case $k < \ell$.} There are at most $\ell^2$ ways of choosing $j$ such that $B_1 \cap B_2 \neq \emptyset$. Fixing $j$ (i.e., $B_1$ and $B_2$) and $S'$, then there are $2k\ell$ ways of choosing $i'$ and $2 k \ell$ ways of choosing $j'$ such that $B_3 \cap (B_1 \Delta B_2 ) \neq \emptyset$ and $B_4 \cap (B_1 \Delta B_2 ) \neq \emptyset$. Hence the contribution of these terms in Eq.~\eref{eq:contribution_fixed_S} is upper bounded by
\begin{equation}\label{eq:combi_1}
 \sum_{S' \in \cC_\ell} r(S')  \sum_{k = 1}^{\ell-1} \ell^2 \cdot (2 k \ell )^2  \leq 4 \ell^7  \sum_{S' \in \cC_\ell} r(S')  = 4 \ell^7  B (\Cube^\q ; \ell ).
\end{equation}

\item  \textbf{Case $k = \ell$.} There are at most $d$ ways of choosing $j$. Furthermore, for $j$ fixed, there are at most ${{2\ell}\choose{\ell}}$ ways of choosing $B_3$ and $B_4$ such that $B_3 \cup B_4 = B_1 \cup B_2$ (note that $B_1 \cap B_2 = \emptyset$ and therefore $B_3 \cap B_4 = \emptyset$). Hence the contribution of these terms in Eq.~\eref{eq:contribution_fixed_S} is upper bounded by
\begin{equation}\label{eq:combi_2}
\sum_{ S ' \in \cC_\ell , i' , j ' \in [d]} r(S') \cdot d \cdot \mathbbm{1}_{B_3 \cup B_4 = B_1 \cup B_2} \leq d \q {{2\ell}\choose{\ell}},
\end{equation}
where we used that $r(S' ) \leq \q$.
\end{itemize}

Combining Eqs.~\eref{eq:combi_1} and \eref{eq:combi_2} and using there are $dB (\Cube^\q ; \ell )$ choices for $i$ and $S_1$, we get 
\[
\begin{aligned}
\E [ F_{\ell} (\bx)^{2} ] \leq&~ \frac{1}{d^2 B (\Cube^\q ; \ell )^2 } \sum_{i \in [d], S \in \cC_\ell} r(S)  \Big[ 4 \ell^7  B (\Cube^\q ; \ell ) + d \q {{2\ell}\choose{\ell}} \Big] \\
=&~ O_\q (1) \cdot [ d^{-1} + \q B (\Cube^\q ; \ell )^{-1} ] = O_\q (\q^{-1}),
\end{aligned}
\]
where we used that $\ell \geq 2$ and $B (\Cube^\q ; \ell )= \Omega_\q ( \q^{\ell} )$.

Using Eq.~\eref{eq:hyper_F_ell}, we deduce 
\[
\begin{aligned}
\E \Big[ \max_{i \in [n]}  |  F_{\ell} (\bx_i) | \Big]   \leq&~ \E\Big[ \max_{i \in [n]} F_{\ell} (\bx_i)^{2m} \Big]^{1/(2m)} \leq   n^{1/(2m)} \E\Big[  F_{\ell} (\bx_i)^{2m} \Big]^{1/(2m)}\\
\leq &~  C n^{1/(2m)}  \E [ F_{\ell} (\bx)^{2} ]^{1/2} =n^{1/m} \cdot O_{\q} (\q^{-1/2}).
\end{aligned}
\]
Using Markov's inequality and taking $m$ sufficiently small yield Eq.~\eref{eq:bound_Upsid_SW}.

\noindent
{\bf Step 2. Bounding $\max_{i \in [n]} \Big\vert \Xi^{(d)}_\ell (\bx_i) - \E_{\bx}  [\Xi^{(d)}_\ell (\bx)] \Big\vert$. }

The second bound \eref{eq:bound_Xid_SW} is obtained very similarly. Define $G_{\ell}: \Cube^d \to \R$ to be
\begin{align}
G_{\ell} (\bx) =&~ \Xi^{(d)}_\ell (\bx) - \E_{\bx}  [\Xi^{(d)}_\ell (\bx)] 
=  \frac{1}{d R_\ell }\sum_{S \in \cC_{\ell}}  r(S)^2\sum_{ i \neq j \in [d] } Y_{i+S}  (\bx) Y_{j+S} (\bx).
\end{align}
Then, we have
\[
\E [ G_{\ell} (\bx)^{2} ] = \frac{1}{d^2 R_\ell^2 }\sum_{S, S' \in \cC_{\ell}} r(S)^2 r(S')^2 \sum_{i,i',j,j' \in [d] } \omega(B_1,B_2,B_3,B_4).
\]
Further notice that following the same computation as in Eq.~\eref{eq:bound_rSsquare}, we get 
\[
\begin{aligned}
 R_\ell = \sum_{S \in \cC_{\ell} } r(S)^2 =& \sum_{h = \ell }^\q (\q+1 - h)^2 {{h-2}\choose{\ell -2}}= \Omega_\q (1) \cdot \q^{1 + \ell}.
 \end{aligned}
\]
Hence, the same computation as for $F_{\ell}$ in step 1 yields
\[
\begin{aligned}
\E [ G_{\ell} (\bx)^{2} ] \leq&~ \frac{1}{d^2 R_\ell^2 } \sum_{i \in [d], S \in \cC_\ell} r(S)^2  \Big[ 4 \ell^7  R_\ell + d \q^2 {{2\ell}\choose{\ell}} \Big] \\
=&~ O_\q (1) \cdot [ d^{-1} + \q^2 R_\ell^{-1} ] = O_\q (\q^{-1}),
\end{aligned}
\]
where we used that $\ell \geq 2$. We deduce Eq.~\eref{eq:bound_Xid_SW} similarly to step 1.
\end{proof}

\begin{lemma}[Hanson-Wright inequality]\label{lem:HansonWright}
There exists a universal constant $c > 0$, such that for
any $t > 0$ and $\q^{1/\delta} \geq d \geq \q \in \naturals$ for some $\delta >0$, when
$\bx \in \Unif(\Cube^d)$, we have
\[
\P \left( \sup_{k\neq l \in [d]} \vert \< \bx_{(k)} , \bx_{(l)} \> \vert / \q > t \right) \leq 2 \q^{2/\delta} \exp \{ - c \q \cdot \min (t^2 , t ) \},
\]
where we recall that $\bx_{(k)} = ( x_k , \ldots , x_{k+q - 1} )$.
\end{lemma}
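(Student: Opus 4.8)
The plan is to reduce the statement to the standard Hanson--Wright inequality for a Rademacher chaos of order two, applied to each pair $(k,l)$ separately, followed by a union bound over the at most $d^2\le \q^{2/\delta}$ pairs.

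First I would fix $k\neq l\in[d]$ and rewrite the patch inner product as a quadratic form in $\bx$. Writing $s=l-k \bmod d$, which is nonzero since $k\neq l$, we have $\langle \bx_{(k)},\bx_{(l)}\rangle=\sum_{i=0}^{q-1}x_{k+i}x_{l+i}=\bx^\sT N\bx$, where $N=N^{(k,l)}\in\{0,1\}^{d\times d}$ has a $1$ in position $(k+i \bmod d,\, l+i \bmod d)$ for each $i\in\{0,\dots,q-1\}$ and zeros elsewhere. Two distinct values of $i$ produce distinct index pairs because the hypothesis $q\le d/2$ rules out the wrap-around coincidence $i-j\equiv\pm s\ [d]$, so $N$ has exactly $q$ ones; moreover $N$ has at most one nonzero entry per row and per column, i.e.\ it is a sub-permutation matrix. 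Setting $A=\tfrac12(N+N^\sT)$, which is symmetric with $\bx^\sT A\bx=\bx^\sT N\bx$, I would record the three quantities that will be needed: $\tr(A)=\tr(N)=0$ (no index is fixed by the nonzero shift $s$), $\|A\|_F^2\le q$ (since $\|A\|_F^2=\tfrac14(\|N\|_F^2+2\langle N,N^\sT\rangle_F+\|N^\sT\|_F^2)\le\tfrac14(q+2q+q)$), and $\|A\|_{\op}\le\|N\|_{\op}\le1$.

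Next I would invoke the Hanson--Wright inequality: for $\bx\sim\Unif(\Cube^d)$ (a vector of independent mean-zero sub-Gaussian coordinates of bounded sub-Gaussian norm) and any symmetric matrix $A$, there is a universal constant $c>0$ such that $\P(|\bx^\sT A\bx-\tr A|>u)\le 2\exp\!\big(-c\min(u^2/\|A\|_F^2,\ u/\|A\|_{\op})\big)$. Plugging in $\tr A=0$, $\|A\|_F^2\le q$, $\|A\|_{\op}\le1$, and choosing $u=tq$, this yields, for each fixed pair, $\P\big(|\langle\bx_{(k)},\bx_{(l)}\rangle|/q>t\big)\le 2\exp\!\big(-c\,q\min(t^2,t)\big)$. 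Finally, a union bound over the $d(d-1)<d^2\le\q^{2/\delta}$ ordered pairs $(k,l)$ with $k\neq l$ (using $d\le\q^{1/\delta}$) gives the claimed bound with the same universal constant $c$.

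The content is essentially bookkeeping; the only two points requiring care are (i) verifying that the quadratic form has zero diagonal and exactly $q$ off-diagonal ones — this is precisely where the standing assumption $q\le d/2$ enters, to exclude wrap-around collisions between terms — and (ii) citing a form of Hanson--Wright valid for Rademacher (rather than Gaussian) entries, e.g.\ the Rudelson--Vershynin version, which is the one external ingredient. No genuinely new estimate is needed.
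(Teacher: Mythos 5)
Your proposal is correct and follows essentially the same route as the paper: represent $\< \bx_{(k)} , \bx_{(l)} \>$ as a quadratic form $\bx^\sT \bA \bx$ with $\Tr(\bA)=0$, $\| \bA \|_F^2 \le q$, $\| \bA \|_{\op} \le 1$, apply the Rudelson--Vershynin Hanson--Wright inequality for independent sub-Gaussian (here Rademacher) coordinates, and union bound over the at most $d^2 \le \q^{2/\delta}$ pairs. The only cosmetic difference is that you symmetrize the matrix first, while the paper applies the inequality directly to the non-symmetric shift matrix, which that version of Hanson--Wright also permits.
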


\begin{proof}[Proof of Lemma \ref{lem:HansonWright}]
For any $k \neq l$, denote $\bA =(a_{ij})_{i,j\in[d]}$ the matrix with $a_{(k+i),(l+i)} = 1$ for $i = 0 , \ldots , q - 1$ and $a_{ij} = 0$ otherwise, such that $\< \bx , \bA \bx \> = \< \bx_{(k)} , \bx_{(l)} \> $. Note that we have $\| \bA \|_F = \sqrt{\q}$, $\| \bA \|_\op \leq 1$ and $\E [ \< \bx , \bA \bx \> ] = 0$. By Hanson-Wright inequality of vectors with independent sub-Gaussian entries (for example, see Theorem 1.1 in \cite{rudelson2013hanson}), we have
\[
\P \left(  \vert \< \bx , \bA \bx \> \vert / \q > t \right) \leq 2 \exp \{ - c \q \cdot \min (t^2 , t ) \}.
\]
Taking the union bound over $k \neq l$ concludes the proof.
\end{proof}

\section{Technical background of function spaces on the hypercube}\label{sec:technical_background}

Fourier analysis on the hypercube is a well studied subject \cite{o2014analysis}. The purpose of this section is to introduce
some notations and objects that are useful in the statement and proofs in the main text. 

\subsection{Fourier basis}

Denote $\Cube^d = \{ -1 , +1 \}^d $ the hypercube in $d$ dimension, and $\tau_{d}$ to the uniform probability measure on $\Cube^d$. All the functions will be assumed to be elements of $L^2 (\Cube^d, \tau_d)$ (which contains all the bounded functions $f : \Cube^d \to \R$), with scalar product and norm denoted as $\< \cdot , \cdot \>_{L^2}$ and $\| \cdot \|_{L^2}$:
\[
\< f , g \>_{L^2} \equiv \int_{\Cube^d} f(\bx) g(\bx) \tau_d ( \de \bx) = \frac{1}{2^n} \sum_{\bx \in \Cube^d} f(\bx) g(\bx).
\]
Notice that $L^2 ( \Cube^d, \tau_d)$ is a $2^n$ dimensional linear space. By analogy with the spherical case we decompose $L^2 ( \Cube^d, \tau_d)$ as a direct sum of $d+1$ linear spaces obtained from polynomials of degree $\ell = 0, \ldots , d$
\[
L^2 ( \Cube^d, \tau_d) = \bigoplus_{\ell = 0}^d V_{d,\ell}.
\]

For each $\ell \in \{ 0 , \ldots , d \}$, consider the Fourier basis $\{ Y_{\ell, S}^{(d)} \}_{S \subseteq [d], |S | =\ell}$ of degree $\ell$, where for a set $S \subseteq [d]$, the basis is given by
\[
Y_{\ell, S}^{(d)} (\bx) \equiv x^S \equiv \prod_{i \in S} x_i.
\]
It is easy to verify that (notice that $x_i^k = x_i$ if $k$ is odd and  $x_i^k = 1$ if $k$ is even)
\[
\< Y_{\ell, S}^{(d)} , Y_{k, S'}^{(d)} \>_{L^2} = \E[ x^{S} \times x^{S'} ] = \delta_{\ell,k} \delta_{S,S'}. 
\]
Hence $\{ Y_{\ell, S}^{(d)} \}_{S \subseteq [d], |S | =\ell}$ form an orthonormal  basis of $V_{d,\ell}$ and 
\[
\dim (V_{d,\ell} ) = B(\Cube^d;\ell) = {{d}\choose{\ell}}.
\]
We will omit the superscript $(d)$ in $Y_{\ell, S}^{(d)}$ when clear from the context and write $Y_{S} := Y_{\ell,S}^{(d)}$.

We denote by $\proj_\ell$  the orthogonal projections to $V_{d,\ell}$ in $L^2(\Cube^d)$. This can be written in terms of the Fourier basis as
\begin{align}
\proj_\ell f(\bx) \equiv& \sum_{S \subseteq [d], |S| = \ell} \< f, Y_S\>_{L^2} Y_S (\bx). 
\end{align}
We also define
$\proj_{\le \ell}\equiv \sum_{k =0}^\ell \proj_k$, $\proj_{>\ell} \equiv \id -\proj_{\le \ell} = \sum_{k =\ell+1}^\infty \proj_k$,
and $\proj_{<\ell}\equiv \proj_{\le \ell-1}$, $\proj_{\ge \ell}\equiv \proj_{>\ell-1}$.

\subsection{Hypercubic Gegenbauer}

We consider the following family of polynomials $\{ Q^{(d)}_\ell \}_{\ell = 0 , \ldots, d}$ that we will call \textit{hypercubic Gegenbauer}, or G\textit{egenbauer on the $d$-dimensional hypercube}, defined as
\begin{equation}\label{eq:gegenbauer_decompo_fourier}
Q^{(d)}_\ell (  \< \bx , \by \> ) = \frac{1}{B(\Cube^d; \ell)} \sum_{S \subseteq [d], |S| = \ell} Y_{\ell,S}^{(d)} ( \bx ) Y_{\ell,S}^{(d)} ( \by ). 
\end{equation}
Notice that the right hand side only depends on $ \< \bx , \by \>$ and therefore these polynomials are well defined. In particular,
\[
\< Q_\ell^{(d)} ( \< \ones , \cdot \> ) , Q_k^{(d)} ( \< \ones , \cdot \> ) \>_{L^2} = \frac{1}{B(\Cube^d;k)} \delta_{\ell k}.
\]
Hence $\{ Q^{(d)}_\ell \}_{\ell = 0 , \ldots, d}$ form an orthogonal basis of $L^2 ( \{ -d , -d+2 , \ldots , d-2 ,d\}, \Tilde \tau_d^1 )$ where $\Tilde \tau_d^1$ is the distribution of $\< \ones , \bx \>$ when $\bx \sim \tau_d$, i.e., $\Tilde \tau_d^1 \sim 2 \text{Bin}(d, 1/2) - d/2$.

It is easy to check more generally that
\[
\< Q_\ell^{(d)} ( \< \bx , \cdot \> ) , Q_k^{(d)} ( \< \by , \cdot \> ) \>_{L^2} =   \frac{1}{B(\Cube^d;k)} Q_{k} ( \< \bx , \by \> )\delta_{\ell k} .
\]
Furthermore, Eq.~\eref{eq:gegenbauer_decompo_fourier} imply that ---up to a constant--- $Q_k^{(d)}(\< \bx, \by\> )$ is a representation of the projector onto 
the subspace of degree-$k$ polynomials
\begin{align}
(\proj_k f)(\bx) = B(\Cube^d; k) \int_{\Cube^d} \, Q_k^{(d)}(\< \bx, \by\> )\,  f(\by)\, \tau_d(\de\by)\, .\label{eq:ProjectorGegenbauer}
\end{align}

For a function $\sigma ( \cdot / \sqrt{d} ) \in L^2 ( \{ -d , -d+2 , \ldots , d-2 ,d\}, \Tilde \tau_d^1 )$, denote its hypercubic Gegenbauer coefficients $\xi_{d,k} ( \sigma)$ to be
\begin{equation}\label{eqn:technical_lambda_sigma}
\xi_{d,k} (\sigma ) = \int_{\{-d, -d+2 , \ldots , d-2 , d\} } \sigma(x / \sqrt{d} ) Q_k^{(d)} ( x) \Tilde \tau_d^1 (\de x).
\end{equation}

To  any inner-product kernel $H_d(\bx_1, \bx_2) = h_d(\< \bx_1, \bx_2\> / d)$,
with $h_d(\, \cdot \, / \sqrt{d} ) \in L^2 ( \{ -d , -d+2 , \ldots , d-2 ,d\}, \Tilde \tau_d^1 )$,
we can associate a self adjoint operator $\cuH_d:L^2(\Cube^d)\to L^2(\Cube^d)$
via
\begin{align}
\cuH_df(\bx) \equiv \int_{\Cube_d} h_d(\<\bx,\bx_1\>/d)\, f(\bx_1) \, \tau_d(\de \bx_1)\, .
\end{align}
By permutation invariance, the space $V_{k}$ of homogeneous polynomials of degree $k$ is an eigenspace of
$\cuH_d$, and we will denote the corresponding eigenvalue by $\xi_{d,k}(h_d)$. In other words
$\cuH_df(\bx) \equiv \sum_{k=0}^{q} \xi_{d,k}(h_d) \proj_{k}f$.   The eigenvalues can be computed via
\begin{align}
  \xi_{d, k}(h_d) = \int_{\{-d, -d+2 , \ldots , d-2 , d\}} h_d\big(x/d\big) Q_k^{(d)}( x) \Tilde \tau^1_{d}(\de x)\, .
\end{align}

\subsection{Hermite polynomials}
\label{sec:Hermite}

The Hermite polynomials $\{\bbHe_k\}_{k\ge 0}$ form an orthogonal basis of $L^2(\reals,\gamma)$, where $\gamma(\de x) = e^{-x^2/2}\de x/\sqrt{2\pi}$ 
is the standard Gaussian measure, and $\bbHe_k$ has degree $k$. We will follow the classical normalization (here and below, expectation is with respect to
$G\sim\normal(0,1)$):
\begin{align}
\E\big\{\bbHe_j(G) \,\bbHe_k(G)\big\} = k!\, \delta_{jk}\, .
\end{align}
As a consequence, for any function $g\in L^2(\reals,\gamma)$, we have the decomposition
\begin{align}\label{eqn:sigma_He_decomposition}
g(x) = \sum_{k=0}^{\infty}\frac{\mu_k(g)}{k!}\, \bbHe_k(x)\, ,\;\;\;\;\;\; \mu_k(g) \equiv \E\big\{g(G)\, \bbHe_k(G)\}\, .
\end{align}

The Hermite polynomials can be obtained as high-dimensional limits of the Gegenbauer polynomials introduced in the previous section. Indeed, the Gegenbauer polynomials (up to a $\sqrt d$ scaling in domain) are constructed by Gram-Schmidt orthogonalization of the monomials $\{x^k\}_{k\ge 0}$ with respect to the measure 
$\tilde\tau^1_{d}$, while Hermite polynomial are obtained by Gram-Schmidt orthogonalization with respect to $\gamma$. Since $\tilde\tau^1_{d}\Rightarrow \gamma$
(here $\Rightarrow$ denotes weak convergence),
it is immediate to show that, for any fixed integer $k$, 
\begin{align}
\lim_{d \to \infty} \Coeff\{ Q_k^{(d)}( \sqrt d x) \, B(\Cube^d; k)^{1/2} \} = \Coeff\left\{ \frac{1}{(k!)^{1/2}}\,\bbHe_k(x) \right\}\, .\label{eq:Gegen-to-Hermite}
\end{align}
Here and below, for $P$ a polynomial, $\Coeff\{ P(x) \}$ is  the vector of the coefficients of $P$. As a consequence,
for any fixed integer $k$, we have
\begin{align}\label{eqn:mu_lambda_relationship}
\mu_k(\sigma) = \lim_{d \to \infty} \xi_{d,k}(\sigma) (B(\Cube^d; k)k!)^{1/2}, 
\end{align}
where $\mu_k(\sigma)$ and $\xi_{d,k}(\sigma)$ are given in Eq. (\ref{eqn:sigma_He_decomposition}) and (\ref{eqn:technical_lambda_sigma}).

\subsection{Hypercontractivity of uniform distributions on the hypercube}
\label{app:hypercontractivity}

By Holder's inequality, we have $\| f \|_{L^p} \le \| f \|_{L^q}$ for any $f$ and any $p \le q$. The reverse inequality does not hold in general, even up to a constant. However, for some measures, the reverse inequality will hold for some sufficiently nice functions. These measures satisfy the celebrated hypercontractivity properties \cite{gross1975logarithmic, bonami1970etude, beckner1975inequalities, beckner1992sobolev}. 

\begin{lemma}[Hypercube hypercontractivity \cite{beckner1975inequalities}]\label{lem:hypercontractivity_hypercube} For any $\ell = \{ 0 , \ldots , d \}$ and $f_d \in L^2 ( \Cube^d)$ to be a degree $\ell$ polynomial, then for any integer $q\ge 2$, we have
\[
\| f_d \|_{L^q ( \Cube^d)}^2 \leq (q-1)^\ell \cdot \| f_d \|^2_{L^2 (\Cube^d)}.
\]
\end{lemma}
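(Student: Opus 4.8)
The plan is to deduce the bound from the hypercontractivity of the Bonami--Beckner noise operator together with a rescaling trick. For $\rho \in [0,1]$ define $T_\rho : L^2(\Cube^d) \to L^2(\Cube^d)$ by $T_\rho Y_S = \rho^{|S|} Y_S$ for every $S \subseteq [d]$. The key claim is that for every integer $q \ge 2$ and every $\rho \le (q-1)^{-1/2}$ one has $\| T_\rho g \|_{L^q(\Cube^d)} \le \| g \|_{L^2(\Cube^d)}$ for all $g \in L^2(\Cube^d)$. Granting this, write $f_d = \sum_{S : |S| \le \ell} c_S Y_S$, fix $\rho = (q-1)^{-1/2}$, and set $g = \sum_{S : |S| \le \ell} \rho^{-|S|} c_S Y_S$, so that $T_\rho g = f_d$. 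Since $\rho \le 1$ and $|S| \le \ell$, orthonormality of the Fourier basis gives $\| g \|_{L^2}^2 = \sum_{|S| \le \ell} \rho^{-2|S|} c_S^2 \le \rho^{-2\ell} \sum_{|S| \le \ell} c_S^2 = \rho^{-2\ell} \| f_d \|_{L^2}^2$. Hence $\| f_d \|_{L^q} = \| T_\rho g \|_{L^q} \le \| g \|_{L^2} \le \rho^{-\ell} \| f_d \|_{L^2} = (q-1)^{\ell/2} \| f_d \|_{L^2}$, and squaring gives the statement. It remains to establish the hypercontractive inequality for $T_\rho$.

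For that I would follow the classical two-step route. The first and main step is the one-dimensional two-point inequality: for all $a, b \in \R$ and $\rho \le (q-1)^{-1/2}$,
\[
\tfrac12 | a + \rho b |^q + \tfrac12 | a - \rho b |^q \le (a^2 + b^2)^{q/2} .
\]
By homogeneity this reduces to $a = 1$, $b = t \ge 0$, and then it suffices to prove $\tfrac12(1+s)^q + \tfrac12(1-s)^q \le (1 + (q-1)s^2)^{q/2}$ for $s \in [-1,1]$ (apply it with $s = \rho t$ for $t \le (q-1)^{1/2}$, using $(q-1)\rho^2 \le 1$; the complementary range $t > (q-1)^{1/2}$ is immediate since both sides are comparable to $t^q$ there and $\rho \le 1$). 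The displayed one-variable lemma is proved by comparing Taylor/binomial expansions in $s$ termwise, or by a short differentiation argument. This calculus fact is the crux of the whole proof: it is exactly the Bonami--Beckner two-point inequality, and its sharp constant $(q-1)^{-1/2}$ is precisely what yields the sharp exponent $(q-1)^\ell$.

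The second step is tensorization. The operator $T_\rho$ on $\Cube^d = \Cube^1 \times \cdots \times \Cube^1$ is the $d$-fold tensor power of the one-dimensional operator $T_\rho^{(1)}$ on $L^2(\Cube^1)$, and the two-point inequality says precisely that $T_\rho^{(1)} : L^2(\Cube^1) \to L^q(\Cube^1)$ is a contraction. I would then induct on $d$: writing $\bx = (\bx', x_d)$ with $\bx' \in \Cube^{d-1}$, apply the $(d-1)$-dimensional bound in the variables $\bx'$ to the slices, then the one-dimensional bound in $x_d$, and interchange the $L^q_{\bx'}$ and $L^2_{x_d}$ norms via Minkowski's integral inequality, which is valid because $q \ge 2$. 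This produces $\| T_\rho g \|_{L^q(\Cube^d)} \le \| g \|_{L^2(\Cube^d)}$ and closes the argument. (One could instead invoke Gross's logarithmic Sobolev inequality on $\Cube^1$, tensorize it, and apply Gross's equivalence between log-Sobolev and hypercontractivity, but the two-point route is the most self-contained.)
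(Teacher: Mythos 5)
The paper does not actually prove this lemma --- it is quoted verbatim from the classical Bonami--Beckner hypercontractivity theorem with a citation to \cite{beckner1975inequalities} --- so your proposal is being compared against the standard literature proof rather than an in-paper argument. Your route is exactly that standard proof: the noise operator $T_\rho$, the $(2,q)$ two-point inequality at $\rho=(q-1)^{-1/2}$, tensorization via Minkowski's integral inequality (correctly invoked, since $q\ge 2$), and the degree-$\ell$ rescaling $f_d=T_\rho g$ with $\|g\|_{L^2}\le \rho^{-\ell}\|f_d\|_{L^2}$. All of those reductions are correct.

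The one genuine gap is in your treatment of the two-point inequality itself, which is the crux you acknowledge. You prove (or propose to prove) the one-variable inequality $\tfrac12(1+s)^q+\tfrac12(1-s)^q\le (1+(q-1)s^2)^{q/2}$ only for $s\in[-1,1]$ and dismiss the complementary range $t>(q-1)^{1/2}$ as ``immediate since both sides are comparable to $t^q$.'' That is not immediate: the obvious bound $\tfrac12|1+\rho t|^q+\tfrac12|1-\rho t|^q\le (1+\rho t)^q$ already fails against the right-hand side for small $q$ (e.g.\ $q=3$, $\rho=2^{-1/2}$, $t=1.5$ gives $(1+\rho t)^3\approx 8.75$ versus $(1+t^2)^{3/2}\approx 5.86$, while the true left-hand side is $\approx 4.4$), so the cancellation between the two terms is essential in that range and a real argument is needed --- in fact the same one-variable inequality must be established for all real $s$, not just $|s|\le 1$. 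Relatedly, the ``termwise binomial comparison'' only works cleanly for even integer $q$, where $(1+(q-1)s^2)^{q/2}$ is a polynomial with nonnegative coefficients dominating $\binom{q}{2k}$ termwise; for odd $q\ge 3$ the generalized binomial series has alternating-sign tail coefficients and converges only for $(q-1)s^2<1$, so one must instead use the calculus/convexity argument (or Gross's log-Sobolev route) that you mention only in passing. The lemma is of course true and these are standard, fixable points, but as written the crux step is not established.
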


\end{document}